\declaretheorem[
name=Theorem,
numberwithin=section,
]{theorem}
\declaretheorem[%
name=Lemma,
sibling=theorem
]{lemma}
\declaretheorem[%
name=Corollary,
sibling=theorem
]{corollary}
\theoremstyle{definition}
\newtheorem{definition}[theorem]{Definition}
\theoremstyle{remark}
\newcommand{\E}[1]{\mathbb{E}(#1)}
\newcommand{\OurMethod}{SCS\ }
\newcommand{\Real}{\mathbb{R}}
\newcommand{\dd}{\mathrm{d}}
\newcommand{\CV}{\mathcal{V}}
\newcommand{\CN}{\mathcal{N}}
\newcommand{\bX}{\mathbf{X}}
\title{A Theoretical Analysis of Detecting Large Model-Generated Time Series}
\author{
    Junji Hou,
    Junzhou Zhao\thanks{Corresponding Author.},
    Shuo Zhang,
    Pinghui Wang
}
\begin{document}

\maketitle

\begin{abstract}
Motivated by the increasing risks of data misuse and fabrication, we investigate
the problem of identifying synthetic time series generated by Time-Series Large
Models (TSLMs) in this work.
While there is extensive research on detecting model generated text, we find
that these existing methods are not applicable to time series data due to the
fundamental modality difference, as time series usually have lower information
density and smoother probability distributions than text data, which limit the
discriminative power of token-based detectors.
To address this issue, we examine the subtle distributional differences between
real and model-generated time series and propose \emph{contraction
  hypothesis}, which states that model-generated time series, unlike real ones,
exhibit progressively decreasing uncertainty under recursive forecasting.
We formally prove this hypothesis under theoretical assumptions on model
behavior and time series structure.
Model-generated time series exhibit progressively concentrated distributions
under recursive forecasting, leading to uncertainty contraction.
We provide empirical validation of the hypothesis across diverse datasets.
Building on this insight, we introduce the \emph{Uncertainty Contraction
  Estimator (UCE)}, a white-box detector that aggregates uncertainty metrics
over successive prefixes to identify TSLM‑generated time series.
Extensive experiments on $32$ datasets show that UCE consistently outperforms
state-of-the-art baselines, offering a reliable and generalizable solution for
detecting model-generated time series.

\end{abstract}

\section{Introduction}

Recent advances in time series forecasting have given rise to Time Series Large
Models (TSLMs), which are pre-trained on massive multi-domain time series
datasets with billions of parameters~\cite{Chronos,timer,TimeMoE}.
The vast training data and enormous parameter scale enable TSLMs to achieve
remarkable long-term zero-shot forecasting on entirely new datasets or domains
without any labeled examples or task-specific fine-tuning, as evidenced by
recent scaling law analyses~\cite{scalinglaw}.
Leveraging these sophisticated forecasting capabilities, TSLMs have demonstrated
strong performance in domains such as finance, Internet of Things (IoT), and
climate science.

The powerful capability of TSLMs raises significant concerns about potential
data fabrication or misuse.
Unlike classical approaches (e.g., ARIMA, LSTM), which degrade in performance
outside their training domains, TSLMs can generate coherent long sequences even
for unfamiliar domains.
If maliciously exploited, this capability could enable the systematic
fabrication of time series and pose severe threats in scenarios where data
authenticity is critical.

\begin{itemize}
\item \textbf{Finance:} TSLMs can synthesize long transaction histories that
  mirror real trading patterns, facilitating fraudulent activities such as
  inflated valuations or hidden manipulations, such as those seen in the 2012 LIBOR scandal~\cite{gupta2024defending, rose2013barclays}.

\item \textbf{Scientific Research:} Highly realistic counterfeit measurement
  time series (e.g., signal traces or biological data) can distort experimental
  outcomes, similar to Sch\"on and Wakefield data
  forgeries~\cite{Bellfabric,godlee2011wakefield}.

\item \textbf{Environmental Governance:} Attackers can generate counterfeit
  metrics (e.g., air quality indices or emissions) that reproduce genuine
  diurnal and seasonal cycles to conceal pollution spikes or overstate
  improvements, misleading policy makers and obscuring real
  hazards~\cite{Wang2021}.
\end{itemize}

\begin{figure*}
  \centering
  \includegraphics[width=.8\linewidth]{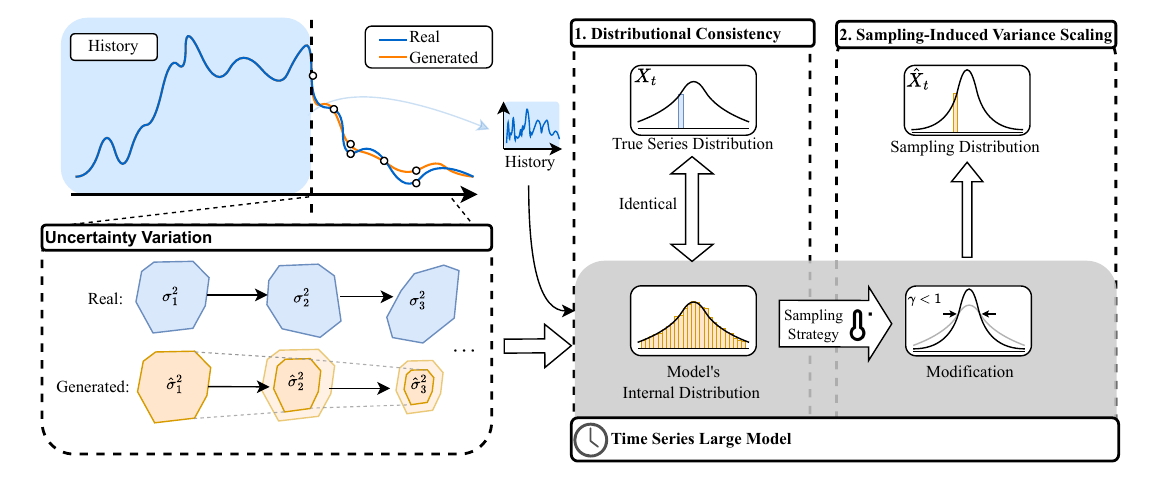}
  \caption{Illustration of the variation in uncertainty for real and model-generated
    time series.}
  \label{fig:overview}
\end{figure*}

To address these threats, we introduce a theoretical framework for white-box
detection of TSLM-generated time series.
Since there are no dedicated detection methods for time series data, we adapt
text-based detection methods to time series data.
Textual detectors exploit the observation that LLMs exhibit different
token-level probability patterns on human-written and model-generated text, and
therefore typically build zero-shot classifiers from each token’s probability or
its rank within the model’s vocabulary.
However, these methods face significant challenges when applied to time series
because of fundamental modality differences.

Textual data exhibit structural semantics and carry rich token-level
information~\cite{textInformation}, which creates greater semantic distances
among different tokens.
In any given context, only a small subset of tokens is semantically plausible
and with higher probability (e.g., probabilities might concentrate on
``apple'',``orange'' given the prefix ``I eat an'').
This produces sharper, low-entropy probability distributions over the vocabulary, with a narrower range of token selection.

In contrast, time series carry less information at each point~\cite{64words} and
inherently contain greater intrinsic uncertainty, yielding smoother probability
distributions.
Notably, despite the greater entropy of these smooth distributions, they reflect
the information at each time point rather than specific values.
Since the adjacent values in time series are highly similar (e.g.
temperature $25.1^\circ\mathrm{C}$ and $25.2^\circ\mathrm{C}$), resulting in
large mutual information, time series values convey less information.
Consequently, text-based detectors that rely on token probabilities
perform poorly on time series given the relatively lower probability gaps
between values.

Since point-wise probabilities are insufficiently discriminative in time
series, we instead analyze the full distributions from the model.
We show that a TSLM’s internal distributions, conditioned on the full history,
accurately capture the true series distributions and their inherent uncertainty.
As TSLMs are trained to minimize prediction errors, the internal distributions
are concentrated via model's sampling strategies for forecasting.
In recursive forecasting, the uncertainty is cumulatively decreasing, leading to
progressively concentrated internal distributions, as illustrated
in~\cref{fig:overview}.

We therefore propose the \textbf{contraction hypothesis}, namely, TSLM-generated
time series exhibit progressively decreasing uncertainty, whereas real time
series do not.
To validate this hypothesis, we provide a theoretical analysis under idealized
assumptions on model behavior and time series structure
(see~\cref{ss:analysis}) with empirical evidence through long-horizon
forecasting experiments (see~\cref{fig:confidence_oveall}).
Grounded in this detailed analysis of properties unique to time series, we introduce the Uncertainty Contraction Estimator
(UCE), a white-box model generation detection method for time series data.
UCE captures uncertainty dynamics from internal prediction distributions and
identifies sequences with lower uncertainty levels as model-generated.

\begin{figure*}[t]
  \centering
  \subfloat[Entropy\label{fig:entropy_trend}]{%
    \includegraphics[width=.3\linewidth]{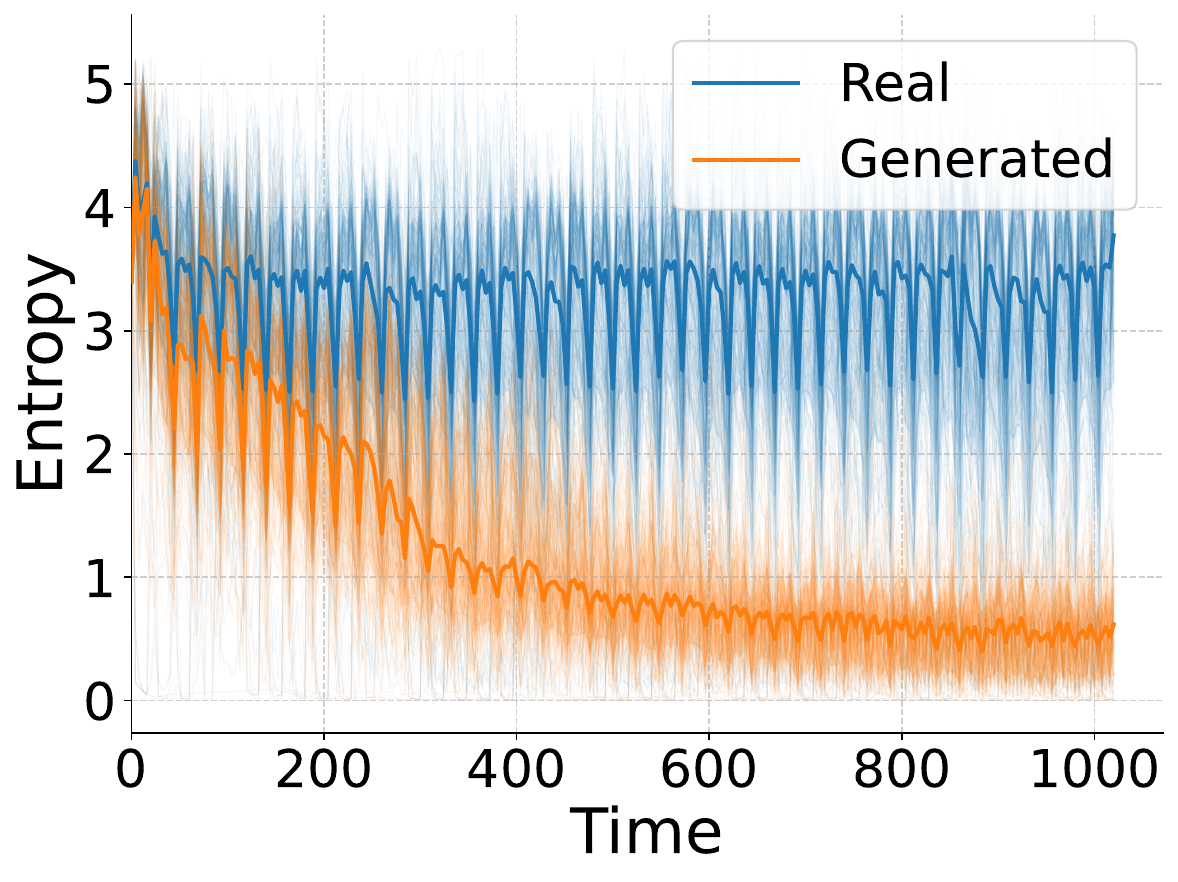}}\quad
  \subfloat[Max-Probability\label{fig:max_prob_trend}]{%
    \includegraphics[width=.3\linewidth]{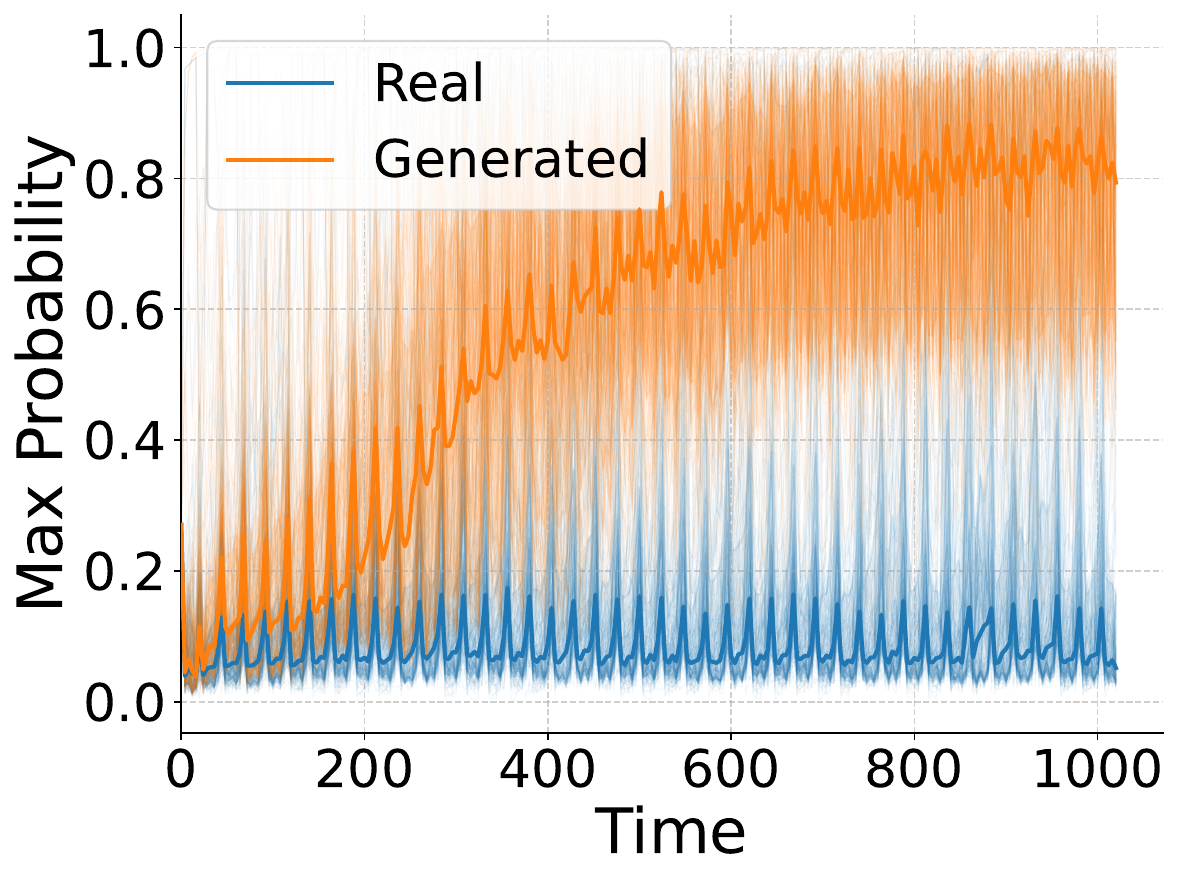}}\quad
  \subfloat[Variance\label{fig:var_trend}]{%
    \includegraphics[width=.3\linewidth]{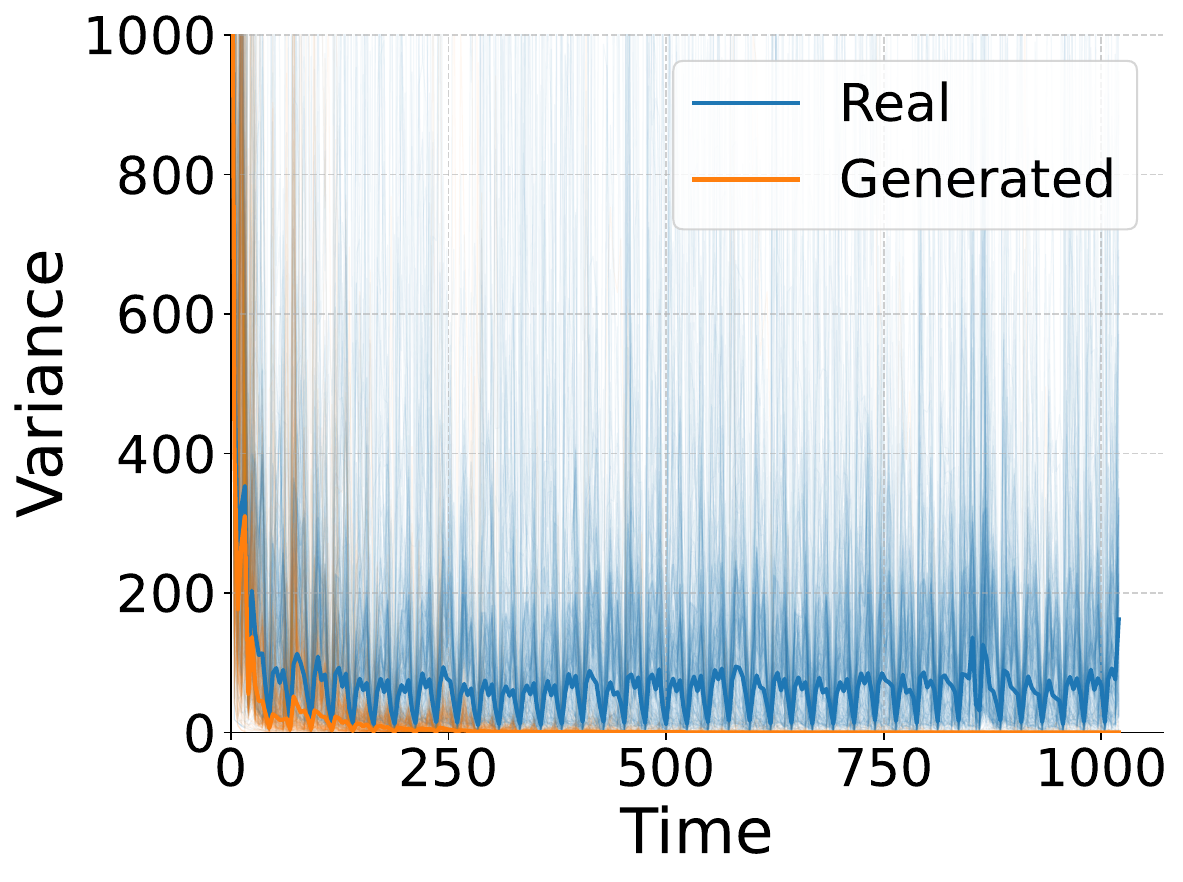}}
  \caption{The empirical results show the trajectories of uncertainty metrics,
    including entropy (\ref{fig:entropy_trend}), max-probability
    (\ref{fig:max_prob_trend}) and variance (\ref{fig:var_trend}) of both {real}
    and {model-generated} time series data, illustrating reduction in
    uncertainty for generated data.
  }
  \label{fig:confidence_oveall}
\end{figure*}

Our main contributions are summarized as follows.
\begin{itemize}
% \item To the best of our knowledge, we present the first framework for white-box
%   detection of TSLM-generated time series by extending textual detection methods
%   to time series and addressing the challenge of sparse information in time
%   series data.
\item To the best of our knowledge, we present the first framework for white-box
  detection of TSLM-generated time series. 
  We analyze the detailed properties unique to time series in contrast to textual data and address the challenge of low information density in time
  series.

\item We propose the contraction hypothesis, which states that model-generated
  time series exhibit progressively decreasing uncertainty during recursive
  forecasting, whereas real series do not.
  We provide a theoretical analysis under idealized assumptions of time series and model and empirically
  validate it.
  
\item Based on this hypothesis, we develop the Uncertainty
  Contraction Estimator (UCE), which captures uncertainty dynamics over
  successive prefixes using TSLMs to distinguish model-generated from real time series.
\end{itemize}

\section{Related Work}
\label{sec:related}

\subsection{Time Series Large Models}

Time series large models have emerged to significantly advance zero-shot time
series forecasting.
Earlier methods such as PromptCast~\cite{PromptCast} and
LLMTime~\cite{Time_Series_Forecasters} directly leverage LLMs for time series.
They convert time series data into text-based prompts and use pretrained LLMs
with little to no task-specific adaptation.
Such methods require dataset-specific templates and rely heavily on model scale.

PatchTST~\cite{64words} introduces splitting time series into patch embeddings,
a technique later adopted in models such as GPT4TS~\cite{OneFitsAll} and
Time-LLM~\cite{TimeLLM}.
These methods use patch embeddings as inputs to LLMs for prediction, but require
extensive fine-tuning.
Moreover, their performance and inference efficiency have been
questioned~\cite{AreLLMUseful}.

Recent work concentrates on models trained on vast and diverse time series
datasets.
MOMENT~\cite{moment} employs uniform random masking to generate patch embedding
for self-supervised pretraining.
Chronos~\cite{Chronos} discretizes real-valued time series through scaling and
quantization for forecasting and introduces data augmentation to improve
generalization.
Timer~\cite{timer} adopts a decoder-only transformer architecture, enabling
multiple temporal tasks such as forecasting, anomaly detection, and imputation.
Time-MoE~\cite{TimeMoE} leverages a Mixture-of-Experts framework to improve
forecasting performance and enhance cross-domain generalization while
maintaining computational efficiency.
These methods collectively represent the emerging trend of general-purpose time
series models capable of handling diverse tasks with minimal or no fine-tuning.

\subsection{Model Generation Detection}

The growing generative capabilities of LLMs have raised concerns about
distinguishing model-generated content, particularly in the textual domain.
Early work on model generation detection conduct supervised classification using bag of words~\cite{Release_Strategies} or neural
representations~\cite{automatic,authorship}, but often overfit and underperform
on out-of-distribution data~\cite{deepfaketext}.
To overcome this limitation, zero-shot detectors use LLM output statistics, such
as perplexity~\cite{relative_entropy} and log rank~\cite{gltr,unifying}.

Recent work analyzes token probabilities and uses the discrepancies between
human-written and model-generated text to develop classifiers.
DetectGPT~\cite{detectgpt}, Fast-DetectGPT~\cite{fast_detectgpt}, and
NPR~\cite{detectllm} compare probability differences between perturbed texts,
while DNA-GPT~\cite{dnagpt} leverages regeneration to compute divergence.
FourierGPT~\cite{FourierGPT} performs a spectral analysis on token probability
sequences to extract linguistic features that distinguish human-authored from
model-generated text.
Binocular~\cite{Binocular} leverages both an observer and a performer to compute
cross-perplexity and isolate intrinsic prefix-induced uncertainty to
improve detection accuracy.
Black-box methods such as intrinsic dimension~\cite{Intrinsic_dimension} perform
topological data analysis over token embeddings and find that human-written
texts are more complicated in topology.

Despite their success in text, these methods rely on the dense and
discrete structure of the language, making them unsuitable for continuous and
information-sparse time series data.
In this work, we extend zero-shot model generation detection to time series by
introducing uncertainty-based metrics derived from model's internal probability
distributions.

\section{Problem Formulation and Preliminaries}

In this section, we formally define the model-generated time series detection
problem.
Then we provide some preliminaries about our framework.

\subsection{Problem Formulation}

Let $\bX_t=(X_1,\ldots, X_t)$ denote a univariate time series with
\textit{unknown} history.
We perform a zero-shot classification of $\bX_t$ as real or
model-generated without labeled examples.

We assume a white-box setting with access to the model's internal probability
distribution $p_\theta(\cdot|\bX_t)$ rather than to the model architecture or
parameters $\theta$.
Our detection employs point-wise probabilistic TSLMs over an equidistant
vocabulary $\CV=\{v_i|v_{i+1}-v_i=\Delta,-R\leq v_i\leq R\}$ for new token
sampling.
Therefore, sampling token $v_i$ yields the numerical forecast $\hat X_t=v_i$.
In the limit $\Delta\to0$ and $R\to\infty$, $\CV$ converges to the real set
$\Real$ and a probability density function $f_\theta$ is obtained.
Such a formulation allows for detection evaluation across diverse model
architectures.

Some frequently used symbols throughout the paper are given
in~\cref{tab:notation}.
In the following sections, we use probability density functions to refer to the
distributions.

\begin{table}[ht]
  \centering\small
  \begin{tblr}{
      colspec={X[1,c,m]X[2.5,l,m]},
      colsep=2pt,
    }
    \toprule
    Symbol & Description  \\
    \midrule
    ${X}_t,\hat{X}_t$ & Real / forecast time series process \\
    $T_t$ & Deterministic trend component at time $t$ \\
    $n_t, \hat{n}_t$ & True / forecast (Gaussian) noise at time $t$ \\
    $\bX_{-H:t}, \hat{\bX}_{-H:t}$
    & History $X_{-H},\ldots,X_t$ (or $\hat{X}_{-H},\ldots,\hat{X}_t$) \\
    $f_t(X_t), \hat{f}_\theta(\hat{X}_t)$
    & Probability density of $X_t$ / $\hat{X}_t$ at time $t$ \\
    $\sigma_t^2, \hat{\sigma}_t^2$
    & Variance of true noise $n_t$ / forecast noise $\hat{n}_t$ \\
    $f_\theta(z_t|\bX_{-H:t-1})$
    & Model's internal probability density with history $\bX_{-H:t-1}$,
    abbreviated as $f_\theta(z_t)$ \\
    $\tilde{\sigma}_t^2$ & Variance of internal probability distribution at time $t$ \\
    $\gamma_t$ & Scaling factor of sampling strategy to $f_\theta(z_t)$ at time $t$ \\
    \bottomrule
  \end{tblr}
  \caption{Some frequently used symbols}
  \label{tab:notation}
\end{table}

\subsection{Preliminaries}

We first introduce some assumptions and premises about time series and idealized
TSLMs.
Detailed definitions and assumptions are provided in the Appendix.

We formally model time series as realizations of stochastic processes, where
each observation sequence is a sample path of this process.
We decompose the real time series process at time point $t$ into two
components~\cite{TimeSeriesAnalysis}:
\[
  X_t=T_t+n_t, \quad n_t\sim \CN\left(0,\sigma_t^2\right),
\]
% where $T_t$ denotes the trend sequence, representing predictable components, and
% $n_t$ is a Gaussian process with zero mean and variance satisfying
% $\sigma_t^2=\sum_{i=1}^{l}\alpha_{i}\sigma_{t-i}^2$ where
% $\sum_{i=1}^{l}\alpha_{i}=1$.
% The noise process represents the unpredictable or uncertainty components.
where $T_t$ denotes the trend sequence, representing predictable components, and
$n_t$ is a Gaussian process with zero mean and variance such that $\sigma_t^2=\sum_{i=1}^{l}\alpha_{i}\sigma_{t-i}^2$, $\sum_{i=1}^{l}\alpha_{i}=1$.
The noise process represents the unpredictable or uncertainty components.

We introduce \emph{Ideal Model}, which, for any real-valued time series in
$\Real$, predicts probability distributions to minimize the expectation of
cross-entropy loss.
Conceptually, it generalizes practical TSLMs by letting both training data and
model capacity grow without bound.
This abstraction allows us to derive fundamental detection principles regardless
of specific architectures or training constraints.

We formalize \emph{model evaluation function} as \cref{eq:Eva}:
\begin{equation}\label{eq:Eva}
    \mathrm{Eva}=
  \left[\sum_{i=1}^{\tau}g\left(\left|X_i-\hat{X}_i\right|^{p}\right)\right]^\frac{1}{p},
  1\leq p\leq \infty,
\end{equation}
where $g$ is any non-negative strictly increasing mapping.
Specifically, if $g$ is linear, $\mathrm{Eva}$ reduces to the standard $\ell_p$
norm of point-wise errors (e.g., RMSE when $p=2$).
Since $X_{i}$ and $\hat{X}_i$ are variables, $\mathrm{Eva}$ is also a variable.
$\mathrm{Eva}$ is a unified form of various model prediction evaluators (e.g.,
MSE, MAE) which in turn guides the model generation process.

% Detailed definitions and assumptions for the aforementioned concepts are
% provided in \cref{apdx:definition}.
% Detailed definitions and assumptions are provided in the Supplementary Material (Section A).

\section{Methodology}

In this section, we introduce the Uncertainty Contraction Estimator (UCE), a
novel time series model generation detection method.
We first establish the theoretical foundation by proposing the \emph{contraction
  hypothesis} in~\cref{subsec:Contraction Hypothesis}, followed by a formal
analysis in~\cref{ss:analysis}.
Finally, the implementation details of UCE are presented in~\cref{subsec:UCE}.

\subsection{Contraction Hypothesis}
\label{subsec:Contraction Hypothesis}

UCE hinges on two observations: (1) the model's internal distributions
faithfully reproduce true series distributions at each time; (2) through sampling
strategies, the model systematically modifies these distributions, creating a
self-reinforcing process of exponentially decreasing uncertainty.
We formalize these insights as the contraction hypothesis.

\textit{Contraction hypothesis: TSLM-generated time series exhibit progressively
  concentrating internal distributions with decreasing uncertainty, whereas real
  series do not.}

The term ``contraction'' refers to the contraction mapping nature of the
uncertainty reduction.
We systematically substantiate the hypothesis through a
tripartite theoretical framework.
% Formal proofs of all following propositions are provided in~\cref{apdx:proof}.
% Formal proofs of all the following propositions are provided in the Supplementary Material (Section B).
Formal proofs of all the following propositions are provided in the Appendix.

\subsection{Theoretical Analysis}
\label{ss:analysis}

\subsubsection{Distributional Consistency.}

We begin with the property of ideal models.
Given any history $\mathbf{X}_{-H:t-1}$, TSLM generates an
\emph{internal probability distribution} $f_{\theta}$.
From the perspective of this distribution, we revisit the ideal models and
conclude the distribution-perfect prediction property.
\begin{lemma}\label{theorem:continuous}
  For $\sigma_t^2\geq0$, we have $f_{\theta}\equiv
  f_{t}\; a.e.$
\end{lemma}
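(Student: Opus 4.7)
The plan is to reduce the statement to a classical consequence of the Gibbs inequality for cross-entropy. By definition, the ideal model's internal density $f_\theta(\cdot\mid \bX_{-H:t-1})$ is chosen to minimize the expected cross-entropy loss $\mathbb{E}_{X_t\sim f_t}[-\log f_\theta(X_t\mid \bX_{-H:t-1})]$. Since the optimization is pointwise in the conditioning history, I would fix an arbitrary history and show that among all admissible conditional densities the unique (a.e.) minimizer is $f_t$ itself.

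First, I would rewrite the expected loss using the identity
\[
  \mathbb{E}_{X_t\sim f_t}\bigl[-\log f_\theta(X_t)\bigr]
  \;=\; H(f_t) \;+\; D_{\mathrm{KL}}(f_t\,\|\,f_\theta),
\]
where $H(f_t)=-\int f_t\log f_t\,\dd x$ is the differential entropy of the true conditional distribution and $D_{\mathrm{KL}}$ denotes the Kullback--Leibler divergence. Because $H(f_t)$ does not depend on $f_\theta$, minimizing the cross-entropy is equivalent to minimizing $D_{\mathrm{KL}}(f_t\,\|\,f_\theta)$.

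Next I would invoke Gibbs' inequality: $D_{\mathrm{KL}}(f_t\,\|\,f_\theta)\geq 0$, with equality if and only if $f_\theta=f_t$ almost everywhere with respect to Lebesgue measure. Hence the ideal-model optimum forces $f_\theta\equiv f_t$ a.e., which is the desired conclusion whenever $\sigma_t^2>0$ (so $f_t$ is a genuine Gaussian density, which is the setting in which the KL decomposition is valid).

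The main obstacle I anticipate is the boundary case $\sigma_t^2=0$, where $f_t$ degenerates to a Dirac mass at $T_t$ and the differential entropy is undefined. Here I would argue by a limiting/regularization step: convolve $f_t$ with a Gaussian of variance $\varepsilon$, apply the non-degenerate result to obtain $f_\theta^{(\varepsilon)}\equiv f_t^{(\varepsilon)}$ a.e., and pass $\varepsilon\to 0$ in the sense of distributions (equivalently, in weak-$\ast$ topology on probability measures), so that the ideal model's prediction concentrates on $T_t$, matching the degenerate $f_t$. This keeps the statement ``$f_\theta\equiv f_t$ a.e.'' meaningful under the paper's convention of treating point masses as limits of densities.
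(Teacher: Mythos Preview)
Your proposal is correct and follows essentially the same route as the paper: both reduce the claim to the fact that the cross-entropy $-\int f_t\log f_\theta$ is minimized precisely when $f_\theta=f_t$ a.e., i.e., Gibbs' inequality. The only cosmetic differences are that the paper's appendix re-derives this inequality via a Lagrange-functional/concavity argument rather than invoking the KL decomposition, and the paper treats the degenerate case $\sigma_t^2=0$ as a separate corollary (\cref{theorem:dirac}) rather than via your $\varepsilon$-regularization limit.
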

\cref{fig:perfect prediction} illustrates the insight
of~\cref{theorem:continuous}.
Since the probability density of $X_{t} = T_t+d$ is $f_t(T_t+d)$, the loss
expectation is therefore the cross entropy $-\int f_{t}(v)\log
f_{\theta}(v)\mathrm{d}v$ between $f_{\theta}$ and the true series
distribution $f_{t}$.
As illustrated in \cref{fig:perfect prediction}, it is minimized when
$f_{\theta}\equiv f_{t}$ almost everywhere by Gibbs’ inequality.
Specifically, if the true sequence is entirely deterministic (i.e., zero
uncertainty), ideal models make value-perfect predictions, leading
to~\cref{theorem:dirac}.
\begin{restatable}{corollary}{MainBodyDirac}\label{theorem:dirac}
    If $\sigma_t^2=0$, we have $f_{\theta}(z=v)=\delta(v-T_t)$, where $\delta$ is the Dirac-$\delta$ function.
\end{restatable}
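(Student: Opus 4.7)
The plan is to treat Corollary~\ref{theorem:dirac} as the degenerate limiting case of Lemma~\ref{theorem:continuous}. Under the hypothesis $\sigma_t^2=0$, the noise term vanishes, so $X_t=T_t$ almost surely and the conditional law of $X_t$ collapses to a point mass at $T_t$. In the distributional sense, this point mass corresponds to the density $f_t(v)=\delta(v-T_t)$, which is precisely the target we want the ideal model's internal density to match.

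First, I would realize this point mass as the weak limit of Gaussians with vanishing variance: set $f_{t,\epsilon}(v)=(2\pi\epsilon^2)^{-1/2}\exp\bigl(-(v-T_t)^2/(2\epsilon^2)\bigr)$ for $\epsilon>0$. For every such $\epsilon$, the true series density is a genuine $L^1$ function, so Lemma~\ref{theorem:continuous} applies and yields $f_{\theta,\epsilon}\equiv f_{t,\epsilon}$ almost everywhere. Second, I would pass to the limit $\epsilon\to 0^+$: the family $\{f_{t,\epsilon}\}$ is a standard nascent-delta sequence, so for every test function $\varphi\in C_c^\infty(\Real)$ one has $\int f_{t,\epsilon}(v)\varphi(v)\,\dd v\to \varphi(T_t)$. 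Because $f_{\theta,\epsilon}$ coincides with $f_{t,\epsilon}$ at each stage, the internal law converges weakly to the point mass $\delta(v-T_t)$, which is the claim.

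The main obstacle is strictly formal: Lemma~\ref{theorem:continuous} is stated for bona fide probability densities, so the cross-entropy functional is well defined, whereas the object appearing in the corollary is a distribution whose differential entropy is $-\infty$. One must therefore interpret the equality in Lemma~\ref{theorem:continuous} weakly and justify the interchange of the optimization (``ideal model'') and the limit $\epsilon\to 0^+$. I would address this by noting that the ideal-model property is a pointwise statement for each fixed conditioning history, so the limit argument reduces to weak convergence of densities rather than uniformity over a hypothesis class.

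As a sanity check (route b), I would also give a direct argument from the evaluation functional of~\cref{eq:Eva}: when $X_t=T_t$ deterministically, the expected value of $\mathrm{Eva}$ restricted to time $t$ reduces to $\E{g(|T_t-\hat X_t|^p)}^{1/p}$, which is minimized (and equals $0$) only when the sampling distribution places all mass on $T_t$, i.e., $f_\theta(v)=\delta(v-T_t)$; any absolutely continuous alternative would place positive probability on $\{v\neq T_t\}$ and thus incur strictly positive loss, contradicting ideality. This independent route corroborates the limit argument and is robust to the choice of non-negative strictly increasing $g$.
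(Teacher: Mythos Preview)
Your route (a) and the paper's proof both reach $\delta(v-T_t)$ via a limit, but the limits are different and yours has an unresolved step. The paper works directly in the discrete $\Delta$-model: there $X_t$ is a genuine point mass at the token containing $T_t$, so the expected per-step loss is simply $-\log p_{\theta,\Delta}(z_t=T_t)\ge 0$, minimized iff $p_{\theta,\Delta}(z_t=T_t)=1$; the continuous statement then follows because the continuous ideal model is, in the paper's framework, the $\Delta\to 0$ limit of the discrete ones. You instead vary the \emph{data distribution} (variance $\epsilon^2\to 0$) while keeping the continuous loss. The gap you label ``strictly formal'' is substantive: you are computing $\lim_{\epsilon\to 0}\arg\min_{f}\int -f_{t,\epsilon}(v)\log f(v)\,\dd v$, whereas the corollary concerns $\arg\min_f\bigl(-\log f(T_t)\bigr)$, which has no minimizer among $L^1$ densities (the infimum is $-\infty$). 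Your remark that the ideal-model property is ``a pointwise statement for each fixed conditioning history'' does not touch this; the issue is continuity of $\arg\min$ in the data law, not uniformity over histories. The paper's discretization sidesteps the problem because the discrete cross-entropy against a point mass is finite and uniquely minimized.

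Your route (b) conflates two objectives. The ideal model is \emph{defined} as the minimizer of expected cross-entropy $\mathcal{L}_\theta$, not of the evaluation functional $\mathrm{Eva}$. Showing that $\E{\mathrm{Eva}}$ is minimized only by a point mass at $T_t$ characterizes the optimal \emph{sampling} law under $\mathrm{Eva}$, but says nothing about the cross-entropy minimizer $f_\theta$. The paper uses exactly that observation elsewhere (\cref{corollary:MSE_converge}) to conclude that models \emph{trained with} $\mathrm{Eva}$-type losses collapse to point forecasts---a statement about a different model class, not about the cross-entropy ideal model under $\sigma_t^2=0$.
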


% \begin{corollary}\label{theorem:dirac}
%     If $\sigma_t^2=0$, we have $f_{\theta}(z=v)=\delta(v-T_t)$, where $\delta$ is the Dirac-$\delta$ function.
% \end{corollary}

% \begin{lemma}
% \end{lemma}

In summary, the ideal model's internal distributions coincide with true series distributions and therefore preserve the inherent uncertainty in the series.

\begin{figure}
    \centering
    \includegraphics[width=.95\linewidth]{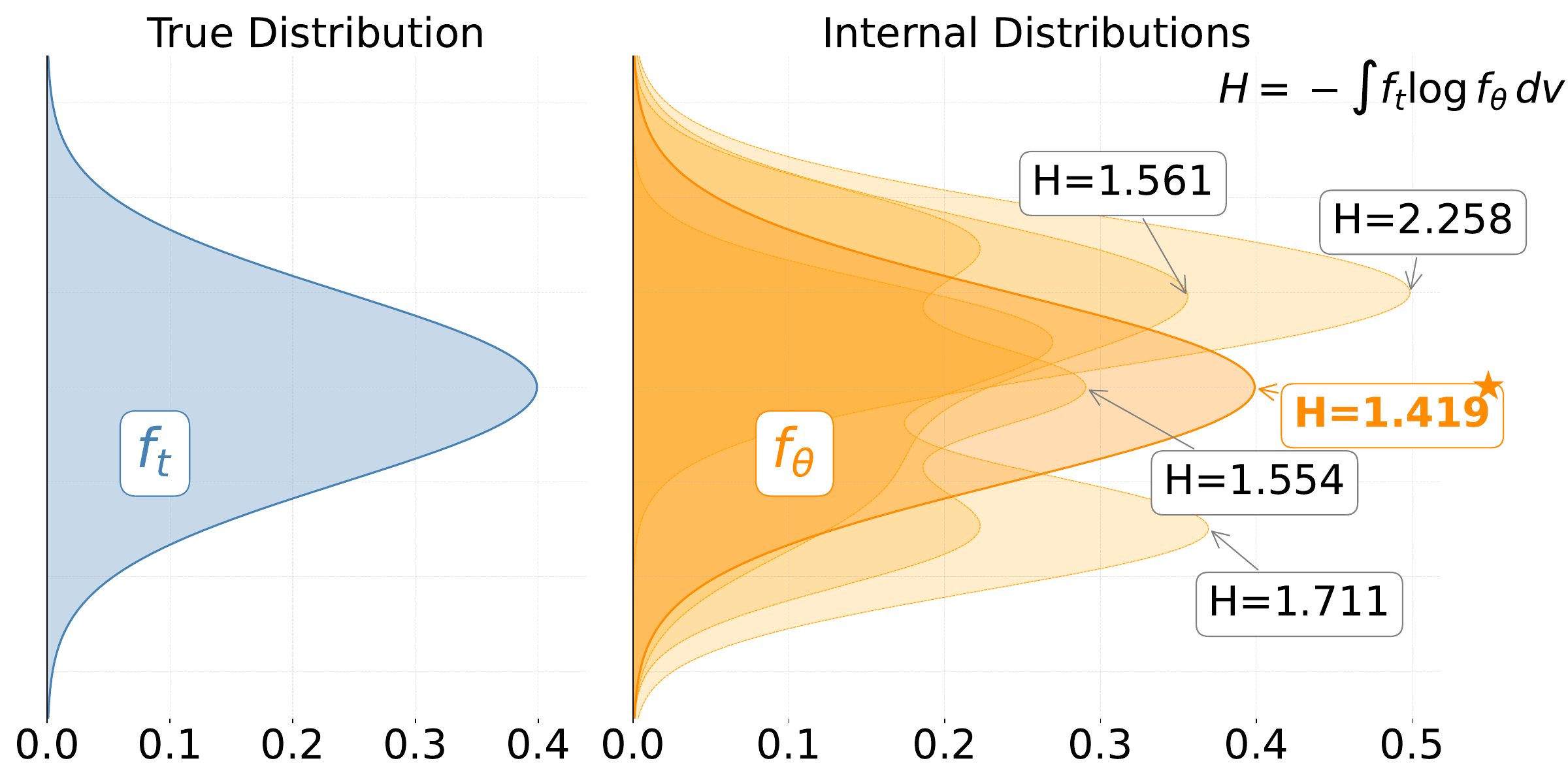}
    \caption{Comparison of the true distribution (blue) and model's internal distributions (orange). For an ideal model, its internal distribution coincides with the true distribution.}
    \label{fig:perfect prediction}
\end{figure}

\subsubsection{Sampling-Induced Variance Scaling.}
Using the property of ideal models to reproduce true series distributions, we analyze the prediction of $\mathbf{\hat{X}}_{1:\tau}$ with history $\mathbf{X}_{-H:0}$, and focus on the variation in uncertainty.
In this section, we explain that the sampling probability distribution $\hat{f}_{t}$ of each $X_{t}$ is a modified version of $f_{\theta}$, which tends to reduce its uncertainty.

TSLMs modify internal distributions $f_{\theta}$ through sampling strategies to modulate output diversity~\cite{GPT2}.
We summarize common sampling strategies as follows: (1) scaling (e.g., temperature sampling); and (2) symmetric truncation (with normalization, e.g., top-$k$).
These modifications preserve the means of $f_{\theta}$ but alter their variances, as shown in~\cref{fig:modification}, generating sampling probability distributions $\hat{f}_\theta(\hat{X}_t=v)$ to sample $\hat{X}_{t}$ with variance $\hat{\sigma}^2_{t}$.

Let the uncertainty (variance) of $f_{\theta}$ be $\tilde{\sigma}^2_{t}$ at
time $t$, we denote the modified uncertainty by sampling strategies as
$\hat{\sigma}^2_{t}= \gamma_{t}\cdot\tilde{\sigma}^2_{t}$.
Intuitively, $\gamma_t$ controls uncertainty expansion ($\gamma_t>1$),
preservation ($\gamma_t=1$), or contraction ($\gamma_t<1$).
Since modifications of sampling strategies preserve the expectation, we
recursively obtain $\E{\hat{X}_t}=T_t$ for all $1\leq t\leq\tau$.
We therefore treat the generation of $\hat{\mathbf{X}}_{1:\tau}$ as $\tau$
independent, one-step predictions given their respective histories.
We evaluate the differences between $\mathbf{X}_{1:\tau}$ and
$\hat{\mathbf{X}}_{1:\tau}$ using the model evaluation function, leading to
\cref{theorem:evaluation}.
\begin{restatable}{lemma}{MainBodyEva}\label{theorem:evaluation}
  The expectation of the model evaluation function $\E{\mathrm{Eva}}$ increases
  monotonically with respect to all $\hat{\sigma}_{t}$.
\end{restatable}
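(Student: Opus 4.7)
The plan is to exploit the decomposition $X_t - \hat{X}_t = n_t - \hat{n}_t$ afforded by the common trend $T_t$ (which the paper has already established via $\E{\hat{X}_t}=T_t$) and then reduce the claim to a coupling argument. Under the paper's setup, where $n_t \sim \CN(0, \sigma_t^2)$ and the sampled noise $\hat{n}_t$ is symmetric with mean zero and variance $\hat{\sigma}_t^2$, independent across time and of the true noise, the error $X_t - \hat{X}_t$ is symmetric about zero with variance $\sigma_t^2 + \hat{\sigma}_t^2$. In the idealized fully Gaussian case this becomes $X_t - \hat{X}_t \stackrel{d}{=} \sqrt{\sigma_t^2 + \hat{\sigma}_t^2}\, Z_t$ for a standard normal $Z_t$, which gives a clean parametrization in $\hat{\sigma}_t$.

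I would then fix $Z_1, \ldots, Z_\tau$ iid standard normal on a common probability space and couple two parameter choices $\hat{\sigma}_t \leq \hat{\sigma}_t'$ through the same $Z_t$'s. Under this coupling $|X_t - \hat{X}_t|^p$ is pointwise non-decreasing in $\hat{\sigma}_t$ for every $p \in [1, \infty)$, so applying the non-negative strictly increasing map $g$ termwise preserves monotonicity. Summing non-negative terms keeps the sum non-decreasing, and raising to the power $1/p$ (a monotone operation on non-negatives) preserves the inequality. Thus $\mathrm{Eva}$ is almost surely non-decreasing in each $\hat{\sigma}_t$, and the claim follows by taking the expectation; the $p=\infty$ case is handled analogously with $\max$ replacing the sum.

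The main delicate point is ensuring genuine pointwise monotonicity when $\hat{n}_t$ is only symmetric (e.g.\ after truncation-based sampling) rather than Gaussian. Here I would appeal to the elementary fact that $\E{|n_t + c|^p}$ is non-decreasing in $|c|$ whenever the density of $n_t$ is symmetric about zero and decreasing in $|\cdot|$: differentiating under the integral reduces the claim to non-negativity of $u^{p-1}[\phi(u-c) - \phi(u+c)]$ on $u \geq 0$, which follows from $|u-c|\leq|u+c|$. Conditioning on $\hat{n}_t = \hat{\sigma}_t V_t$ for a standardized symmetric $V_t$ then lifts monotonicity in $\hat{\sigma}_t$ to $\E{g(|X_t - \hat{X}_t|^p)}$, after which aggregation over $t$ is routine. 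Everything else in the chain---applying $g$, summing, raising to $1/p$, and taking expectations---commutes with the inequality and is essentially mechanical; the only real work lies in establishing the per-time monotonicity after the sampling modification.
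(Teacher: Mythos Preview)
Your Gaussian-case argument is correct and is essentially the paper's proof in different clothing. The paper sets $Z_t=X_t-\hat X_t\sim\CN(0,\sigma_t^2+\hat\sigma_t^2)$, proves $|Z_i|\leq_{st}|Z_i'|$ for centred Gaussians with ordered variances, and then pushes this first-order stochastic dominance through the monotone maps $g(|\cdot|^p)$, the independent sum over $t$, and finally $u(z)=z^{1/p}$, via a chain of four lemmas. Your scale coupling $|X_t-\hat X_t|=\sqrt{\sigma_t^2+\hat\sigma_t^2}\,|Z_t|$ is precisely the explicit realisation of that dominance (Strassen's theorem in the trivial direction), after which pointwise monotonicity of $\mathrm{Eva}$ is immediate and taking expectations finishes. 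So for the statement the paper actually proves, the two routes coincide; yours is more direct, the paper's more modular.

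One caution on your non-Gaussian extension. As written you only extract per-term monotonicity of $\E{g(|X_t-\hat X_t|^p)}$ in $\hat\sigma_t$ and then declare that ``aggregation over $t$ is routine.'' It is not: since $\mathrm{Eva}=\bigl[\sum_t g(\cdot)\bigr]^{1/p}$ with $1/p\le 1$, monotonicity of each summand's \emph{expectation} does not propagate through the concave outer map and the expectation outside it. You need stochastic dominance (equivalently, a coupling) of the summand itself. Your unimodal-symmetric shift argument does in fact deliver $|n_t+c|\leq_{st}|n_t+c'|$ for $|c|\le|c'|$, which conditioned on $V_t$ gives exactly what is required---but you should state it at that level rather than only at the level of means. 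Note also that writing $\hat n_t=\hat\sigma_t V_t$ with a \emph{fixed} law for $V_t$ presumes a scale family, which holds under temperature scaling but not under truncation, where changing the threshold alters both $\hat\sigma_t$ and the shape of $V_t$; the paper avoids all of this by working purely in the Gaussian case.
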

% \begin{lemma}\label{theorem:evaluation}
%   The expectation of the model evaluation function $\E{\mathrm{Eva}}$ increases monotonically in all $\hat{\sigma}_{t}$.
% \end{lemma}
\cref{theorem:evaluation} indicates that a smaller $\hat{\sigma}_t^2$ yields a lower $\mathrm{Eva}$ value, suggesting a higher quality of generation.
Given $\hat{\sigma}_{t}^2=\gamma_{t}\cdot\tilde{\sigma}_t^2$, when $\tilde{\sigma}_t^2$ is fixed, a smaller $\gamma_t$ yields lower $\E{\mathrm{Eva}}$, which validates our intuition.

\begin{figure}
    \centering
    \includegraphics[width=0.85\linewidth]{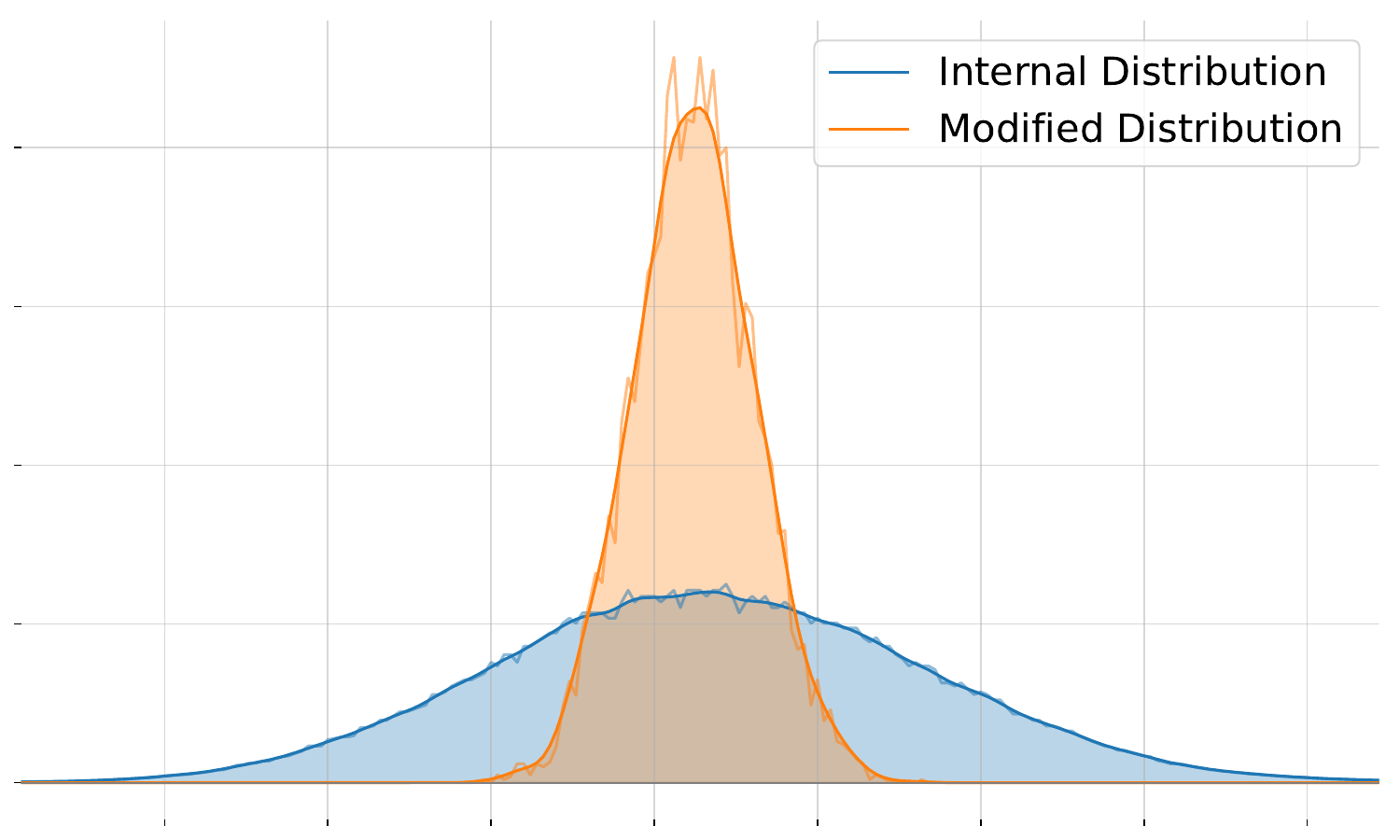}
    \caption{Comparison of the internal distribution (blue, from the model logits) and modified sampling distribution (orange, from 10,000-step Monte Carlo) for a single prediction step.
    The model’s internal probability distribution becomes sharper under our sampling strategy.
    }
    \label{fig:modification}
\end{figure}

In particular, we derive a corollary from \cref{theorem:evaluation} that if metrics such as MSE are used as training loss, the contraction of uncertainty is immediate.
\begin{corollary}
    \label{corollary:MSE_converge}
    If the loss function is defined in the same form as the model evaluation function $\mathrm{Eva}$ in \cref{eq:Eva} (e.g., MSE),
    % \begin{equation}
    %     \mathcal{L}=
    % \left[\sum_{i=1}^{\tau}g\left(\left|X_i-z_i\right|^{p}\right)\right]^\frac{1}{p},
    % \end{equation}
    we have $f_{\theta}(z=v)=\delta(v-T_t)$.
\end{corollary}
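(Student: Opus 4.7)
The plan is to contrast this situation with \cref{theorem:continuous}: under cross-entropy, the ideal $f_\theta$ matches the full $f_t$ (so it inherits variance $\sigma_t^2$), whereas under an $\mathrm{Eva}$-form loss the model is rewarded only for pointwise accuracy and therefore must collapse to a single value. I would make this precise by combining the monotonicity statement \cref{theorem:evaluation} with the mean-preservation property established earlier in the section.

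First, since the ideal model now minimizes $\E{\mathrm{Eva}}$, I would invoke \cref{theorem:evaluation} to conclude that the infimum of the loss is attained only in the limit $\hat{\sigma}_t \to 0$ for every $t$. Because $\hat{\sigma}_t^2 = \gamma_t \tilde{\sigma}_t^2$ with $\gamma_t$ a positive constant determined by the (fixed) sampling strategy, this forces $\tilde{\sigma}_t \to 0$, i.e., the variance of the internal distribution $f_\theta$ itself must vanish.

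Second, I would use the mean-preservation property: the sampling modifications (scaling and symmetric truncation) preserve the mean of $f_\theta$, and since we already have $\E{\hat{X}_t} = T_t$, the mean of $f_\theta$ must equal $T_t$ as well. A distribution on $\Real$ with mean $T_t$ and zero variance is necessarily the Dirac mass at $T_t$, which gives $f_\theta(z = v) = \delta(v - T_t)$, as claimed.

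The main obstacle is the clean separation of roles between $\gamma_t$ and $\tilde{\sigma}_t$: because \cref{theorem:evaluation} only asserts monotonicity in the aggregate quantity $\hat{\sigma}_t$, one must argue that the model, unable to alter $\gamma_t$ (a property of the sampling strategy, not of $f_\theta$), is compelled to drive $\tilde{\sigma}_t$ to zero rather than offload the contraction onto the sampler. A secondary technical point is interpreting the limit $\tilde{\sigma}_t \to 0$ as weak convergence of densities to the Dirac mass, which is standard once mean-preservation is in hand.
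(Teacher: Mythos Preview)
Your two-step skeleton---use \cref{theorem:evaluation} to drive the predictive variance to zero, then pin the mean at $T_t$ to get the Dirac---is exactly the paper's argument. The paper, however, applies the monotonicity directly to the variance $\tilde{\sigma}_t^2$ of the internal distribution: when $\mathrm{Eva}$ is the \emph{training} loss it is evaluated on the raw model output $z_i\sim f_\theta$, so there is no sampling layer and hence no $\gamma_t$ in the picture. Your self-identified ``main obstacle'' about separating $\gamma_t$ from $\tilde{\sigma}_t$ therefore does not arise; it is an artifact of routing the argument through $\hat{\sigma}_t$ when you need not.

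Your second step has a circularity you should repair. The identity $\E{\hat{X}_t}=T_t$ and the mean-preservation of sampling were established for the cross-entropy ideal model via \cref{theorem:continuous}; in the present corollary the ideal model is \emph{redefined} as the minimizer of an $\mathrm{Eva}$-type loss, so you cannot import those facts. The paper instead derives the mean condition from the same minimization: the monotonicity argument behind \cref{theorem:evaluation} actually shows $\E{\mathrm{Eva}}$ is increasing in the variance of $X_i-z_i$, and once $\tilde{\sigma}_t^2=0$ the remaining dependence on the point-mass location $\mu_t$ is $\E{g(|T_t+n_t-\mu_t|^p)}$, which by the symmetry of $n_t$ and the monotonicity of $g$ is minimized at $\mu_t=T_t$.
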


% \cref{theorem:evaluation} shows that given fixed ${X}_t$, the model evaluation function in reality evaluates the distance between the generated observation $\hat{X}_t$ and the trend sequence $T_t$.
% In a fixed underlying time series $\mathbf{X}_{1:\tau}$, the difference in ``quality'' between different forecasts stems solely from their expected distance to $T_t$, and is irrelevant to the true noise $n_t$.
% Given this characterization of quality, TSLM minimizes the uncertainty in  forecast noise $\hat{n}_t$ regardless of true noise $n_t$.

Based on the analysis, we intuitively expect that TSLMs employ $\gamma_t<1$ in practice.
In the following, we further analyze this intuition of $\gamma_t$ in terms of performance.

\subsubsection{Recursive Variance Reduction.}
\label{subsubsec:recursive variance reduction}
With the intuition of $\gamma_t$ above, we examine how the sampling distribution (i.e., the internal distribution modified by $\gamma_t$) affects the subsequent generation.
In this section, we demonstrate the recursive evolution of the variance and analyze its behavior under different values of the scaling factor $\gamma_t$.

According to~\cref{theorem:continuous}, the ideal model reproduces any distribution with variance satisfying $\sigma_{t}^2=\sum_{i=1}^{l}\alpha_{i}\sigma_{t-i}^2$.
When forecasts $\hat{X}_1, \hat{X}_2,\ldots$ serve as histories, their variances $\hat{\sigma}_{1}^2,\hat{\sigma}_2^2,\ldots$ also follow the same recurrence relation.
\begin{equation}\label{eq:recursion}
    \tilde{\sigma}_t^2=\sum_{i=1}^{l}\alpha_{i}\hat{\sigma}_{t-i}^2=\sum_{i=1}^{l}\alpha_{i}\gamma_{t-i}\tilde{\sigma}_{t-i}^2,\;\sum_{i=1}^{l}\alpha_i=1.
\end{equation}

Typically, the uncertainty scaling behavior is consistent and $\gamma_t< 1$ or $\gamma_t> 1$ for all $t$.
When $\gamma_t < 1$, the internal uncertainty, according to~\cref{eq:recursion}, is strictly lower than the weight sum of historical uncertainties, leading to an exponential decay in uncertainty towards $0$.
By~\cref{theorem:dirac}, the forecast series $\hat{X}_t$ converges to the trend $T_t$.
Conversely, when $\gamma_t > 1$, the uncertainty grows exponentially towards infinity, introducing excessive noise into the forecast that undermines trend capture and violates forecasting objectives.
When $\gamma_t\equiv 1$, the uncertainty in forecast series is relatively stable for ideal models, which is unattainable for practical models due to noise accumulation.

This theoretical analysis also helps explain why many practical TSLMs exhibit similar behaviors of uncertainty contraction.
For example, Chronos~\cite{Chronos} adopts contractive sampling strategies by using top-$k$ with median sampling, and therefore $\gamma_t<1$ holds consistently.
Timer~\cite{timer} and Time-MoE~\cite{TimeMoE} utilize point-wise errors (e.g., MSE) as training loss, sharing the form as $\mathrm{Eva}$ defined in \cref{eq:Eva}.
According to \cref{corollary:MSE_converge}, the forecast uncertainty converges rapidly, indicating their equivalence to $\gamma_t<1$ in terms of performance.
We formalize the aforementioned analysis as \emph{contraction hypothesis}, where variance can be substituted for other metrics.
\begin{theorem}\label{theorem:contraction hypothesis}
  For an Ideal Model's forecast time series $\mathbf{\hat{X}}_t$ with history $\mathbf{X}_{-H:0}$, let the internal probability distribution at $t$ be $f_{\theta}(z_t)$, and define its uncertainty measure by variance $\tilde{\sigma}_t^2$.
  Therefore we have that $\tilde{\sigma}_t^2$ is monotonically decreasing with $t$ and $\lim_{t\to\infty}\tilde{\sigma}_t^2\approx 0$.
\end{theorem}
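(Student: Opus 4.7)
The plan is to prove both claims by a joint induction on $t$, using the recursion from~\cref{eq:recursion}, namely $\tilde{\sigma}_t^2 = \sum_{i=1}^l \alpha_i \hat{\sigma}_{t-i}^2$ with $\hat{\sigma}_j^2 = \gamma_j \tilde{\sigma}_j^2$ for $j \geq 1$ and $\hat{\sigma}_j^2 = \sigma_j^2$ for $j \leq 0$ (treating the true history as carrying an effective scale factor of one). I would operate in the contraction regime $\gamma_t < 1$ already motivated in~\cref{subsubsec:recursive variance reduction}, write $\bar\gamma := \sup_t \gamma_t \in (0,1)$, and adopt a mild stationarity assumption on the true noise so that $\sigma_j^2 \equiv \sigma^2$ for $j \leq 0$. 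This yields the base case $\tilde{\sigma}_1^2 = \sigma^2$ via $\sum_i \alpha_i = 1$, and $\hat{\sigma}_0^2 = \sigma^2 > \gamma_1 \sigma^2 = \hat{\sigma}_1^2$, so the shrink is already visible at the boundary between true history and forecasts.

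For monotonicity, the inductive step writes both $\tilde{\sigma}_t^2$ and $\tilde{\sigma}_{t-1}^2$ as $\alpha$-weighted convolutions of the $\hat{\sigma}$ sequence and subtracts them termwise:
\begin{equation*}
\tilde{\sigma}_t^2 - \tilde{\sigma}_{t-1}^2 \;=\; \sum_{i=1}^{l} \alpha_i \bigl(\hat{\sigma}_{t-i}^2 - \hat{\sigma}_{t-1-i}^2\bigr) \;\leq\; 0,
\end{equation*}
which is nonpositive under the inductive hypothesis that $\hat{\sigma}_s^2$ is nonincreasing for $s \leq t-1$. Closing the loop, $\hat{\sigma}_t^2 = \gamma_t \tilde{\sigma}_t^2 \leq \gamma_{t-1} \tilde{\sigma}_{t-1}^2 = \hat{\sigma}_{t-1}^2$ whenever the scaling behavior is consistent (e.g.\ constant $\gamma_t \equiv \gamma$, or $\gamma_t$ nonincreasing), so both sequences inherit monotonicity simultaneously. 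For the limit, monotonicity implies $\max_{t-l \leq s \leq t-1} \tilde{\sigma}_s^2 = \tilde{\sigma}_{t-l}^2$, so once $t > l$ the recursion yields
\begin{equation*}
\tilde{\sigma}_t^2 \;=\; \sum_{i=1}^l \alpha_i \gamma_{t-i} \tilde{\sigma}_{t-i}^2 \;\leq\; \bar\gamma \, \tilde{\sigma}_{t-l}^2.
\end{equation*}
Iterating this window-shift inequality gives $\tilde{\sigma}_{kl+r}^2 \leq \bar\gamma^{k} \tilde{\sigma}_r^2$ for $r \in \{1,\dots,l\}$, so $\tilde{\sigma}_t^2$ decays geometrically and $\lim_{t \to \infty} \tilde{\sigma}_t^2 = 0$.

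The main obstacle is the coupling between $\tilde{\sigma}$ and $\hat{\sigma}$ across the warm-up window $1 \leq t \leq l$, where the convolution mixes true-history entries (no $\gamma$-shrink) with forecast entries (with $\gamma$-shrink), so naive termwise inequalities can go the wrong way unless one is careful about what the ``older'' entries actually are. The cleanest remedies are: (i) weak stationarity on the true noise so the boundary contributes a single constant; (ii) pinning down $\gamma_t$ as consistent across $t$, so the joint $(\tilde{\sigma}, \hat{\sigma})$ induction closes without additional hypotheses; and (iii) rewriting both sides of the monotonicity inequality in the $\hat{\sigma}$-convolution form so that termwise comparison is legitimate. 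With these conventions, the proof collapses to a short joint induction together with a standard geometric-series bound.
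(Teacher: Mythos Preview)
Your approach is correct and takes a genuinely different route from the paper. The paper bounds the recursion by the constant-coefficient linear recurrence $\tilde{\sigma}_t^2 \leq \sum_{i} q\alpha_i \tilde{\sigma}_{t-i}^2$ with $q = \sup_t \gamma_t < 1$, forms its characteristic polynomial $r^l - \sum_i q\alpha_i r^{l-i}$, and shows by a short contradiction (triangle inequality together with $\sum_i q\alpha_i < 1$) that every root has modulus strictly below one; convergence to zero is then read off from the general solution $\sum_j P_j(t)\, r_j^t$. Your induction-plus-window-shift argument is more elementary---no characteristic polynomial, no root analysis---and it actually delivers the monotonicity half of the theorem, which the paper's spectral argument does not obviously yield (complex conjugate root pairs with modulus below one can produce decaying but non-monotone oscillations, so the paper's route establishes the limit but leaves monotonicity implicit). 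The trade-off is that you need the extra stationarity assumption $\sigma_j^2 \equiv \sigma^2$ on the initial noise and a consistency condition on $\gamma_t$ (constant or nonincreasing) to close the joint $(\tilde\sigma,\hat\sigma)$ induction across the warm-up window, whereas the paper's root bound only requires the uniform upper bound $\gamma_t \leq q < 1$. Both routes give geometric decay at essentially the rate $\bar\gamma^{\,t/l}$.
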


We provide empirical verification of~\cref{theorem:contraction hypothesis} in \Cref{fig:confidence_oveall}, which illustrates this uncertainty contrast over 1024 tokens using three metrics, namely entropy, max-probability, and variance, which are used in our detection method.
In model-generated series, max-probability steadily approaches $1$, while both entropy and variance decay towards $0$.
In contrast, real sequences remain comparatively stable in uncertainty.
This observed discrepancy in series uncertainty, combined with the ability of TSLMs to capture it, forms the basis of our detection methodology.

\begin{figure}[t!]
    \centering
    \includegraphics[width=1\linewidth]{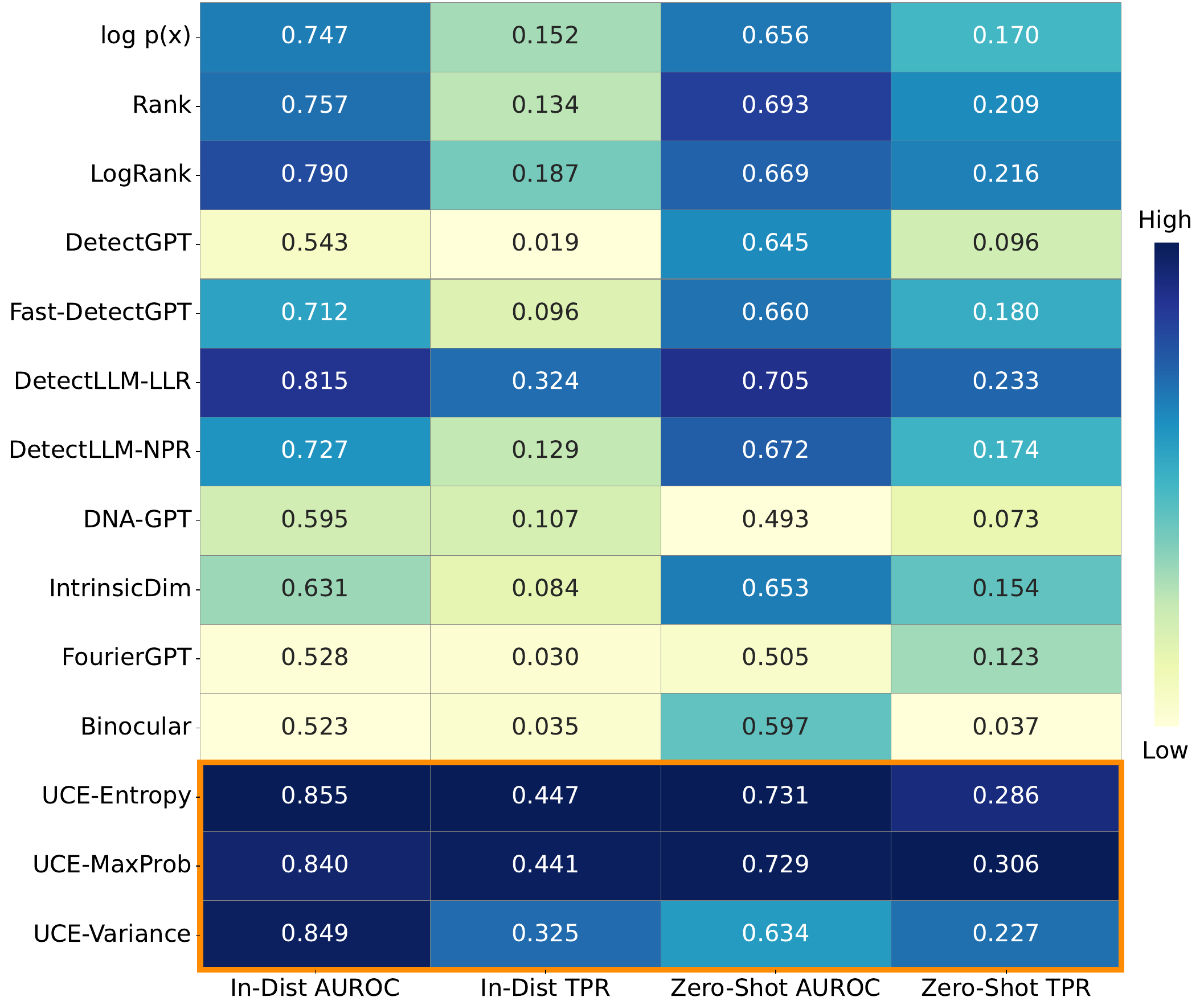}
    \caption{Average AUROC and TPR (at 1\% FPR) for model generation detection on In-Distribution (12 datasets) and Zero-Shot (20 datasets) scenarios.}
    \label{fig:exp_result}
\end{figure}

\subsection{Uncertainty Contraction Estimator}
\label{subsec:UCE}

The Uncertainty Contraction Estimator (UCE) operationalizes the Contraction
Hypothesis by quantifying the
uncertainty contraction in model-generated time series.
Given a candidate $\mathbf{X}_t$, we first sample $N$ time points
$t_1, t_2, \ldots, t_N$ with a fixed interval $\Delta t=t_{i+1}-t_{i}$.
% (the sensitivity analysis of the starting point $t_1$ is discussed in~\cref{ext_exp:ratio}).
For each $t_i$, UCE takes $\mathbf{X}{t_i}$ as the input to the TSLM and computes its internal probability distribution $\hat{P}_{t_i}$ as defined in~\cref{eq:distribution}, yielding a sequence of internal distributions.
\begin{equation}\label{eq:distribution}
    \hat{P}_{t_i}= p_{\theta}\left(\cdot|X_{1},\ldots,X_{t_{i}}\right),i=1,\ldots,N.
\end{equation}
By~\cref{theorem:continuous} the internal probability distribution coincides
with the underlying Gaussian noise distribution, which is unimodal and
relatively probability-concentrated.
Therefore, UCE focuses on a neighborhood $\mathcal{U}$ around the mean.
Within $\mathcal{U}$, UCE computes three uncertainty measures to capture different aspects of the distribution
concentration.
\begin{enumerate}
\item \emph{Entropy}
  $E=-\sum_{x\in\mathcal{U}}\hat{P}\left(x\right)\log \hat{P}\left(x\right)$,
  which captures the spread of probability mass.
\item \emph{Max-Probability} $P_{\max}=\max_{x\in \mathcal{U}}\hat{P}(x)$,
  where larger values imply lower uncertainty.
\item \emph{Variance}
  $\mathrm{Var}=\sum_{x\in\mathcal{U}}\left(x-\mu\right)^2\hat{P}(x)$ with $\mu$
  the local mean to quantify concentration.
\end{enumerate}
% The detailed selection of hyperparameters is given in~\cref{apdx:hyperparameter}.
For a selected metric $s\in\{E, P_{\max},\mathrm{Var}\}$, UCE calculates the
metric sequence $s_{t_{1}}, s_{t_{2}},\ldots, s_{t_{N}}$, and the overall UCE score is their mean, formulated as
$\mathrm{UCE}=\frac{1}{N}\sum_{i=1}^{N}s_{t_i}$.

\section{Experiments}

\subsection{Experimental Settings}

\subsubsection{Model and Datasets.}
We evaluate all detection methods using Chronos-T5 (large), a
point forecasting TSLM that discretizes continuous values over a fixed token
vocabulary and produces probability distributions.
We conduct experiments on 32 datasets across diverse domains (energy, finance, transportation), categorized as (1) in-distribution datasets, including 12 used during Chronos training; and
(2) zero-shot datasets, consisting of 20 previously unseen datasets.
The zero-shot setting evaluates the generalization of detection to unseen datasets to eliminate the effects of data leakage, and validates the ``black-box'' detection capability (see \cref{subsec:cross_model}).
For each dataset, we follow the default setup in Chronos to generate forecasts of horizon $H=64$, and use the corresponding $H$ observations as ground truth.
We treat forecast series as positive (model-generated) samples and real series as negative samples.

\subsubsection{Baselines.}
We compare UCE with 10 text-based baselines adapted for time series:
(1) \textbf{DNA‑GPT WScore}~\cite{dnagpt}, which measures the average log‑likelihood gap between regenerated and original text given a fixed prefix;
(2) \textbf{DetectGPT}~\cite{detectgpt} and its efficient variant \textbf{Fast‑DetectGPT}~\cite{fast_detectgpt}, which detect generation by measuring probability changes under model‑guided perturbations;
(3) \textbf{DetectLLM}~\cite{detectllm}, using Log‑Likelihood Log‑Rank Ratio (LRR) and Normalized Perturbed log‑Rank (NPR) metrics;
(4) \textbf{Intrinsic Dimension}~\cite{Intrinsic_dimension}, which classifies sequences by estimating the topological “intrinsic dimension” of token embeddings;
(5) \textbf{FourierGPT}~\cite{FourierGPT}, performing spectral analysis on token probability sequence;
(6) \textbf{Binocular}~\cite{Binocular}, leveraging two models to reduce the intrinsic perplexity of prefixes;
and (7) \textbf{Traditional metrics}~\cite{gltr,automatic,Release_Strategies}, including log‑likelihood $\log p(x)$, rank, and log‑rank.
The details of the baselines are documented in the Appendix.

\subsubsection{Evaluation Metrics.}
We use the Area Under the ROC Curve (AUROC) to evaluate the detection performance of UCE and the baseline methods.
Since AUROC masks performance at low false positive rates (FPR), which are critical in generation detection tasks~\cite{1stprinc,paraphrasing,dnagpt}, we also report the true positive rate (TPR) at a fixed FPR of 1\%~\cite{curiouscaseAUROC}.

\subsection{Overall Result}\label{sec:overall result}

We evaluate detection performance under in-distribution and zero-shot settings.
UCE consistently achieves state-of-the-art AUROC and TPR across all scenarios, with key results shown in \cref{fig:exp_result}.

\subsubsection{In-Distribution.} 
% UCE-Entropy achieves an AUROC of $0.855$, outperforming the strongest baseline DetectLLM-LLR ($0.815$) by $0.040$ and the baseline average ($0.670$) by $0.183$, and a TPR of $0.447$, surpassing DetectLLM-LLR ($0.324$)  by $0.123$ and the baseline average ($0.118$) by $0.329$.
UCE-Entropy achieves an AUROC of 0.855, outperforming the strongest baseline, DetectLLM-LLR (0.815), by 0.040, and exceeding the baseline average (0.670) by 0.183. Its TPR reaches 0.447, surpassing DetectLLM-LLR (0.324) by 0.123 and the baseline average (0.118) by 0.329.
% Although UCE-Variance shows slightly reduced TPR, all variants maintain substantial advantages.

\subsubsection{Zero-Shot.} 
% UCE-Entropy achieves an AUROC of $0.731$, exceeding DetectLLM-LLR ($0.705$) by $0.026$ and the baseline average ($0.632$) by $0.099$, and a TPR of $0.286$, outperforming DetectLLM-LLR ($0.233$) by $0.053$ and the average baseline ($0.151$) by $0.135$.
In the zero-shot setting, UCE-Entropy achieves an AUROC of 0.731, outperforming DetectLLM-LLR (0.705) by 0.026 and the baseline average (0.632) by 0.099. It also attains a TPR of 0.286, exceeding DetectLLM-LLR (0.233) by 0.053 and the average baseline (0.151) by 0.135.

The other two UCE metrics (Max-Probability and Variance) also show strong performance, with UCE–MaxProb approaching UCE–Entropy in TPR despite a modest AUROC gap; UCE–Variance is slightly weaker in TPR.
Overall, UCE-Entropy remains the top performer, UCE-MaxProb is a close second, and UCE-Variance may offer greater robustness under alternative conditions.
% The full experimental results are provided in \cref{tab:In-domain-64-full-1} and \cref{tab:0-shot-64-full-1}, with additional experiments in \cref{apdx:ext_exp}.
% The full experimental results, with additional experiments, are provided in the Supplementary Material (Section D).
The full experimental results, with additional experiments, are provided in the Appendix.
The experimental results suggest potential future work on hybrid or adaptive variant selection.

\subsection{Cross-Model Detection Performance of UCE}
\label{subsec:cross_model}
In this section, we evaluate the generalization of UCE on time series generated by two alternative TSLMs, Timer and Time-MoE.
We conduct experiments on 9 datasets covering multiple lengths and report both AUROC and TPR.

UCE achieves strong overall performance on both models, with the Entropy variant exhibiting particularly notable results, consistently achieving high AUROC and TPR across both models, as shown in \cref{tab:Timer} and \cref{tab:TimeMoE}.
% , and outperforms both the Max-Probability and Variance variants by a substantial margin, as shown in \cref{tab:Timer} and \cref{tab:TimeMoE}.
The Max-Probability and Variance variants yield weaker performance, especially for Timer-generated sequences. 
Notably, UCE performs particularly well in Time-MoE, possibly due to its Mixture-of-Experts architecture with better long-forecasting performance.
In \cref{subsubsec:recursive variance reduction} we show that although Timer and Time-MoE are not probabilistic forecasting models, they also exhibit uncertainty contraction, and uncertainty metrics demonstrate discriminative power between real and model-generated series.
% \cref{fig:hist} and \cref{fig:hist_TimeMoE} illustrate the significant uncertainty differences between real and model-generated series in all three metrics, indicating the discriminative power of uncertainty.

% We note that although Timer and Time-MoE are not probabilistic forecasting models, their predictions still exhibit significant differences in uncertainty compared to real series, as illustrated in \cref{fig:hist} and \cref{fig:hist_TimeMoE}.
% This is because they are trained with MSE or similar point-wise losses, which dominate time series modeling.
% We formally show in \cref{corollary:MSE_converge} that, under such losses, the predictive uncertainty tends to converge rapidly.
% This infers the ubiquity of the uncertainty contraction effect (or equivalently $\gamma_t<1$) and demonstrates the discriminative power of uncertainty.

Furthermore, the performance of UCE on both zero-shot datasets and cross-model settings also implies its potential for the ``black-box'' detection analogous to DetectGPT.
Specifically, given a time series that originates from an unknown model and unknown source (or the real world), UCE can perform detection by leveraging a locally deployed probabilistic model to compute uncertainty-based signals.

% \cref{tab:Timer} reveals that when applied to Timer-generated time series, UCE-Entropy exhibits relatively strong overall detection performance in both AUROC and TPR.
% However, the Max-Probability and Variance variants consistently underperform.
% Interestingly, all variants degrade slightly as the horizon increases.
% Results for Time-MoE, reported in \cref{tab:TimeMoE}, show that all three variants of UCE demonstrate strong detection performances in both metrics, and UCE-Entropy outperforms the other two variants.

% These cross-model results, along with the uncertainty histograms in \cref{fig:hist} and \cref{fig:hist_TimeMoE}, confirm that UCE, especially the entropy-based variant, generalizes robustly as a zero-shot detector across various TSLM architectures.

\begin{table}[t]
\small
\centering
% \resizebox{\columnwidth}{!}{
% \begin{tabular}{@{}lcccccccc@{}}
% \toprule
%  \multicolumn{1}{l}{\multirow{2}{*}{\makecell{\textbf{Time Series}\\
%  \textbf{Length $H$}}}}
% &  \multicolumn{2}{c}{$H=96$ Avg.}
% &  \multicolumn{2}{c}{$H=192$ Avg.}
% &  \multicolumn{2}{c}{$H=384$ Avg.}
% &  \multicolumn{2}{c}{$H=768$ Avg.}
%  \\
%   \cmidrule(lr){2-3} \cmidrule(lr){4-5}\cmidrule(lr){6-7}\cmidrule(lr){8-9}

%    & \makecell{AUROC} & \makecell{TPR}
%    & \makecell{AUROC} & \makecell{TPR}
%    & \makecell{AUROC} & \makecell{TPR}
%    & \makecell{AUROC} & \makecell{TPR}\\

% \midrule

% UCE -Entropy
% &0.833	&0.301
% &0.771  &0.280
% &0.765  &0.305
% &0.788  &0.366\\

% \quad -Max Prob
% &0.556  &0.041
% &0.508	&0.050
% &0.484	&0.032
% &0.542  &0.096\\
% \quad -Variance
% &0.635	&0.108
% &0.576	&0.102
% &0.564	&0.093
% &0.602  &0.179\\

% \bottomrule
% \end{tabular}
\setlength{\tabcolsep}{1mm}
\begin{tabular}{@{}lcccccc@{}}
\toprule
 \multicolumn{1}{l}{\multirow{2}{*}{\makecell{\textbf{Time Series}\\
 \textbf{Length $H$}}}}
&  \multicolumn{2}{c}{UCE-Entropy}
&  \multicolumn{2}{c}{UCE-Max Prob}
&  \multicolumn{2}{c}{UCE-Variance}
 \\
 \cmidrule(lr){2-3} \cmidrule(lr){4-5}\cmidrule(lr){6-7}
    & \makecell{AUROC} & \makecell{TPR}
   & \makecell{AUROC} & \makecell{TPR}
   & \makecell{AUROC} & \makecell{TPR}\\
\midrule
$H=96$
& 0.833 & 0.301
& 0.556 & 0.041
& 0.635 & 0.108 \\

$H=192$
& 0.771 & 0.280
& 0.508 & 0.050
& 0.576 & 0.102 \\

$H=336$
& 0.765 & 0.305
& 0.484 & 0.032
& 0.564 & 0.093 \\

$H=768$
& 0.788 & 0.366
& 0.542 & 0.096
& 0.602 & 0.179 \\
\bottomrule
\end{tabular}
% }
\caption{AUROC and TPR (at 1\% FPR) for Timer generation detection on 9 datasets. }
\label{tab:Timer}
\end{table}

% In-domain datasets and 0-shot datasets are split in 2 tables
\begin{table}[t]
\small
\centering
% \resizebox{\columnwidth}{!}{
% \begin{tabular}{@{}lcccccccc@{}}
% \toprule
%  \multicolumn{1}{l}{\multirow{2}{*}{\makecell{\textbf{Time Series}\\
%  \textbf{Length $H$}}}}
% &  \multicolumn{2}{c}{$H=96$ Avg.}
% &  \multicolumn{2}{c}{$H=192$ Avg.}
% &  \multicolumn{2}{c}{$H=336$ Avg.}
% &  \multicolumn{2}{c}{$H=720$ Avg.}
%  \\
%   \cmidrule(lr){2-3} \cmidrule(lr){4-5}\cmidrule(lr){6-7}\cmidrule(lr){8-9}

%    & \makecell{AUROC} & \makecell{TPR}
%    & \makecell{AUROC} & \makecell{TPR}
%    & \makecell{AUROC} & \makecell{TPR}
%    & \makecell{AUROC} & \makecell{TPR}\\

% %  \midrule
% %  % \hdashline
% % Prediction Length $H$ &64\\
% % MASE &0.663\\
% \midrule

% UCE -Entropy
% &0.829	&0.320
% &0.890  &0.392
% &0.957  &0.611
% &0.950  &0.561\\

% \quad -Max Prob
% &0.717  &0.292
% &0.773	&0.420
% &0.810	&0.475
% &0.845  &0.540\\

% \quad -Variance
% &0.745	&0.316
% &0.806	&0.422
% &0.856	&0.511
% &0.863  &0.566\\

% \bottomrule
% \end{tabular}
\setlength{\tabcolsep}{1mm}
\begin{tabular}{@{}lcccccc@{}}
\toprule
 \multicolumn{1}{l}{\multirow{2}{*}{\makecell{\textbf{Time Series}\\
 \textbf{Length $H$}}}}
&  \multicolumn{2}{c}{UCE-Entropy}
&  \multicolumn{2}{c}{UCE-Max Prob}
&  \multicolumn{2}{c}{UCE-Variance}
 \\
 \cmidrule(lr){2-3} \cmidrule(lr){4-5}\cmidrule(lr){6-7}
    & \makecell{AUROC} & \makecell{TPR}
   & \makecell{AUROC} & \makecell{TPR}
   & \makecell{AUROC} & \makecell{TPR}\\
\midrule
$H=96$
& 0.829 & 0.320
& 0.717 & 0.292
& 0.745 & 0.316 \\

$H=192$
& 0.890 & 0.392
& 0.773 & 0.420
& 0.806 & 0.422 \\

$H=336$
& 0.957 & 0.611
& 0.810 & 0.475
& 0.856 & 0.511 \\

$H=720$
& 0.950 & 0.561
& 0.845 & 0.540
& 0.863 & 0.566 \\
\bottomrule
\end{tabular}

% }
\caption{AUROC and TPR (at 1\% FPR) for Time-MoE generation detection on 9 datasets. }
\label{tab:TimeMoE}
\end{table}

\section{Discussion}\label{sec:discussion}

Our methodology is based on the contraction hypothesis, which depends on certain idealized assumptions (see \cref{subsec:Contraction Hypothesis}).
Despite the empirical support for the hypothesis from the experimental results, we revisit the foundational assumptions to strengthen our theoretical analysis.

\subsection{Idealized Model Assumption}
To establish an architecture-agnostic detection methodology, we postulate a theoretically optimal Ideal Model.
This abstraction avoids model-specific details to identify general behaviors, but is mathematically unrealizable in practice.
Existing TSLMs are approximations of this ideal, which undermine both the guarantee of perfect prediction (see \cref{theorem:dirac}) and the model's ability to faithfully recover the true series distribution (see \cref{theorem:continuous}).
Despite the limitation, practical models also exhibit similar uncertainty contraction behaviors in forecasting.

In particular, TSLMs are trained on finite datasets with finite parameters.
Since the noise samples are limited and the model is trained to minimize expected loss over these few realizations, it cannot fully capture the underlying noise distribution or the true trend.
Therefore, both the estimated trend $\hat{T}_t$ and noise $\hat{n}_t$ inevitably deviate from the truth,
and the trend deviation $\Delta T_t=|T_t-\hat{T}_t|$ accumulates as forecasting progresses.
Given this non-zero deviation, \cref{theorem:evaluation} does not strictly hold, and reducing forecast variance $\hat{\sigma}_t^2$ does not always improve evaluation performance.
Nevertheless, a moderate reduction in uncertainty constrains the expected evaluation error within the relatively small bound $\Delta T_t$ by concentrating probability between $T_t$ and $\hat{T}_t$.
% Despite that the expectation of model evaluation function is not monotonously increasing in forecast variances $\hat{\sigma}_t^2$ given non-zero trend deviation $\Delta T_t$, lower forecast uncertainty decreases the expectation of $\mathrm{Eva}$ to the bound $\Delta T_t$.
% Intuitively, a lower forecast uncertainty allocates more probability in the relatively small interval between $T_t$ and $\hat{T}_t$.
As the deviation grows, greater uncertainty does not yield sufficiently better performance by correcting the error and may result in unstable forecasts.
% When this deviation grows pronounced as error accumulates, increasing forecast uncertainty does not yield sufficiently better performance by ``pulling back'' the error of $\Delta T_t$.
% Moreover, excessive uncertainty can lead to chaotic or unstable predictions to undermine the practical utility.
This indicates that practical TSLMs also inherently exhibit a forecasting uncertainty contraction (i.e., $\gamma_t<1$), especially under recursive noise accumulation, as elaborated in \cref{subsubsec:recursive variance reduction}.
% This aligns with empirical evidence: Chronos adopts contractive sampling, while Timer and Time-MoE use point-wise loss (e.g., MSE) which enforce contraction (see \cref{corollary:MSE_converge}).
Hence, the contraction hypothesis still holds for practical models, allowing UCE to track predictive uncertainty across recursive steps.

% This indicates that practical TSLMs also inherently exhibit a contraction in forecasting uncertainty and the scaling factor $\gamma_t<1$, and such contraction may be more evident due to recursive noise accumulation under recursive forecasting.
% The contraction hypothesis therefore holds for the practical models, and experiments on multiple TSLMs (e.g., Chronos, Timer and TimeMoE), shown in \cref{fig:Timer_uncertainty} and \cref{fig:TimeMoE_uncertainty}, demonstrate such uncertainty decay.
% As long as the model reasonably approximates both components, the recursive reduction in predictive uncertainty can be reliably tracked by UCE.

\subsection{Gaussian Noise Assumption}
In this work, we assume Gaussian noise for analytical simplification: the sum of independent Gaussian noises remains Gaussian, and the variance fully characterizes the distribution.
Crucially, our proofs do not depend on the exact form of the Gaussian but only require the noise to be unimodal (greatest probability for trend), symmetric (equivalence for both positive and negative biases), and having a finite second moment (existence of variance).
By the additivity of the variance for independent variables, our results naturally extend to such noise models.
These properties are shared by most noise models commonly assumed in real-world time series analysis (e.g., Laplacian), and the experimental results (see Appendix) validate the broad applicability of the contraction hypothesis under various noise types.
% Moreover, by the additivity of the variance for independent variables, our results naturally extend to any noise model satisfying the mild assumptions.
% The experimental result of variation in uncertainty (see Appendix) validates the broad applicability of the contraction hypothesis under various noise types.
% \cref{fig:noise_type} illustrates that, under various types of noise (Laplacian, Cauchy, etc.), the entropy of model‐generated series consistently decreases and remains significantly lower than real series, thus validating the broad applicability of the contraction hypothesis.

\subsection{Modality Difference Analysis}
In this work, we extend textual detection methods to time series via UCE. 
Although uncertainty metrics (e.g., entropy) are relatively simple, we investigate the unique properties of time series versus text modalities (see \cref{ss:analysis}) to prove the optimality of uncertainty in discriminative power.
% In \cref{apdx:reevaluate}, we examine the performance of alternative textual detection methods, confirming the effectiveness of uncertainty metrics.

\section{Conclusion}

We investigate detecting TSLM‑generated time series and hypothesize that they exhibit progressively decreasing uncertainty--unlike real data.
Building on this, we propose the Uncertainty Contraction Estimator (UCE), which captures uncertainty to
distinguish model-generated from real series and is validated both theoretically and empirically.
Future work will further analyze the hypothesis under diverse model architectures and extend UCE to multivariate and batch-forecasting settings.

\section*{Acknowledgements}
This work was supported in part by National Key Research and Development Plan in
China (2023YFC3306100) and National Natural Science Foundation of China (62272372).

\bibliography{aaai2026}
% \input{ReproducibilityChecklist}
% \newpage
\clearpage

\appendix

\section{Definitions}
\label{apdx:definition}

\subsection{Time Series and Dataset}
We begin by defining the notion of time series throughout this work, and then formalize the dataset constituted by the series.

In classical time series analysis, a time series is typically decomposed into seasonal, trend, and noise components\cite{TimeSeriesAnalysis}.
The seasonal and trend components capture the low-frequency deterministic structure of the series, whereas the noise term accounts for the high-frequency stochastic fluctuations.
Henceforth, we merge the seasonal and trend components into a single ``trend term'', representing the deterministic and predictable portion of the series, while the noise term captures the unpredictable component.

\begin{definition}[Time Series Decomposition]\label{def:decomposition}
  Given a bounded discrete time series $X_t, t\in\mathbb{Z}$, it can be decomposed into the \textbf{Trend Sequence} $T_t$ and the \textbf{Noise Process} $n_t$, formulated as \cref{eq:def_decompose}.
\begin{equation}\label{eq:def_decompose}
    X_t=T_t+n_t,
\end{equation}
where $T_t$ is a bounded deterministic sequence such that $\sup_t|T_t|\leq M$, and $n_t$ is an independent Gaussian process with mean $0$.
We model the variances of $n_t$ by \cref{eq:linear_combination}
\begin{equation}\label{eq:linear_combination}
    \sigma_{t}^2=\alpha_{1}\sigma_{t-1}^2+\alpha_2\sigma_{t-2}^2+\ldots+\alpha_l\sigma_{t-l}^2,\; \sum_{i=1}^{l}\alpha_{i}=1
\end{equation}
\end{definition}

We first explain the motivation behind the modeling of the variance sequence in \cref{eq:linear_combination}.
Note the notion of GARCH~\cite{garch}, which is formulated by \cref{eq:GARCH}
\begin{equation}
\label{eq:GARCH}
    \sigma_t^2=\sigma^2\left(1-\alpha-\beta\right)+\alpha \zeta_{t-1}^2+\beta\sigma_{t-1}^2, \alpha+\beta<1.
\end{equation}
GARCH is motivated by the (exponential) regression of variance to the long-run mean $\sigma$ after any arbitrary shock $\zeta_{t}$.
Following this rationale and omitting the shock term $\zeta_{t}$, we replace the ``prior variance'' $\sigma^2$, which requires prior knowledge of the entire variance sequence,  with a ``posterior variance'' defined by the sample average $\bar{\sigma}_t^2=\sum_{i=2}^{l}\alpha_{i}'\sigma_{t-i}^2$, which is a certain weighted mean of sufficiently long historical variances.
\cref{eq:linear_combination} can therefore be reformulated as
\begin{equation}
    \sigma_{t}^2=\alpha_{1}\sigma_{t-1}^2+(1-\alpha)\bar{\sigma}_t^2.
\end{equation}

Note that this decomposition is \emph{unique} as $T_t=\E{X_t}$, $n_t=X_t-T_t$ for all $t$.
In the following analysis, we often relax the Gaussianity of noise and impose the weaker conditions that the distributions are unimodel, symmetric, has well-defined moment and is continuous almost everywhere.

In \cref{def:decomposition}, we treat the trend and noise of time series as independent components.
Building on this, we consider all possible pairings of the trend with all noises.
Let the set of $T$ be $\mathcal{T}$, and denote the set of $n_t$ by $N$.
We have the following definition.

\begin{definition}[Ideal Trend Cluster of $T$]\label{def:Cluster}
  For a trend sequence $T$ and noise process set $N$, \emph{Ideal Trend Cluster} of $T$ is defined as \cref{eq:Cluster}.
  \begin{equation}\label{eq:Cluster}
    T+N=\{T_t+n_t\;|\;n_t\in N\}.
  \end{equation}
  Intuitively, \(T+N\) is the coset of all time series that share the same deterministic trend \(T\) but differ in noise $n_t$.
\end{definition}

Based on the separation of trend and noise, we begin from the perspective of trends to analyze which trend sequences can be used to form the dataset for forecasting.
In time series forecasting, the model leverages past observations (and its internal state) to predict future values.
Wold’s decomposition theory shows that the exact prediction of predictable (trend) components requires infinite past information~\cite{TimeSeriesAnalysis}, and in practical forecasting, this is approximated by leveraging sufficiently long finite histories.
Consequently, at the trend level, the forecast must be a deterministic function of the same history: identical historical inputs cannot produce different trend predictions~\cite{TimeSeriesAnalysis}.
This implies that each distinct trend must correspond to an infinite unique history, and we formalize this distinction in \cref{def:distinct}.

\begin{definition}[Distinct Trends]
\label{def:distinct}
For trend sequences $T_t$, $T'_t$, if for all $\tau\in\mathbb{Z}$ there exists $s\leq \tau$ such that $T_s\neq T'_s$, then $T_t$ and $T'_t$ are \textbf{distinct}.
Otherwise, $T_t$ and $T'_t$ are \textbf{historically equivalent}, denoted as $T_t \sim_{H} T'_t$.
\end{definition}
This indicates that for two distinct trends $T_t, T'_t$, there is no minimal $s$ such that $T_t=T'_t$ for all $t\leq s$.
The relations defined in \cref{def:distinct} allow us to partition the trend set $\mathcal{T}$, enabling the construction of an ``ideal dataset'' whose trend sequences are mutually distinct.

The historical equivalence relationship, by definition, satisfies reflexivity, symmetry, and transitivity and is an equivalence relationship in the trend set. Consequently, it partitions the trend set into disjoint equivalence classes.
\begin{equation}\label{eq:partition}
    \mathcal{T}/\sim_{H}=\bigl\{\left[T\right]\;|\;T\in\mathcal{T}\bigr\},\quad \left[T\right]=\bigl\{T'\; |\; T'\sim_{H} T\bigr\},
\end{equation}
where $\bigcup_{\left[T\right]\in\mathcal{T}/\sim_{H}}\left[T\right]=\mathcal{T}, \left[T\right]\cap \left[T'\right]=\varnothing$ if $\left[T\right]\neq \left[T'\right]$.
Note that any two trends in different equivalence classes are distinct. According to the \textbf{Axiom of Choice}, we select any representative element from each equivalence class to obtain a \textbf{maximally distinct set}, and use it to construct the \textbf{ideal dataset}.

\begin{definition}[Ideal Dataset]\label{def:Dataset}
     A set $S\subseteq \mathcal{T}$ is \textbf{maximally distinct} if
     \begin{enumerate}
         \item (Pairwise distinct) $\forall T_{1}, T_{2}\in S, T_{1}\neq T_{2}\Rightarrow$ $T_{1}$ and $T_2$ are distinct;
         \item (Maximality) $\forall T\in \mathcal{T}, \exists T'\in S$ s.t. $T\sim_{H}T'.$
     \end{enumerate}
     The set of ideal trend clusters for all trend sequences in a \emph{maximally distinct set} $S$ is an \textbf{Ideal Dataset}, which is formulated as \cref{eq:dataset}
  \begin{equation}\label{eq:dataset}
    D=\{T+N\;|\; T_t\in S\}.
  \end{equation}
\end{definition}
In the construction of the maximally distinct set and ideal dataset, we choose representative elements from an uncountably infinite family of non-empty equivalence classes.
Although this construction is non-constructive and not unique, the theoretical existence is guaranteed according to the Axiom of Choice.
Notably, for any two maximally distinct sets $S_1$ and $S_2$, we have $\forall T_t^{(1)}\in S_1\;\exists T_t^{(2)}\in S_{2}$ such that $T_{t}^{(1)}\sim_{H}T_{t}^{(2)}$.
This indicates that the maximally distinct set is \emph{unique}  modulo the equivalence relation $\sim_{H}$.

Given the dataset of intrinsically distinct time series defined above, we next introduce the set of sufficiently long historical sequences used for forecasting.

\begin{definition}[History Set]
    Let $H$ be the minimal length of the forecast history.
    Given an ideal dataset $D$, we define the $k-$\textbf{history set} ($k\geq H$) as \cref{eq:k-history}.
    \begin{equation}\label{eq:k-history}
        \mathrm{HS}_k=\bigl\{\mathbf{X}_{-k:0}\; |\; X_t\in D\bigr\}.
    \end{equation}
    Notably, since $X_t=T_t+n_t$, we have that any two distinct histories in $\mathrm{HS}_{k}$ differ in either trend or noise component.
    The \textbf{history set} of an ideal dataset $D$ is defined as
    \begin{equation}\label{eq:history}
        \mathrm{HS}=\bigcup_{k\geq H}\mathrm{HS}_k.
    \end{equation}
\end{definition}
We first note that choosing $0$ as the historical end point is without loss of generality.
For any $\mathbf{X}_{-k+s:s}$, it can be mapped to $\mathbf{X}'_{-k:0}$, and therefore we have $\bigl\{\mathbf{X}_{-k+s:s}|\; X_t\in D\bigr\}=\bigl\{\mathbf{X}_{-k:0}|\; X_t\in D\bigr\},\forall s$.
For notational consistency, we fix the historical end point as $0$.

Intuitively, the history set $\mathrm{HS}$ contains all sufficiently long distinct sub-series.
Given any history $\mathbf{X}_{-k:0}\in\mathrm{HS}_k$, once a one-step forecast $X_1$ is predicted and appended, we obtain a new history $\mathbf{X}_{-k:1}\in \mathrm{HS}_{k+1}$, thus enabling recursive forecasting over arbitrary horizons.

\subsection{Ideal Model and Loss}
In this section, we introduce the concept of ideal model for time series forecasting.
In the forecasting task, a model takes a sufficiently long history as input and generates the model's \emph{internal probability distribution} to sample prediction values.
For the Large Time Series Models that sample real values at equal intervals into discrete set, formulated as \cref{eq:vocabulary}
\begin{equation}\label{eq:vocabulary}
    \CV=\{v_i\ |\ v_{i+1}-v_{i}=\Delta, -R\leq v_i \leq R \},
\end{equation}
the internal distributions are presented by a discrete probability mass function $p_\theta$ over $\CV$.
More concretely, given any history $\mathbf{X}_{-H:t-1}\in \mathrm{HS}$, the internal probability distribution is $p_{\theta,\Delta}(z_{t}\;|\;\mathbf{X}_{-H:t-1})$.
In the limit $R\to \infty$, $\CV$ extends to a countably infinite discretization of $\mathbb{R}$.
We denote $\CV$ as the \emph{vocabulary} and each $v\in \CV$ is a token, which represents an interval $\left[v-\frac{\Delta}{2},v+\frac{\Delta}{2}\right)$.
We consider our forecasting models and the internal probability distributions over this extended vocabulary.

Formally, we measure the discrepancy between the true next-step outcome and the model’s internal distribution using cross-entropy.
In the following section, we will demonstrate that if losses such as MSE are used, we immediately draw the conclusion of uncertainty convergence, and therefore the selection of cross-entropy is without loss of generality.
With fixed $\Delta$, we first propose the loss function for model training.
\begin{definition}[$\Delta$-Loss Function]
\label{def:delta_loss}
Given history $\mathbf{X}_{-H:0}$ and a fixed forecast length $\tau$,
the \textbf{$\Delta$-Loss Function} of forecast $\mathbf{X}_{\tau}$ is formulated as \cref{eq:delta_cross_entropy}.
\begin{equation}\label{eq:delta_cross_entropy}
  \mathcal{L}_{\theta,\Delta}\left(\mathbf{X}_\tau\right)=
  \sum_{i=1}^{\tau}-\log p_{\theta,\Delta}\left(z_{i}=X_i|\mathbf{X}_{-H:i-1}\right).
\end{equation}
\end{definition}
It should be noted that, given the stochastic process nature of $X_t$, $\mathcal{L}_{\theta,\Delta}\left(\mathbf{X}_\tau\right)$ is a \emph{random variable} dependent on $\mathbf{X}_\tau$.

Note that for any two time points $i\neq j$, the internal distributions $p_{\theta,\Delta}(z_{i}|\mathbf{X}_{-H:i-1})$ and $p_{\theta,\Delta}(z_{j}|\mathbf{X}_{-H:j-1})$ are independent if their histories are given.
Consequently, the loss in \cref{def:delta_loss} decomposes into a sum of mutually independent per-step losses as \cref{eq:per_step_loss}, and the global loss $\mathcal{L}_{\theta,\Delta}$ can be minimized by minimizing each $\ell_{\theta,\Delta}(X_i)$.
\begin{equation}\label{eq:per_step_loss}
    \ell_{\theta,\Delta}(X_i)=-\log p_{\theta,\Delta}\left(z_{i}=X_i|\mathbf{X}_{-H:i-1}\right).
\end{equation}

Based on the loss in \cref{def:delta_loss}, we define the discrete ideal model.
\begin{definition}[$\Delta$-Ideal Model]
\label{def:delta_model}
    For any finite time series segment $\mathbf{X}_{\tau}$ of any ideal trend cluster in the ideal dataset, the mathematical expectation of the $\Delta$-loss function is minimal for the \textbf{$\Delta$-Ideal Model}.
  \begin{equation}\label{eq:Delta_E_loss}
    \theta=\arg\min_\theta \E{\mathcal{L}_{\theta,\Delta}\left({\mathbf{X}}_{\tau}\right)},
    \;
    \forall X\in C(T), \forall C(T) \in D.
  \end{equation}
\end{definition}

We assume the existence of such models, which is discussed in the following paragraphs.
We first demonstrate the following property of the $\Delta$-ideal model.
\begin{restatable}{theorem}{DiscreteID}\label{theorem:discrete identical distribute}
    Let $p_{\theta,\Delta}(z_t)$ denote the $\Delta$-ideal model's internal probability distribution with history $\mathbf{X}_{-H:t-1}$, and denote the probability density function of $X_{t}$ to be $f_t$, we have
    \begin{equation}
        p_{\theta,\Delta}(z_t=v)=\int_{v-\frac{\Delta}{2}}^{v+\frac{\Delta}{2}}f_t(z)\mathrm{d}z.
    \end{equation}
    Let $f_{\theta,\Delta}=\frac{p_{\theta,\Delta}(z_t)}{\Delta}$, we have when $\Delta\to0$,
    \begin{equation}
        f_{\theta,\Delta}\to f_t\; a.e.
    \end{equation}

\end{restatable}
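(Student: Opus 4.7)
The plan is to reduce the proof to two independent pieces: first, identify the optimal discrete distribution by minimizing the expected per-step loss, and second, invoke the Lebesgue differentiation theorem to recover the density in the limit.

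First I would observe, using the remark immediately preceding the statement, that the global loss $\mathcal{L}_{\theta,\Delta}$ decomposes into mutually independent per-step losses $\ell_{\theta,\Delta}(X_i)$ once each history $\mathbf{X}_{-H:i-1}$ is fixed, so it suffices to minimize $\E{\ell_{\theta,\Delta}(X_i)\mid \mathbf{X}_{-H:i-1}}$ pointwise in the history. For any fixed history, $X_i$ has density $f_t$, and the token that $X_i$ maps to is $v(X_i)$, the unique $v\in\CV$ with $X_i\in[v-\Delta/2,v+\Delta/2)$. Writing $q_v \;=\; \int_{v-\Delta/2}^{v+\Delta/2} f_t(z)\,\mathrm{d}z$, the expectation of the per-step loss becomes
\begin{equation}
\E{\ell_{\theta,\Delta}(X_i)}
= -\sum_{v\in\CV} q_v \,\log p_{\theta,\Delta}(z_t=v).
\end{equation}
This is exactly the cross-entropy $H(q,p_{\theta,\Delta})$ between the probability vector $(q_v)_{v\in\CV}$ and the candidate distribution.

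Next I would apply Gibbs' inequality (equivalently, nonnegativity of KL divergence): for any probability mass function $p$ on $\CV$, $H(q,p) \geq H(q,q)$, with equality iff $p\equiv q$. Since the $\Delta$-ideal model ranges over all valid internal distributions, the minimizer is uniquely $p_{\theta,\Delta}(z_t=v)=q_v$, which establishes the first displayed identity. I would note that this argument is carried out separately for every admissible history, so the identity holds uniformly across $\mathrm{HS}$.

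For the second displayed limit, I would rewrite $f_{\theta,\Delta}(v) = \tfrac{1}{\Delta}\int_{v-\Delta/2}^{v+\Delta/2} f_t(z)\,\mathrm{d}z$, which is exactly the symmetric averaging operator appearing in the Lebesgue differentiation theorem. Since $f_t$ is a probability density, it is locally integrable on $\mathbb{R}$, so almost every $v$ is a Lebesgue point of $f_t$ and $f_{\theta,\Delta}(v)\to f_t(v)$ as $\Delta\to 0$.

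The main obstacle I expect is bookkeeping rather than mathematics: the $\Delta$-ideal model is defined via an expectation over full finite-horizon trajectories $\mathbf{X}_\tau$ of every ideal trend cluster in $D$, while my argument only minimizes one step at a time. I would resolve this by noting that for each history the per-step expected loss is bounded below by $H(q,q)$, these lower bounds are simultaneously attained by the choice $p_{\theta,\Delta}(\cdot\mid\mathbf{X}_{-H:i-1})=q$, and the aggregate minimum is therefore attained on the whole dataset at once. A minor secondary care point is ensuring $q_v>0$ on the support so that $\log p_{\theta,\Delta}$ is finite where needed; this is handled by restricting the Gibbs comparison to the support of $q$ and noting that the minimizer places mass zero off the support.
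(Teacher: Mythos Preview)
Your proposal is correct and follows essentially the same route as the paper: compute the expected per-step loss as a cross-entropy between the true binned probabilities $q_v$ and the candidate $p_{\theta,\Delta}$, minimize it to obtain $p_{\theta,\Delta}=q$, and then apply the Lebesgue differentiation theorem for the limit. The only cosmetic difference is that the paper reproves the cross-entropy minimization (Gibbs' inequality) from scratch via a Lagrange-multiplier/concavity argument, whereas you invoke it by name; your extra bookkeeping about simultaneous per-step attainment and support issues is more careful than the paper but not a different method.
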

The proof is given in \cref{proof:discrete identical distribute}.

\cref{theorem:discrete identical distribute} indicates that the $\Delta$-ideal model achieves a perfect prediction of the original time series with granularity $\Delta$.
More precisely, the probability mass function of the internal distribution, by endowing the probability of each token $v$ to the corresponding interval $\left[v-\frac{\Delta}{2},v+\frac{\Delta}{2}\right)$, is a simple-function approximation of $f_t$ with granularity $\Delta$, and as $\Delta\to 0$ it converges to $f_t$ in the sense of measures.

On the other hand, as $\Delta\to 0$, the vocabulary $\CV$ converges to $\mathbb{R}$ in the sense of Hausdorff metric. We now turn to the development of a continuous ideal model and its corresponding loss, and proceed to verify that this ``limit model'' is exactly the limit of $\Delta$-ideal model.

\begin{definition}[Loss Function]
\label{def:loss}
    Let $f_\theta(z_{t}\;|\;\mathbf{X}_{-H:t-1})$ be the probability density function of the internal probability distribution given $\mathbf{X}_{-H:t-1}$, the \textbf{Loss Function} is formulated as \cref{eq:cross_entropy}.
    \begin{equation}\label{eq:cross_entropy}
    \begin{split}
        \mathcal{L}_\theta\left(\mathbf{X}_\tau\right)
        &=\sum_{i=1}^{\tau}-\log f_\theta\left(z_{i}=X_{i}|\mathbf{X}_{-H:i-1}\right).
    \end{split}
\end{equation}
\end{definition}

Notably, the loss function is not the point-wise limit of $\Delta$-loss function.
Instead, it emerges through the limit of its normalized form, as shown in \cref{eq:loss_limit}
\begin{equation}\label{eq:loss_limit}
    \lim_{\Delta\to0}\sum_{i=1}^{\tau}-\log \frac{p_{\theta,\Delta}\left(z_i=X_i|\mathbf{X}_{-H:i-1}\right)}{\Delta}=\mathcal{L}_{\theta}(\mathbf{X}_{\tau}).
\end{equation}
With fixed $\Delta$ the left hand side of \cref{eq:loss_limit} is minimized in expectation simultaneously with $\Delta$-loss function.
Therefore, the loss function in \cref{def:loss} can be regarded as the ``limit'' of $\Delta$-loss function.

Similarly to \cref{def:delta_loss}, $\mathcal{L}_\theta\left(\mathbf{X}_\tau\right)$ is also a \emph{random variable}, and the loss alike can be minimized at each step.

\begin{definition}[Ideal Model]\label{def:model}
  For any finite time series segment $\mathbf{X}_{\tau}$ of any ideal trend cluster in the ideal dataset, the mathematical expectation of the loss function is minimal for the \textbf{Ideal Model}.
  \begin{equation}\label{eq:E_loss}
    \theta=\arg\min_\theta \E{\mathcal{L}_\theta\left({\mathbf{X}}_{\tau}\right)},
    \;
    \forall X\in C(T), \forall C(T) \in D.
  \end{equation}
\end{definition}
Assuming the existence of the ideal model, we show that it satisfies the following property.
We show that the ideal model \emph{can} be regarded as a ``limit model'' of the $\Delta$-ideal model in terms of internal probability distribution.
\begin{restatable}{theorem}{ID}\label{theorem:identical distribute}
    Let $f_{\theta}(z_t)$ denote the ideal model's internal probability distribution with history $\mathbf{X}_{-H:t-1}$, and denote the probability density function of $X_{t}$ to be $f_t$, we have
    \begin{equation}
        f_{\theta}(z_t)\equiv f_t,\;a.e.
    \end{equation}

\end{restatable}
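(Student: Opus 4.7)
The plan is to mirror the argument behind Theorem \ref{theorem:discrete identical distribute}, but carried out directly in the continuous setting: reduce the global loss minimization of Definition \ref{def:model} to a per-step, per-history cross-entropy minimization and invoke Gibbs' inequality. By the same conditional independence that justifies \eqref{eq:per_step_loss}, the expected loss decomposes as a sum of per-step terms that can be minimized independently once we condition on the realized history, so it is enough to show that at each step the density $f_\theta(\cdot\mid \mathbf{X}_{-H:t-1})$ must coincide with $f_t$ almost everywhere.

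Concretely, first I would use linearity of expectation and the tower property to rewrite
\begin{equation*}
\E{\mathcal{L}_\theta(\mathbf{X}_\tau)} = \sum_{i=1}^{\tau} \E{-\int_{\mathbb{R}} f_i(v)\log f_\theta(v\mid \mathbf{X}_{-H:i-1})\,\mathrm{d}v},
\end{equation*}
where the outer expectation is taken over the history $\mathbf{X}_{-H:i-1}$; here I rely on $X_i = T_i + n_i$ with $n_i$ an independent Gaussian, so the conditional density of $X_i$ given its history coincides with $f_i$ itself. Second, for each fixed history $h$ the inner integral equals $H(f_i) + \mathrm{KL}(f_i \,\|\, f_\theta(\cdot\mid h))$, where $H(f_i)$ is the differential entropy of $f_i$. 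By non-negativity of KL divergence (Gibbs' inequality), this quantity is lower-bounded by $H(f_i)$, with equality if and only if $f_\theta(\cdot\mid h) = f_i$ almost everywhere. Since $H(f_i)$ does not depend on $\theta$, the infimum is attained precisely when $f_\theta(\cdot\mid h) \equiv f_i$ a.e.\ for every history $h$ of positive probability in the ideal trend cluster; by Definition \ref{def:model} the ideal model achieves this infimum, and aggregating over $i$ and over clusters yields $f_\theta(z_t) \equiv f_t$ almost everywhere.

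The main obstacle is the measure-theoretic bookkeeping required to make the above rigorous: the differential entropy and cross-entropy must be well-defined and finite so that the minimization is meaningful, and the per-history equalities must be promoted into a single global statement. Finiteness of $H(f_i)$ and absolute integrability of the integrand follow from the bounded trend and Gaussian noise structure imposed in Definition \ref{def:decomposition}, while a Fubini-style argument, together with the uniqueness of the decomposition in \eqref{eq:def_decompose} and the distinctness property of Definition \ref{def:distinct}, promotes the pointwise-in-$h$ equalities into the stated conclusion. An alternative route would be to pass to the $\Delta\to 0$ limit of Theorem \ref{theorem:discrete identical distribute} via \eqref{eq:loss_limit}, but that would require an additional uniform-convergence justification to interchange the limit with the $\arg\min$, making the direct Gibbs approach cleaner.
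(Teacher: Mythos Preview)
Your proposal is correct and follows essentially the same approach as the paper: both reduce the problem to a per-step cross-entropy minimization and apply Gibbs' inequality to conclude $f_\theta\equiv f_t$ a.e. The only cosmetic difference is that the paper re-derives Gibbs' inequality as a separate lemma via a Lagrange-functional/variational argument, whereas you invoke it directly through the decomposition $-\int f_i\log f_\theta = H(f_i)+\mathrm{KL}(f_i\,\|\,f_\theta)$ and non-negativity of KL; your treatment of the per-step decomposition and the measure-theoretic caveats is in fact more explicit than the paper's.
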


% \begin{theorem}\label{theorem:identical distribute}
%     Let $f_{\theta}(z_t)$ denote the ideal model's internal probability distribution with history $\mathbf{X}_{-H:t-1}$, and denote the probability density function of $X_{t}$ to be $f_t$, we have
%     \begin{equation}
%         f_{\theta}(z_t)\equiv f_t,\;a.e.
%     \end{equation}
% \end{theorem}
The proof is given in \cref{proof:identical distribute}.
\cref{theorem:identical distribute} establishes that the internal distribution of ideal model coincides almost everywhere with the true probability distribution, which is exactly the limit distribution of $\Delta$-ideal models when $\Delta\to0$.
This indicates that the ideal model is exactly the ``limit model'' of $\Delta$-ideal model, and the definition is well-defined.

On the other hand, \cref{theorem:identical distribute} shows that, given history $\mathbf{X}_{-H:t-1}$ the ideal model can recover the true probability distribution, indicating the computability of $\sigma_{t}^2$.
Since it follows \cref{eq:linear_combination}, we conclude that the model has internalized this variance recursion.

\subsection{Model Evaluation Function}
In practice, once a forecasting model produces a candidate sequence $\hat{\mathbf{X}}_{\tau}$, we need a principled way to assess its accuracy against the true sequence $\mathbf{X}_{\tau}$, which measure the ``distance'' between $\hat{\mathbf{X}}_{\tau}$ and $\mathbf{X}_{\tau}$.
The notion of distance is grounded in the $\ell_p$ norm in finite dimensions, and drawing inspiration from MAE and MSE, we introduce a positive monotonic function to regulate the error.
We formalize the aforementioned concepts and generalize them as \cref{def:evaluation}.
\begin{definition}[Model Evaluation Function]\label{def:evaluation}
  For any finite time series segment $\mathbf{X}_{\tau}$ of any ideal trend cluster in the Ideal Dataset, and its corresponding forecast series $\mathbf{\hat{X}}_{\tau}$, the \textbf{evaluation function} is formulated as \cref{eq:evaluation}.
  \begin{equation}\label{eq:evaluation}
    \mathrm{Eva}=
    \left[\sum_{i=1}^{\tau}g\left(\left|X_i-\hat{X}_i\right|^{p}\right)\right]^\frac{1}{p},
  \end{equation}
  where $g$ is a monotonically increasing non-negative function and $p\in [1,
  +\infty]$.
\end{definition}
\cref{def:evaluation} introduces a general form of evaluation function that encompasses common metrics such as MAE and MSE.
When $g$ is a positive proportional function, that is, $g(x)=\alpha x, \alpha>0$, the evaluation function is proportional to the norm $\ell_p$.

\subsection{Theoretical Existence of Ideal Model}
\label{subsec:ideal model existence}
The perfect prediction property of ideal model, shown in \cref{theorem:identical distribute}, provides another view of the ideal model, and reduces the problem of loss minimization over all time series to the identification of each single deterministic function.

We first demonstrate that the ideal model, which is a mapping from sequences to distribution, can potentially be an injection, inferring that no two distinct input sequences are forced to produce the same distribution.
The length of history is arbitrary as $k\geq H$, and the histories therefore constitute a set $\bigcup_{H\geq L}\mathbb{R}^{H}$.
On the other hand, since the probability density is continuous almost everywhere, it is completely determined by its values on a countable dense subset of $\mathbb{R}$, and the space of such distributions can be identified with $\mathbb{R}^{\mathbb{N}}$.
The ideal model is therefore the function $f:\bigcup_{H\geq L}\mathbb{R}^{H}\to \mathbb{R}^\mathbb{N}$.
According to the Cantor-Bernstein theorem, we have
\begin{equation}
    \mathrm{Card}\left(\bigcup_{H\geq L}\mathbb{R}^{H}\right)=\mathrm{Card}\left(\mathbb{R}^\mathbb{N}\right),
\end{equation}
which provides the prerequisite for the injection condition.

We next provide that, at any prescribed precision, the ideal model's mapping $f$ is Type-2 computable.
Moreover, every finite-term approximation of $f$ (e.g., simple function approximation) is computable, and is applied in real-world model applications.
Since sequence modeling architectures such as RNNs~\cite{RNN_Turing,NNTuring} and Transformers~\cite{TransformerTuring,AttentionTuring} are Turing-complete and can approximate any computable or Type-2 computable functions with infinite memory, runtime and arbitrary precision.
This indicates the theoretical existence of the ideal model.

\section{Theoretical Analysis of Contraction Hypothesis}
\label{apdx:proof}

In \cref{ss:analysis}, we provided a qualitative outline of the proof strategy
for the Contraction Hypothesis.
In this section, we formally state this hypothesis and provide the proofs of all
critical propositions.

\subsection{Distributional Consistency}

In \cref{def:model} and \cref{theorem:identical distribute}, we propose the
concept of ideal model and generalize that it makes distribution-perfect
predictions given histories.
The concept of ideal model derives directly from its discrete form (see
\cref{def:delta_model}) with a corresponding property (see
\cref{theorem:discrete identical distribute}).
We now prove the properties of the ideal models in both discrete and continuous
scenarios.
\DiscreteID*
\begin{proof}[Proof of \cref{theorem:discrete identical distribute}]
\label{proof:discrete identical distribute}
    We begin from a lemma.
    \begin{lemma}\label{lemma:discrete}
  Given positive integer $n\geq 1$ and fixed positive reals $a_1,\ldots, a_n$
  such that $\sum_{i=1}^{n}a_{i}=1$, for positive reals $b_{1},\ldots, b_{n}$
  such that $\sum_{i=1}^{n}b_{i}=1$, the following equation holds:
  \begin{equation}
    \sum_{i=1}^{n}a_{i}\ln b_{i}\leq \sum_{i=1}^{n}a_{i}\ln a_{i},
  \end{equation}
  and equation holds if and only if $a_{i}=b_{i}, \forall i$.
\end{lemma}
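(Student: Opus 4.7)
The plan is to recognize this as the classical Gibbs' inequality for finite discrete distributions, and to prove it by reducing to the elementary one-variable inequality $\ln x \leq x-1$ (equivalently $x - 1 - \ln x \geq 0$) for $x > 0$, with equality if and only if $x = 1$. This elementary inequality follows from strict concavity of $\ln$ on $(0,\infty)$, or from a single derivative computation applied to $\phi(x) = x - 1 - \ln x$ whose unique minimum is at $x = 1$ with value $0$.

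First I would rewrite the target inequality in the equivalent KL-divergence form
\begin{equation*}
    \sum_{i=1}^{n} a_{i} \ln \frac{a_{i}}{b_{i}} \geq 0,
\end{equation*}
which is legitimate since all $a_i, b_i$ are strictly positive. The main step is then to set $x_i = b_i / a_i > 0$, apply $\ln x_i \leq x_i - 1$, and multiply through by $a_i > 0$. Summing gives
\begin{equation*}
    \sum_{i=1}^{n} a_{i} \ln \frac{b_{i}}{a_{i}} \leq \sum_{i=1}^{n} a_{i}\!\left(\frac{b_{i}}{a_{i}} - 1\right) = \sum_{i=1}^{n} b_{i} - \sum_{i=1}^{n} a_{i} = 1 - 1 = 0,
\end{equation*}
where the two normalization hypotheses are used precisely in the final cancellation. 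Rearranging recovers the desired inequality $\sum_i a_i \ln b_i \leq \sum_i a_i \ln a_i$.

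For the equality characterization, I would invoke the strict equality clause of $\ln x \leq x - 1$, which holds iff $x = 1$. Since the $a_i$ are strictly positive, the summed inequality above is tight if and only if each term is tight, i.e., $b_i / a_i = 1$ for every $i$, giving $a_i = b_i$ for all $i$. Conversely, if $a_i = b_i$ for all $i$ the two sides are trivially equal.

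I do not anticipate a serious obstacle here: the only minor care required is to justify strictness in the equality case (every term $a_i(x_i - 1 - \ln x_i)$ is nonnegative, so the sum vanishes iff each summand does, which uses $a_i > 0$), and to note that the hypothesis $b_i > 0$ is needed to ensure $\ln b_i$ is defined so that the ratio $b_i / a_i$ is a valid positive argument of $\ln$. The normalization $\sum a_i = \sum b_i = 1$ enters only once, in the telescoping of the linear bound, which is why this form of Gibbs' inequality is precisely suited to the downstream application of identifying the internal probability distribution with $f_t$ in the proof of \cref{theorem:discrete identical distribute}.
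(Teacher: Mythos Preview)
Your proof is correct and is the standard elementary derivation of Gibbs' inequality via the pointwise bound $\ln x \leq x-1$. The paper, however, takes a different route: it treats $\sum_i a_i \ln b_i$ as a function of $(b_1,\ldots,b_n)$ subject to the constraint $\sum_i b_i = 1$, forms the Lagrangian, finds the unique stationary point $b_i = a_i$ from the first-order conditions $a_i/b_i = \lambda$, checks that the Hessian is negative definite there, and then invokes strict concavity of $\ln$ to upgrade the local maximum to a global one. Your argument is shorter and more self-contained, and it makes the equality case immediate from the strictness of $\ln x \leq x-1$. The paper's Lagrange-multiplier approach is slightly heavier here, but it has the virtue of being a template that they reuse verbatim in the continuous analogue (\cref{lemma:continuous}), where the same computation is carried out with variational derivatives in place of partials; your method would also extend to that setting, but the parallel with the paper's continuous lemma would be less visually transparent.
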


\begin{proof}[Proof of \cref{lemma:discrete}]
  Under the constraint of $\sum_{i=1}^{n}b_{i}=1$, the Lagrange function is
  formulated as follows:
  \begin{equation}
    L= \sum_{i=1}^{n}a_{i}\ln b_{i}-\lambda\left(\sum_{i=1}^{n}b_{i}-1\right).
  \end{equation}
  The partial derivatives of the function are
  \begin{equation}
    \frac{\partial L}{\partial b_{i}} =\frac{a_{i}}{b_{i}}-\lambda, \quad
    \frac{\partial^2 L}{\partial b_{i}^2} =-\frac{a_{i}}{b_{i}^2}, \quad
    \frac{\partial^2 L}{\partial b_{i}\partial b_{j}} =0, \quad i\neq j,
  \end{equation}
  and the extremum condition is satisfied when $\frac{a_{i}}{b_{i}}=\lambda$.
  According to the constraint, we have $\lambda=1, a_{i}=b_{i}$.
  At this point, the Hessian matrix is negative definite, indicating that a
  local maximum is obtained.

  Due to the strict concavity of the logarithmic function, it follows that $L$
  is strictly concave and therefore the maximum value is achieved at the local maximum.
\end{proof}
We have
\begin{equation}
\begin{split}
    &\E{\mathcal{L}}
    \\&=\sum_{v\in\mathcal{V}}P\left\{v-\frac{\Delta}{2}\leq X_{i}\leq v+\frac{\Delta}{2}\right\}\left(-\log p_{\theta,\Delta}\left(z_{i}=v\right)\right)
    \\&=\sum_{v\in\mathcal{V}}\int_{v-\frac{\Delta}{2}}^{v+\frac{\Delta}{2}}f_{t}(z)\mathrm{d}z\left(-\log p_{\theta,\Delta}\left(z_{i}=v\right)\right).
\end{split}
\end{equation}
According to \cref{lemma:discrete}, the expectation is minimized when
\[p_{\theta,\Delta}(z_t=v)=\int_{v-\frac{\Delta}{2}}^{v+\frac{\Delta}{2}}f_t(z)\mathrm{d}z\]
holds for all $v\in\CV$.
Notably, we have $f_{\theta,\Delta}$ is measurable for all $\Delta>0$ and $f_t$ is measurable.
    According to Lebesgue differential theorem, we have
    \[\lim_{\Delta\to0}f_{\theta,\Delta}(v)=\lim_{\Delta\to 0}\frac{1}{\Delta}\int_{v-\frac{\Delta}{2}}^{v+\frac{\Delta}{2}}f_{t}(z)\mathrm{d}z=f_{t}(v)\]
    for almost every $v\in\mathbb{R}$.
\end{proof}

\ID*
\begin{proof}[Proof of \cref{theorem:identical distribute}]
\label{proof:identical distribute}
We begin with a lemma.
\begin{lemma}\label{lemma:continuous}
  Let $f$ be a fixed positive measurable function such that
  $\int_{-\infty}^\infty  f(x)\dd x=1$.
  For a positive measurable function $g$ such that $\int_{-\infty}^\infty
  g(x)\dd x=1$, then the following equation holds:
  \[
    \int_{-\infty}^\infty f(x)\ln g(x)\dd x
    \leq \int_{-\infty}^\infty f(x)\ln f(x)\dd x.
  \]
  The maximum value occurs if and only if $f=g$ almost everywhere.
\end{lemma}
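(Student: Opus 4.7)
The plan is to prove this as the continuous analogue of Gibbs' inequality, recovering the spirit of the discrete Lemma~\ref{lemma:discrete} in an integral setting. Since Lagrange multipliers in a function space would require substantial extra machinery (variational calculus, constraint qualification on $L^1$), I would instead argue pointwise via the elementary one-variable bound $\ln t \leq t - 1$ for all $t > 0$, with equality if and only if $t = 1$. This converts the problem into a direct integration argument.

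The first step is to set $t = g(x)/f(x)$, which is well-defined and positive whenever $f(x) > 0$, and obtain $\ln(g(x)/f(x)) \leq g(x)/f(x) - 1$ pointwise on the support of $f$. Multiplying by $f(x) \geq 0$ preserves the inequality, yielding $f(x)\ln g(x) - f(x)\ln f(x) \leq g(x) - f(x)$. Integrating over $\mathbb{R}$ and using the normalizations $\int f\,\dd x = \int g\,\dd x = 1$, the right-hand side collapses to zero, which rearranges to the stated bound. For the equality case, $\ln t = t - 1$ forces $g/f \equiv 1$ almost everywhere on $\{f > 0\}$, and the two normalization constraints then pin down $f = g$ almost everywhere on $\mathbb{R}$.

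The main delicate point will be integrability: $f\ln f$ or $f\ln g$ may individually diverge to $-\infty$, so one cannot split the integrated inequality into two separate pieces without care. I would handle this by treating the difference $f\ln(g/f)$ as a single integrand and dominating its positive part by $g - f$, which is integrable; the Lebesgue integral of $f\ln(g/f)$ is then well-defined in the extended sense and bounded above by $\int(g - f)\,\dd x = 0$. An equally clean alternative is to apply Jensen's inequality to the convex function $-\ln$ with respect to the probability measure $f\,\dd x$, yielding $-\int f\ln(g/f)\,\dd x \geq -\ln\int g\,\dd x = 0$; the strict convexity of $-\ln$ then supplies the equality characterization directly. Either route gives a self-contained proof that plugs immediately into the subsequent use in \cref{theorem:identical distribute}.
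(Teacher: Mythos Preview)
Your proof is correct and more elementary than the paper's. The paper takes exactly the route you deliberately avoided: it writes the Lagrange functional $L(g)=\int f\ln g\,\dd x-\lambda(\int g\,\dd x-1)$, computes the first and second variations $\delta L$ and $\delta^2 L$, finds the stationary point $f=g$ a.e.\ with $\lambda=1$, and then checks global optimality by verifying that $L$ is concave in $g$ via the concavity of $\ln$. Your argument via $\ln t\le t-1$ (or equivalently Jensen applied to $-\ln$ under the measure $f\,\dd x$) bypasses the variational apparatus entirely and handles the equality case through strict concavity of $\ln$ at a single point rather than through second-variation negativity. You also flag the integrability subtlety of $f\ln f$ and $f\ln g$ separately and resolve it by working with $f\ln(g/f)$ as a single integrand with integrable positive part; the paper's variational proof does not address this issue. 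Either proof plugs into \cref{theorem:identical distribute} identically, but yours is self-contained and avoids the implicit function-space regularity assumptions that a rigorous variational argument would need.
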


\begin{proof}[Proof of \cref{lemma:continuous}]
  We construct the Lagrange functional as follows:
  \[
    L(g)=\int_{-\infty}^\infty f(x)\ln g(x)\dd x-\lambda\left(\int_{-\infty}^\infty g(x)\dd x-1\right)
  \]
  The variations of this functional are
  \begin{equation}
  \begin{split}
     &\delta L(g)=
    \int_{-\infty}^\infty \left(\frac{f(x)}{g(x)}-\lambda\right)\delta b(x)\dd x,
    \\&\delta^2 L(g)=
    \int_{-\infty}^\infty -\frac{f(x)}{g(x)^2}\left(\delta b(x)\right)^2\dd x.
  \end{split}
  \end{equation}
  The functional extremum condition holds when $\delta L(g)\equiv 0,\forall b$, indicating $\frac{f(x)}{g(x)}-\lambda=0$ almost everywhere.
  According to the constraint, we have $\lambda=1$, and $f(x)=g(x)$ holds almost everywhere.
  At this point, the second variation is always less than $0$, indicating a local maximum.

  Note that for any positive measurable function $g_1, g_2$ and any $\alpha\in
  \left[0,1\right]$, we have
  \begin{align*}
    &L\left(\alpha g_1(x)+(1-\alpha)g_2(x)\right) \\
    &=\int_{-\infty}^\infty f(x)\ln\left(\alpha g_1(x)+(1-\alpha)g_2(x)\right)\dd x \\
    &\leq \int_{-\infty}^\infty \alpha f(x)\ln g_1(x)+f(x)(1-\alpha)\ln g_2(x)\dd x \\
    &=\alpha L(g_1(x)) + (1-\alpha)L(g_2(x)).
  \end{align*}
  Thus, the functional $L$ is concave and the maximum value is obtained when $f=g$ holds almost everywhere.
\end{proof}
    For any time series, we have
  \begin{equation}
    \begin{split}
      \E{\mathcal{L}}=-\int_{-\infty}^\infty f_{t}(v)\log f_\theta(z_t=v)\dd v
    \end{split}
  \end{equation}
  According to \cref{lemma:continuous}, the expectation of loss is minimized when $f_\theta=f_{t}$ holds almost everywhere.
\end{proof}

Specifically, we propose a corollary of \cref{theorem:identical distribute} (see \cref{theorem:dirac}) that if the true sequence is entirely deterministic (i.e., has zero uncertainty), ideal models make value-perfect predictions.
\MainBodyDirac*
\begin{proof}[Proof of \cref{theorem:dirac}]
\label{proof:dirac}
    \cref{theorem:dirac} is a special case of \cref{theorem:continuous} where $f_t=\delta$.
    In this proof, we analyze the discrete form, as the continuous ideal model is the limit of the discrete $\Delta$-ideal model.

  According to Chebyshev's inequality $P\left\{X_t=T_t\right\}=1$.
  We calculate the loss in this time point.
  \begin{equation}
      \begin{split}
          \E{\mathcal{L}(X_t)}
      &=\sum_{v\in\CV}P\left\{X_t=v\right\}\cdot\left(-\log p_\theta\left(z_t=v\right)\right)
      \\&=-\log p_\theta\left(z_t=T_t\right)\geq 0
      \end{split}
  \end{equation}
  According to \cref{lemma:discrete}, the equation holds if and only if $\arg\max_{v}p_\theta(z_t=v)=T_t,
  p_\theta(z_t=T_t)=1$, with minimal loss expectation.
\end{proof}

\subsection{Sampling-Induced Variance Scaling}
When given any history ${\mathbf{X}}_{-H:t-1}$ to predict $\hat{X}_t$, TSLMs utilize the internal probability distributions $f_{\theta}$ to obtain the sampling distributions $\hat{f}_t$.
The modification of $f_{\theta}$ derives from sampling strategies, including scaling and symmetric truncation.
We first prove the following property of the strategies.
\begin{theorem}\label{theorem:modification}
    The sampling strategies preserves the expectation of $f_{\theta}$.
\end{theorem}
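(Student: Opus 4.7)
The plan is to exploit the fact that $f_\theta$ is symmetric about its mean and then show that each sampling modification preserves this symmetry, from which expectation preservation follows at once. By \cref{theorem:identical distribute}, $f_\theta\equiv f_t$ almost everywhere, and $f_t$ is the density of $T_t+n_t$ with $n_t$ zero-mean Gaussian (or, more generally, unimodal and symmetric about zero). Hence $f_\theta(T_t+u)=f_\theta(T_t-u)$ a.e., and the standard observation that any density symmetric about $\mu$ with a finite first moment has expectation $\mu$ gives $\E{v}=T_t$ under $f_\theta$.

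Next I would dispatch each sampling family listed in the paper. For scaling (e.g.\ temperature sampling), the modified density takes the form $\hat{f}_\theta(v)\propto f_\theta(v)^{1/\tau}$; the pointwise identity $f_\theta(T_t+u)^{1/\tau}=f_\theta(T_t-u)^{1/\tau}$ is preserved under renormalization, so $\hat{f}_\theta$ is symmetric about $T_t$. For symmetric truncation (e.g.\ top-$k$ or top-$p$), the truncation window $\mathcal{U}$ is by definition symmetric about $T_t$; because $f_\theta$ is unimodal with mode at $T_t$, the values of largest probability form exactly such an interval. Zeroing $\hat{f}_\theta$ outside $\mathcal{U}$ and renormalizing preserves this symmetry, and the two operations compose trivially when combined. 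Applying the same first-moment argument to $\hat{f}_\theta$ then yields $\E{v}=T_t$ under $\hat{f}_\theta$ as well, which is the claim.

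The main obstacle is the discrete setting: the vocabulary $\CV$ is an evenly spaced grid that is not generally aligned with $T_t$, so a literal top-$k$ selection on $\CV$ need not be geometrically symmetric about $T_t$, and strict expectation preservation can fail by an $O(\Delta)$ amount. I would handle this either by (i) passing to the $\Delta\to 0$ limit via \cref{theorem:discrete identical distribute}, where the top-$k$ sets converge to symmetric intervals and the induced discrete expectations converge to $T_t$ by dominated convergence, or (ii) adopting the convention (consistent with the qualitative picture in \cref{fig:modification}) that ``symmetric truncation'' means the truncation set is explicitly chosen to be symmetric about $T_t$, with ties broken in matched pairs on either side. Either reading restores exact expectation preservation, with the continuous-limit route being cleaner theoretically and the combinatorial convention being more faithful to practical implementations.
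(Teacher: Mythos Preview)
Your proposal is correct and follows essentially the same route as the paper: both arguments use the symmetry of $f_\theta$ about its mean (inherited from the zero-mean symmetric noise) and check that temperature scaling $f\mapsto f^{1/\tau}/\int f^{1/\tau}$ and symmetric truncation each preserve this symmetry, whence the expectation is fixed. The paper carries this out by explicit integral splitting, while you phrase it as ``symmetry in $\Rightarrow$ symmetry out $\Rightarrow$ same mean,'' but the content is identical; your additional discussion of the $O(\Delta)$ grid-misalignment issue in the discrete setting is a legitimate refinement that the paper simply does not raise.
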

\begin{proof}[Proof of \cref{theorem:modification}]
    Denoting $\int_{-\infty}^{\infty}f(z)\mathrm{d}z=E_z$.
        For any symmetric probability density function $f$ over $\mathbb{R}$, the scaling transformation can be formulated by \cref{eq:scaling}
        \begin{equation}\label{eq:scaling}
            f_{T}(z)=\frac{f(z)^{\frac{1}{T}}}{\int_{-\infty}^{\infty}f(z)^{\frac{1}{T}}\mathrm{d}z}.
        \end{equation}
        Since $f(E+z)=f(E-z)$ holds for all $z\in\mathbb{R}$, we therefore have
        \begin{equation}
        \begin{split}
            &\int_{-\infty}^{\infty}\frac{zf(z)^{\frac{1}{T}}}{\int_{-\infty}^{\infty}f(x)^{\frac{1}{T}}\mathrm{d}x}\mathrm{d}z
            \\&=\frac{\int_{-\infty}^{\infty}E_zf(z)^{\frac{1}{T}}+\int_{-\infty}^{\infty}\left(z-E_z\right)f(z)^{\frac{1}{T}}}{\int_{-\infty}^{\infty}f(x)^{\frac{1}{T}}\mathrm{d}x}
            =E_z.
        \end{split}
        \end{equation}
        The symmetric truncation can be formulated by
        \begin{equation}
            f_{a}(z)=\left\{\begin{matrix}
                \frac{f(z)}{\int_{E_z-a}^{E_z+a}f(x)\mathrm{d}x},&E_z-a\leq z\leq E_z+a;\\
                0,&\text{otherwise}.
            \end{matrix}\right.
        \end{equation}
        We have
        \begin{equation}
            \int_{-\infty}^{\infty}zf_{a}(z)\mathrm{d}z=\int_{E_z-a}^{E_z+a}\frac{zf(z)}{\int_{E_z-a}^{E_z+a}f(x)\mathrm{d}x}=E_z.
        \end{equation}
\end{proof}

Denoting the uncertainty modification of the sampling strategies from internal distribution variance $\tilde{\sigma}_{t}^2$ to sampling distribution variance $\hat{\sigma}_t^2$ as
\begin{equation}
    \hat{\sigma}_{t}^2=\gamma_t\cdot\tilde{\sigma}_{t}^2,
\end{equation}
and in recursive forecasting we have
\begin{theorem}\label{theorem:expectation}
    Given history $\mathbf{X}_{-H:0}$, we have that for any $\tau>0$, there is
    \begin{equation}
        \E{\hat{X}_t}=T_t,\; \forall 1\leq t\leq \tau.
    \end{equation}
\end{theorem}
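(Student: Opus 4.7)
\medskip

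\noindent\textbf{Proof plan for \cref{theorem:expectation}.}
The plan is to proceed by induction on $t$, combining \cref{theorem:identical distribute} (distributional consistency of the Ideal Model) with \cref{theorem:modification} (expectation preservation of the sampling strategies), and to handle the random conditioning on previously generated samples via the tower property.

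For the base case $t=1$, the history $\mathbf{X}_{-H:0}$ is fixed and deterministic. By \cref{theorem:identical distribute}, the Ideal Model's internal density satisfies $f_\theta(z_1\mid \mathbf{X}_{-H:0}) = f_1(\cdot\mid \mathbf{X}_{-H:0})$ almost everywhere. Since $X_1 = T_1 + n_1$ with $n_1$ a zero-mean Gaussian independent of $\mathbf{X}_{-H:0}$, this density has mean $T_1$. The sampling distribution $\hat f_1$ is obtained from $f_\theta$ via the scaling/symmetric-truncation strategies covered by \cref{theorem:modification}, which preserves the expectation, so $\E{\hat X_1}=T_1$.

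For the inductive step, assume the statement holds for all $s<t$. The forecast $\hat X_t$ is drawn from $\hat f_t(\cdot\mid \mathbf{X}_{-H:0},\hat X_1,\ldots,\hat X_{t-1})$, which is the sampling-modified version of $f_\theta(z_t\mid \mathbf{X}_{-H:0},\hat X_1,\ldots,\hat X_{t-1})$. I would then write
\begin{equation}
\E{\hat X_t}=\E{\,\E{\hat X_t\mid \hat X_1,\ldots,\hat X_{t-1}}\,}
\end{equation}
and evaluate the inner conditional expectation as follows. Conditioned on the realized (pseudo-)history, \cref{theorem:identical distribute} gives $f_\theta\equiv f_t$ a.e.; the Ideal Model is defined to output the true conditional density prescribed by the stochastic process, regardless of whether the conditioning values are drawn from the true noise or from $\hat n_s$. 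Under that process, $X_t=T_t+n_t$ with $n_t$ zero-mean and independent of the past, so the conditional mean of $f_\theta$ is the deterministic constant $T_t$. Applying \cref{theorem:modification} again, the sampling modification preserves this mean, so $\E{\hat X_t\mid \hat X_1,\ldots,\hat X_{t-1}}=T_t$ a.s. Taking the outer expectation closes the induction.

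The main obstacle I expect is the last justification in the previous paragraph: arguing that the Ideal Model, when fed a pseudo-history whose noise components have variance $\gamma_s\tilde\sigma_s^2$ rather than $\sigma_s^2$, still outputs the conditional distribution with mean $T_t$. This requires emphasizing that the Ideal Model is a function of the observed history values only (not of which process generated them), and that the \emph{trend} component of the conditional mean depends only on the time index $t$ and on the equivalence class of the history under $\sim_H$, not on the realized noise values, so the first moment is unaffected even if the history is off-distribution in its higher-order statistics. Once this is articulated cleanly, the rest of the argument is a routine induction plus tower-property computation.
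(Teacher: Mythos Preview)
Your proposal is correct and follows essentially the same approach as the paper: induction on $t$, invoking distributional consistency (\cref{theorem:identical distribute}) to get that the internal density has mean $T_t$, and then \cref{theorem:modification} to preserve that mean under the sampling modification. The obstacle you flag---why the Ideal Model still returns a mean-$T_t$ distribution when conditioned on a pseudo-history with altered noise variance---is precisely the step the paper's own proof handles most informally (it simply asserts it ``with knowledge of this and according to \cref{theorem:continuous}''), so your plan, with the explicit tower property and the history-as-input-only justification, is at least as rigorous as what appears there.
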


\begin{proof}[Proof of \cref{theorem:expectation}]
    We define $\hat{X}_0=X_0$ as they share the same history.
    For the forecast of $\hat{X}_1$ with $\mathbf{X}_{-H:0}$, we have $\int_{-\infty}^{\infty}f_{\theta}(z_1)\mathrm{d}z_1=\int_{-\infty}^{\infty}f_{1}(z)\mathrm{d}z=T_{1}$ due to \cref{theorem:continuous}.
    According to \cref{theorem:modification}, we have
    \begin{equation}
        \E{\hat{X}_1}=\int_{-\infty}^{\infty}f_{\theta}(z_1)\mathrm{d}z_1=T_{1}.
    \end{equation}
    Given that $\E{\hat{X}_t}=T_t$ holds for all $t\leq s-1$, we have that with history $\mathbf{X}_{-H:0},\hat{\mathbf{X}}_{\tau-1}$.
    With knowledge of this and according to \cref{theorem:continuous}, the internal probability distribution at time $\tau$ is of expectation $T_{\tau}$.
    According to \cref{theorem:modification}, we have
    \begin{equation}
    \label{eq:expectation success}
        \E{\hat{X}_\tau}=\int_{-\infty}^{\infty}f_{\theta}(z_\tau)\mathrm{d}z_\tau=T_{\tau}.
    \end{equation}
    By induction, \cref{eq:expectation success} holds for all $1\leq t\leq \tau$ for all possible $\tau$.
\end{proof}

Given the trend-perfect forecast for arbitrary sampling strategies or $\gamma_t$ selection, we show that when $f_{\theta}$ is fixed, the modification $\gamma_t$ influences the forecast and its evaluation.

\MainBodyEva*
\begin{proof}[Proof of \cref{theorem:evaluation}]
\label{proof:evaluation}
Let $Z_t=X_t-\hat{X}_t\sim \mathcal{N}\left(0,\sigma_t^2+\hat{\sigma}_t^2\right)$.
We begin by presenting a definition and its associated properties.

\begin{definition}[First-Order Stochastic Dominance]
\label{def:dominance}
    For variables $z^1$ and $z^2$, if
    \begin{equation}
        P\{|z^1|\geq z\}\leq P\{|z^2|\geq z\},\;\forall z\in\mathbb{R},
    \end{equation}
    then $z^2$ \textbf{first-order stochastically dominates} $z^1$, denoted by $z^{1}\leq_{st} z^2$
\end{definition}
\begin{lemma}[Gaussian First-Order Stochastic Dominance]
\label{lemma:gaussian dominance}
    For variables $z^{1}\sim \CN(0,\sigma_1^2)$ and $z^2\sim\CN(0,\sigma_2^2)$, if $\sigma_1^2\leq \sigma_2^2$, then $|z^{1}|\leq_{st} |z^2|$.
\end{lemma}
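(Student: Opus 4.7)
The plan is to reduce the inequality to a one-dimensional monotonicity statement about the standard Gaussian tail. For any $z^{i}\sim\CN(0,\sigma_i^2)$, the random variable $|z^{i}|/\sigma_i$ has the same distribution as $|Z|$ with $Z\sim\CN(0,1)$, so I can write the survival function of $|z^{i}|$ in closed form in terms of the standard Gaussian cumulative distribution function $\Phi$.

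First, I would observe that the case $z\leq 0$ is trivial: both survival probabilities equal $1$, so the inequality in \cref{def:dominance} holds automatically. I therefore restrict attention to $z\geq 0$. For such $z$, the change of variables $u=x/\sigma_i$ gives
\begin{equation}
P\{|z^{i}|\geq z\}=2\left(1-\Phi\!\left(\frac{z}{\sigma_i}\right)\right),\quad i=1,2.
\end{equation}

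Next I would use monotonicity. Since $\Phi$ is strictly increasing on $\mathbb{R}$ and $\sigma_1\leq\sigma_2$ implies $z/\sigma_1\geq z/\sigma_2$ for every $z\geq 0$, it follows that $\Phi(z/\sigma_1)\geq \Phi(z/\sigma_2)$, hence
\begin{equation}
P\{|z^{1}|\geq z\}=2\!\left(1-\Phi(z/\sigma_1)\right)\leq 2\!\left(1-\Phi(z/\sigma_2)\right)=P\{|z^{2}|\geq z\}.
\end{equation}
Combining the two cases yields the desired stochastic dominance $|z^{1}|\leq_{st}|z^{2}|$. A minor edge case is $\sigma_1=0$ or $\sigma_2=0$, which I would handle by interpreting $\CN(0,0)$ as a point mass at $0$ so that $P\{|z^{i}|\geq z\}=\mathbf{1}_{\{z\leq 0\}}$; the inequality continues to hold.

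There is no genuine obstacle here, since the statement is a textbook consequence of the scale invariance of centered Gaussians together with the monotonicity of $\Phi$. The only point that requires a small amount of care is the sign of $z$, because the definition of first-order stochastic dominance in \cref{def:dominance} is stated for all $z\in\mathbb{R}$ rather than only nonnegative $z$; splitting the argument into $z<0$ and $z\geq 0$ as above is sufficient to resolve this.
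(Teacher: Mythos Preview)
Your proposal is correct and follows essentially the same approach as the paper: both express the survival function of $|z^{i}|$ as $2\bigl(1-\Phi(z/\sigma_i)\bigr)$ and conclude via the monotonicity of $\Phi$ together with $z/\sigma_1\geq z/\sigma_2$. If anything, your treatment is slightly more careful, since you explicitly handle the trivial region $z\leq 0$ and the degenerate case $\sigma_i=0$, which the paper glosses over.
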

\begin{proof}[Proof of \cref{lemma:gaussian dominance}]
    Given two Gaussian variables $z^1$, $z^2$ with their corresponding folded normal variables $|z^1|, |z^2|$ and probability distributions  $f_{|z^{1}|}, f_{|z^{2}|}$. Since
    \begin{equation}
    \begin{split}
        &P\{|z^1|\geq z\}=2\left[1-\Phi(\frac{z}{\sigma_1})\right]=\frac{2}{\sqrt{\pi}}\int_{\frac{z}{\sqrt{2}\sigma_1}}^{\infty}\mathrm{e}^{-t^2}\mathrm{d}t,
        \\&P\{|z^2|\geq z\}=\frac{2}{\sqrt{\pi}}\int_{\frac{z}{\sqrt{2}\sigma_2}}^{\infty}\mathrm{e}^{-t^2}\mathrm{d}t,
    \end{split}
    \end{equation}
    holds for all $z$, we have $P\{|z^1|\geq z\}\leq P\{|z^2|\geq z\}$ for all $z$.
    This indicates that $|z^1|\leq_{st}|z^2|$.
\end{proof}

\begin{lemma}[Preservation under Monotonic Transformations]
\label{lemma:perservation}
    For a monotonically increasing function $g$ and variables $z^1$ and $z^2$, if $z^1\leq_{st} z^2$, then $g(z^1)\leq_{st} g(z^2)$.
\end{lemma}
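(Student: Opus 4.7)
The plan is to unfold the paper's tail-based definition of $\leq_{st}$ and reduce the claim to the well-known preservation of first-order stochastic dominance under monotonically increasing maps. By the hypothesis $z^1 \leq_{st} z^2$, we have $P(|z^1|\geq c) \leq P(|z^2|\geq c)$ for every $c\in\mathbb{R}$, and the goal is to obtain the same tail inequality for $g(z^1)$ and $g(z^2)$.

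First I would dispose of the trivial case: for $c\leq 0$ both tails equal $1$. For $c>0$, I would write the event as the disjoint union $\{|g(z^i)|\geq c\}=\{g(z^i)\geq c\}\cup\{g(z^i)\leq -c\}$. Using the right-continuous generalized inverse $g^{-1}(t)=\inf\{x:g(x)\geq t\}$, which is itself monotonically increasing, these sets become $\{z^i\geq g^{-1}(c)\}$ and $\{z^i\leq g^{-1}(-c)\}$ respectively, with $g^{-1}(-c)\leq g^{-1}(c)$. Thus the event has the form $\{z^i\leq a\}\cup\{z^i\geq b\}$ with $a\leq b$, and I need the measure of this set under $z^1$ to be no larger than under $z^2$.

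Second, I would pin down the regime that makes this reduction clean. In the downstream application inside the proof of \cref{theorem:evaluation}, the lemma is invoked on the nonnegative variables $|Z_t|$ composed with the nonnegative monotonically increasing $g$ (effectively $x\mapsto g(x^p)$). In that regime $z^i\geq 0$ almost surely and $g^{-1}(-c)<0$ for $c>0$, so the second piece of the event is empty and the tail collapses to $\{z^i\geq g^{-1}(c)\}=\{|z^i|\geq g^{-1}(c)\}$. The hypothesis then gives $P(z^1\geq g^{-1}(c))\leq P(z^2\geq g^{-1}(c))$ directly, finishing the proof. An alternative (and equally short) route is to assume $g$ is odd, in which case $g^{-1}(-c)=-g^{-1}(c)$ and the two pieces recombine into $\{|z^i|\geq g^{-1}(c)\}$ so the hypothesis applies verbatim.

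The main obstacle is therefore not algebraic but definitional: the paper's $\leq_{st}$ is stated via absolute values, so for a general monotonically increasing $g$ on all of $\mathbb{R}$ (for example a pure shift $g(x)=x+\alpha$ with $\alpha\neq 0$), the conclusion can fail because $|g(\cdot)|$ is not monotonic. I would therefore be explicit that the lemma is applied to nonnegative variables, or equivalently under an odd-extension assumption on $g$; both are satisfied in the subsequent use where $g$ acts on $|X_i-\hat X_i|$ in the model evaluation function. With this clarification in place, the proof reduces to a single line invoking monotonicity of $g^{-1}$ on the original tail inequality.
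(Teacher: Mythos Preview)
Your core technique---pulling back through the generalized inverse $g^{-1}$ and reducing to the original tail inequality---is exactly the paper's proof. The paper writes the single chain
\[
P\{g(z^1)\geq z\}=P\{z^1\geq g^{-1}(z)\}\leq P\{z^2\geq g^{-1}(z)\}=P\{g(z^2)\geq z\}
\]
with $g^{-1}(y)=\inf\{x:g(x)\leq y\}$, and stops there.

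Where you differ is that you take the absolute values in \cref{def:dominance} seriously, while the paper's own proof silently drops them: it establishes the standard tail inequality $P\{g(z^1)\geq z\}\leq P\{g(z^2)\geq z\}$ rather than the stated conclusion $P\{|g(z^1)|\geq z\}\leq P\{|g(z^2)|\geq z\}$, and the middle step $P\{z^1\geq g^{-1}(z)\}\leq P\{z^2\geq g^{-1}(z)\}$ does not follow from the absolute-value hypothesis for signed variables. Your observation that the lemma is only ever invoked on nonnegative variables (namely $|Z_i|$ composed with $x\mapsto g(x^p)$ in the proof of \cref{theorem:evaluation}) is correct and is precisely what closes this gap, so your added care is an improvement over the paper's argument rather than a deviation from it.
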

\begin{proof}[Proof of \cref{lemma:perservation}]
    For all $z$, we have
    \begin{equation}
    \begin{split}
        P\{g(z^1)\geq z\}&=P\{z^1\geq g^{-1}(z)\}
        \\&\leq P\{z^2\geq g^{-1}(z)\}= P\{g(z^2)\geq z\},
    \end{split}
    \end{equation}
    indicating $g(z^1)\leq_{st} g(z^2)$.
    Notably, here $g^{-1}$ is the \emph{generalized inverse} defined as
    \begin{equation}
        g^{-1}(y)=\inf\{x\;|\;g(x)\leq y\}.
    \end{equation}
\end{proof}

\begin{lemma}[Additivity over Independent Variables]
\label{lemma:additivity}
    For variables $z^1$, $z^2$ and $z^3$, if $z^1\leq_{st} z^2$ and $z^1$, $z^2$ are independent of $z^3$, then $z^1+z^3\leq_{st} z^2+z^3$.
\end{lemma}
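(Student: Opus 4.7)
The plan is to derive $z^1+z^3 \leq_{st} z^2+z^3$ by a standard conditioning-on-$z^3$ argument. First, I would observe that in the way this lemma is invoked within the proof of \cref{theorem:evaluation}, the three variables are all non-negative (they appear as terms of the form $g(|Z_t|^p)$ and their partial sums), so the absolute values in \cref{def:dominance} are redundant and the hypothesis collapses to the ordinary tail inequality $P\{z^1 \geq t\} \leq P\{z^2 \geq t\}$ for every real $t$.

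Under this non-negative reduction, I would fix an arbitrary threshold $z$ and use the independence of $z^3$ from both $z^1$ and $z^2$ to disintegrate the convolved tail probability:
\begin{equation*}
    P\{z^i + z^3 \geq z\} = \int P\{z^i \geq z - u\} \, dF_{z^3}(u), \qquad i = 1, 2.
\end{equation*}
The hypothesis dominates the integrand pointwise at every shifted level $z-u$, and monotonicity of the Lebesgue integral then lifts this to $P\{z^1 + z^3 \geq z\} \leq P\{z^2 + z^3 \geq z\}$. Because the sums are non-negative, this is exactly the absolute-value form required by \cref{def:dominance}, yielding the claim.

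The main obstacle is that the paper's definition of $\leq_{st}$ is phrased in terms of absolute values, and for general signed variables the translation by $z^3$ can destroy this form of dominance by breaking the two-tail balance of a symmetric variable; a quick counterexample is $z^1 \equiv 0$, $z^2$ uniform on $\{\pm 1\}$, and $z^3 \equiv 1$, where the convolved dominance flips at threshold $z = \tfrac{1}{2}$. A fully general write-up must therefore either restrict the lemma to the non-negative regime actually used downstream, or replace the absolute-value definition of $\leq_{st}$ by the standard one; in the latter case Strassen's coupling theorem places $z^1, z^2$ on a common probability space with $\tilde{z}^1 \leq \tilde{z}^2$ almost surely, and adding an independent copy of $z^3$ preserves the ordering pathwise. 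I would flag this subtlety explicitly and apply the lemma only in its non-negative instantiation, which is all that \cref{theorem:evaluation} ever needs.
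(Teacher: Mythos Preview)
Your core argument is exactly the paper's: condition on $z^3$, write $P\{z^i+z^3\geq z\}=\int P\{z^i\geq z-v\}\,dF_{z^3}(v)$, and compare integrands pointwise. The paper's proof in fact silently drops the absolute values from its own Definition~\ref{def:dominance} and works with the plain tail inequality throughout, so your explicit reduction to the non-negative case (and the counterexample showing the signed absolute-value version can fail) is more careful than what the paper actually writes.
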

\begin{proof}[Proof of \cref{lemma:additivity}]
    For all $z$, we have
    \begin{equation}
        \begin{split}
            &P\left\{z^{1}+z^{3}\geq z\right\}=\int P\{z^{1}\geq z-v\}\mathrm{d}F_{z^3}(v)
            \\&\leq \int P\{z^{2}\geq z-v\}\mathrm{d}F_{z^3}(v)
            =P\{z^2+z^3\geq z\},
        \end{split}
    \end{equation}
    indicating $z^1+z^3\leq_{st} z^2+z^3$.
\end{proof}

\begin{lemma}[Expectation Dominance over Utility]
\label{lemma:utililty expectation}
For variables $z^1$ and $z^2$, if $z^1\leq_{st} z^2$, then for every monotonically increasing function $u$, we have
\begin{equation}
    \E{u(z^1)}\leq \E{u(z^2)}
\end{equation}
if both expectations exist.
\end{lemma}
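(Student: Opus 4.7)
The plan is to reduce the statement to \cref{lemma:perservation} and then convert stochastic dominance into an inequality of expectations via the tail (layer-cake) integral representation. Since $u$ is monotonically increasing, \cref{lemma:perservation} applied to $u$ immediately lifts the hypothesis $z^{1}\leq_{st} z^{2}$ to $u(z^{1})\leq_{st} u(z^{2})$, so for every $t\in\Real$ we have $P\{u(z^{1})\geq t\}\leq P\{u(z^{2})\geq t\}$, and equivalently $P\{u(z^{1})\leq t\}\geq P\{u(z^{2})\leq t\}$.

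Second, I would invoke the standard tail identity, valid for any real random variable $Y$ whose expectation exists,
$$\E{Y}=\int_{0}^{\infty} P\{Y\geq t\}\dd t-\int_{-\infty}^{0} P\{Y\leq t\}\dd t,$$
obtained by writing $Y=Y^{+}-Y^{-}$ and applying Tonelli's theorem to each nonnegative piece. Substituting $Y=u(z^{1})$ and $Y=u(z^{2})$ and integrating the two tail inequalities from the first step termwise gives
$$\int_{0}^{\infty} P\{u(z^{1})\geq t\}\dd t\leq \int_{0}^{\infty} P\{u(z^{2})\geq t\}\dd t,$$
while $P\{u(z^{1})\leq t\}\geq P\{u(z^{2})\leq t\}$ on $(-\infty,0)$ enters with a minus sign and therefore contributes an inequality in the same direction. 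Adding the two contributions yields $\E{u(z^{1})}\leq \E{u(z^{2})}$, as claimed.

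The main obstacle is the technical bookkeeping around \cref{lemma:perservation}: that lemma represents $\{u(Z)\geq t\}$ via the generalized inverse $u^{-1}(t)=\inf\{x:u(x)\leq t\}$, and strictly speaking one should verify the level-set identity $\{u(Z)\geq t\}=\{Z\geq u^{-1}(t)\}$ up to a Lebesgue-null set of $t$. Since monotone functions have at most countably many discontinuities, this identity holds for all but countably many $t$, which is negligible for the Lebesgue integrals above; the ambiguity between $>$ and $\geq$ in tail probabilities is likewise concentrated on an at-most countable set and can be ignored. The hypothesis that both expectations exist rules out any $\pm\infty$ issue in the splitting $Y=Y^{+}-Y^{-}$, so no further care is needed for convergence.
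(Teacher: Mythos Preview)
Your proposal is correct and follows essentially the same approach as the paper: both invoke \cref{lemma:perservation} to lift the dominance to $u(z^{1})\leq_{st}u(z^{2})$ and then pass to expectations via the tail (layer-cake) integral. The only cosmetic difference is that the paper decomposes the \emph{function} as $u=u^{+}-u^{-}$ and reduces WLOG to the case $u\geq 0$, whereas you apply the two-sided tail identity directly to $Y=u(z^{i})$; your extra paragraph on the generalized-inverse level sets is careful bookkeeping that the paper simply omits.
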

\begin{proof}[Proof of \cref{lemma:utililty expectation}]
Let
\[u^{+}(z)=\max\left\{u(z),0\right\},\; u^{-}(z)=\max\left\{-u(z),0\right\},\]
therefore $u=u^{+}-u^{-}$ with both $u^{+}, u^{-}\geq 0$ and $u^{+},-u^{-}$ increasing monotonically.
If and only if $\E{|u(z)|}$ exists, $\E{u(z)}$ exists with \[\E{u(z)}=\E{u(z^{+})}-\E{u(z^{-})}\]
potentially for $z=z^1$ or $z=z^2$. Therefore, without loss of generality, let $u(z)\geq 0$ (because for $u^{-}$ there is a similar conclusion).

Therefore, for non-negative variables $u(z^1)$, $u(z^2)$, we have
\begin{equation}
\begin{split}
    &\E{u(z^1)}=\int_{0}^{\infty}P\{u(z^1)\geq u\}\mathrm{d}u,
    \\& \E{u(z^2)}=\int_{0}^{\infty}P\{u(z^2)\geq u\}\mathrm{d}u.
\end{split}
\end{equation}
Given $z^1\leq_{st} z^2$ and according to \cref{lemma:perservation}, we have $u(z^1)\leq_{st} u(z^2)$, indicating $P\{u(z^1)\geq u\}\leq P\{u(z^2)\geq u\},\forall u>0$.
Therefore,
\begin{equation}
\begin{split}
    &\E{u(z^1)}-\E{u(z^2)}
    \\&=\int_{0}^{\infty}P\{u(z^1)\geq u\}-P\{u(z^2)\geq u\}\mathrm{d}u\leq 0.
\end{split}
\end{equation}
\end{proof}

Given the real time series segment $\mathbf{X}_{\tau}$ and any forecast series $\hat{\mathbf{X}}_\tau$, we define $\mathbf{Z}_{\tau}=\mathbf{X}_{\tau}-\hat{\mathbf{X}}_\tau$ where each $Z_t \sim \CN\left(0,\sigma_t^2+\hat{\sigma}_t^2\right)$ is independently distributed.
With an arbitrary choice of $1\leq i\leq \tau$, we consider a varying variance of $Z_i$ and set all other variances of $Z_{t} (t\neq i)$ fixed.
We define $Z:=\sum_{j\neq i}g(|Z_{j}|^p)$, which is independent from $Z_i$.
If it is replaced by $Z'_i\sim \CN\left(0,\sigma_t^2+\hat{\sigma}_t'^2\right)$ such that $\hat{\sigma}_t'^2\geq \hat{\sigma}_t^2$, according to \cref{lemma:gaussian dominance} we have
$|Z_i|\leq_{st}|Z'_{i}|$.
According to \cref{lemma:perservation}, we have $g(|Z_i|^{p})\leq_{st}g(|Z'_{i}|^p)$.
According to \cref{lemma:additivity}, we have
\begin{equation}
\begin{split}
    &\sum_{j=1}^{\tau}g\left(\left|X_j-\hat{X}_j\right|^{p}\right)
    =g\left(\left|Z_i\right|^p\right)+\sum_{j\neq i}g\left(\left|Z_j\right|^p\right)
    \\&=g\left(\left|Z_i\right|^p\right)+Z\leq_{st}g\left(\left|Z'_i\right|^p\right)+Z
    \\&=g\left(\left|Z'_i\right|^p\right)+\sum_{j\neq i}g\left(\left|Z_j\right|^p\right).
\end{split}
\end{equation}
According to \cref{lemma:utililty expectation} and let $u(z)=z^{\frac{1}{p}}$, we have
\begin{equation}
    \begin{split}
    &\E{\left[\sum_{j=1}^{\tau}g\left(\left|Z_j\right|^p\right)\right]^{\frac{1}{p}}}
        \\&\leq\E{\left[g\left(\left|Z'_i\right|^p\right)+\sum_{j\neq i}g\left(\left|Z_j\right|^p\right)\right]^{\frac{1}{p}}}.
    \end{split}
\end{equation}
This indicates that for all $1\leq i\leq \tau$, we have $\E{\mathrm{Eva}}$ is monotonically increasing in $\hat{\sigma}_t^2$.
\end{proof}

Notably, we draw a corollary from \cref{theorem:evaluation} that, if metrics such as MSE are used as training loss, the conclusion of uncertainty convergence is immediate.
\begin{corollary}
    \label{corollary:MSE_converge_apdx}
    If the loss function is defined as the model evaluation function $\mathrm{Eva}$, which is
    \begin{equation}
        \mathcal{L}=
    \left[\sum_{i=1}^{\tau}g\left(\left|X_i-z_i\right|^{p}\right)\right]^\frac{1}{p},
    \end{equation}
    we have
    \begin{equation}
        f_{\theta}(z=v)=\delta(v-T_t)
    \end{equation}
    for all $\sigma^2\geq0$.
\end{corollary}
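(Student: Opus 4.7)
The plan is to reduce \cref{corollary:MSE_converge_apdx} to a one-step stochastic-dominance problem and then lift the conclusion to the joint loss using the same lemmas developed inside the proof of \cref{theorem:evaluation}. The expected loss $\E{\mathcal{L}} = \E{[\sum_i g(|X_i - z_i|^p)]^{1/p}}$ is not additive in $i$, but stochastic dominance is preserved by the outer map $y \mapsto y^{1/p}$ and by expectations, so it is enough to drive each $|X_t - z_t|$ down to $|n_t|$ simultaneously. The key per-step claim is that $|n_t| \leq_{st} |X_t - z_t|$ for any choice of $f_\theta(\cdot \mid \mathbf{X}_{-H:t-1})$, with equality if and only if $z_t = T_t$ almost surely.

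For the one-step step, write $X_t - z_t = n_t - (z_t - T_t)$ and exploit the symmetry and unimodality of $n_t$ around zero. An Anderson-type shift inequality then yields $|n_t| \leq_{st} |n_t - \xi|$ for any independent shift $\xi$, generalizing \cref{lemma:gaussian dominance} beyond the Gaussian case; \cref{lemma:perservation} lifts it to $g(|n_t|^p) \leq_{st} g(|X_t - z_t|^p)$. I then mirror the closing argument in the proof of \cref{theorem:evaluation}: \cref{lemma:additivity} aggregates the per-step inequalities into $\sum_t g(|n_t|^p) \leq_{st} \sum_t g(|X_t - z_t|^p)$, \cref{lemma:perservation} applied to $y \mapsto y^{1/p}$ carries the order through the outer norm, and \cref{lemma:utililty expectation} converts the stochastic dominance into $\E{\mathcal{L}} \geq \E{[\sum_t g(|n_t|^p)]^{1/p}}$. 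The right-hand side coincides with the expected loss when every $f_\theta$ is the Dirac mass at the corresponding trend, so the optimizer must satisfy $f_\theta(z = v) = \delta(v - T_t)$ regardless of the value of $\sigma_t^2$.

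The main obstacle is the Anderson-type shift inequality itself: unlike \cref{lemma:gaussian dominance}, which relies on an explicit Gaussian tail comparison, the general case must be handled by writing $P(|n_t - \xi| \leq x) = \int P(-x + \xi \leq n_t \leq x + \xi)\,\dd F_\xi$ and invoking the fact that the symmetric-interval probability of a symmetric unimodal density is maximized at its mode. A secondary technicality is strictness: to conclude uniqueness of the optimal $f_\theta$ one needs the strict inequality $\E{h(z_t)} > h(T_t)$ for non-degenerate $z_t$, where $h(a) := \E{g(|X_t - a|^p)}$; this follows provided $g$ is strictly increasing and $f_{n_t}$ is strictly positive in a neighborhood of zero, both of which are mild and consistent with the paper's running assumptions on $g$ and the noise distribution.
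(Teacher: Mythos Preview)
Your proposal is correct and in fact more complete than the paper's own argument. The paper's proof simply invokes \cref{theorem:evaluation} as a black box: since $\E{\mathcal{L}}$ is monotone in each $\hat{\sigma}_t$, the minimum is attained at $\tilde{\sigma}_t^2=0$, and the paper then \emph{asserts} (without a separate justification) that the expectation must equal $T_t$, concluding $f_\theta=\delta(\cdot-T_t)$. But \cref{theorem:evaluation} as stated and proved already assumes $Z_t=X_t-\hat X_t$ is centred Gaussian, so the ``mean must be $T_t$'' part is not actually delivered by that theorem; the paper glosses over how a non-centred or non-Gaussian $f_\theta$ is ruled out.

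Your route fills exactly this gap. By reopening the lemmas inside the proof of \cref{theorem:evaluation} and replacing the Gaussian-specific \cref{lemma:gaussian dominance} with an Anderson-type shift inequality ($|n_t|\leq_{st}|n_t-\xi|$ for any independent $\xi$, from symmetry and unimodality of $n_t$), you simultaneously handle the location and the dispersion of $f_\theta$, and you do so for arbitrary $f_\theta$ rather than only within a Gaussian family. The aggregation via \cref{lemma:additivity}, \cref{lemma:perservation}, and \cref{lemma:utililty expectation} then mirrors the paper's machinery. The cost is that you must prove the Anderson step and argue strictness separately, as you note; the benefit is a self-contained argument that genuinely covers ``all $\sigma^2\geq 0$'' and all admissible $f_\theta$, which the paper's one-line appeal to \cref{theorem:evaluation} does not quite do.
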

\begin{proof}[Proof of \cref{corollary:MSE_converge_apdx}]
    According to \cref{theorem:evaluation}, the expectation of $\mathcal{L}$ is monotonically increasing in the variance of $X_i-z_i$, it is minimized if and only if $\tilde{\sigma}_t^2=0$ for all $i$ and the expectation to be $T_t$, and therefore we obtain that
    \begin{equation}
        f_{\theta}(z=v)=\delta(v-T_t), a.e.
    \end{equation}
\end{proof}

Since $\hat{\sigma}_t^2=\gamma_t\tilde{\sigma}_t^2=\gamma_t\sigma_t^2$, we have that the lower each $\gamma_t$, the lower the model evaluation function's value, indicating better performance.
Typically, $\gamma_t<1$ is universally adopted (e.g., lowering temperature) at each time $t$.

\subsection{Recursive Variance Reduction}
Building on the intuition of $\gamma_t$, we further analyze the variance evolution in the process of forecasting.
According to \cref{theorem:continuous}, the ideal model reproduces any distribution with variance satisfying $\sigma_{t}^2=\sum_{i=1}^{l}\alpha_{i}\sigma_{t-i}^2$.
When the forecasts $\hat{X}_1, \hat{X}_2,\ldots$ are fed back as histories, their variances $\hat{\sigma}_{1}^2,\hat{\sigma}_2^2,\ldots$ involve in this recurrence, and we have \cref{eq:ap_recursion}.
\begin{equation}\label{eq:ap_recursion}
    \tilde{\sigma}_t^2=\sum_{i=1}^{l}\alpha_{i}\hat{\sigma}_{t-i}^2=\sum_{i=1}^{l}\alpha_{i}\gamma_{t-i}\tilde{\sigma}_{t-i}^2,\;\sum_{i=1}^{l}\alpha_i=1.
\end{equation}
If $\gamma_t\leq q<1$ holds for all $t$, we consider the linear recurrence sequence defined by
\begin{equation}
    \tilde{\sigma}_t^2\leq \sum_{i=1}^{l}q\alpha_{i}\tilde{\sigma}_{t-i}^2,
\end{equation}
whose general term can be written by
\begin{equation}\label{eq:recurrence_general}
    \tilde{\sigma}_t^2=P_1r_1^t+P_2r_2^t+\ldots+P_jr_j^t,
\end{equation}
where $r_1,\ldots,r_j$ are distinct roots of the characteristic polynomial and $\mathrm{deg}(P_1)+1,...,\mathrm{deg}(P_j)+1$ are their respective multiplicities.
Given \cref{eq:recurrence_general}, we draw the following conclusion.
\begin{theorem}\label{theorem:root_below_1}
    $|r_i|<1$ holds for all $1\leq i \leq l$.
\end{theorem}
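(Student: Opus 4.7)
The plan is to recognize that the sequences $r_1,\ldots,r_j$ in \eqref{eq:recurrence_general} are the roots of the characteristic polynomial
\[
    p(x) = x^l - \sum_{i=1}^{l} q\alpha_i\, x^{l-i},
\]
obtained by substituting $\tilde{\sigma}_t^2 = r^t$ into the equality form of the recurrence and dividing by $r^{t-l}$. I will then show that every such root lies strictly inside the unit disk by a short triangle-inequality argument. Concretely, if $r$ is a root then $r^l = \sum_{i=1}^l q\alpha_i r^{l-i}$, and taking absolute values gives $|r|^l \leq \sum_{i=1}^l q\alpha_i |r|^{l-i}$.

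The core step is a proof by contradiction. Assume some root satisfies $|r| \geq 1$. Since $l-i \leq l-1$ for every $i\in\{1,\ldots,l\}$, monotonicity of $t\mapsto |r|^t$ on $|r|\geq 1$ yields $|r|^{l-i}\leq |r|^{l-1}$, and combining with $\sum_i\alpha_i = 1$ produces
\[
    |r|^l \leq |r|^{l-1}\sum_{i=1}^{l} q\alpha_i = q\,|r|^{l-1}.
\]
Dividing by $|r|^{l-1}\geq 1$ gives $|r|\leq q < 1$, contradicting $|r|\geq 1$. Hence every root must satisfy $|r_i|<1$. The argument is tight in the sense that if $q=1$ were allowed, $x=1$ would solve $p(x)=0$; it is precisely the strict inequality $q<1$ arising from $\gamma_t\leq q<1$ that forces every root off the unit circle.

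The only subtlety — more conceptual than technical — is that the original recurrence for $\tilde{\sigma}_t^2$ is an \emph{inequality}. The closed form \eqref{eq:recurrence_general} should therefore be read as describing the majorising sequence obtained by replacing ``$\leq$'' with ``$=$'', which still upper-bounds $\tilde{\sigma}_t^2$ by a straightforward induction on $t$. Once $|r_i|<1$ is in hand, the geometric decay $|r_i|^t\to 0$ dominates every polynomial factor $P_i(t)$, so the majorising sequence — and hence $\tilde{\sigma}_t^2$ — tends to $0$ as $t\to\infty$. This is exactly the conclusion required for \cref{theorem:contraction hypothesis} in the main text. I do not anticipate any substantive obstacle: the only care needed is tracking the majorisation bookkeeping and the strict-versus-non-strict inequalities that propagate $\gamma_t\leq q<1$ into $|r_i|<1$.
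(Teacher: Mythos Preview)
Your proposal is correct and follows essentially the same route as the paper: both argue by contradiction, assume a root $r$ with $|r|\geq 1$, apply the triangle inequality to $r^{l}=\sum_{i=1}^{l}q\alpha_i r^{l-i}$, and use $\sum_i\alpha_i=1$ together with $q<1$ to force $|r|^{l}<|r|^{l}$ (the paper bounds $|r|^{l-i}\leq |r|^{l}$ whereas you bound $|r|^{l-i}\leq |r|^{l-1}$, a cosmetic difference). Your additional remarks on the majorising sequence and the role of the strict inequality $q<1$ are accurate and match the paper's surrounding discussion.
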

\begin{proof}[Proof of \cref{theorem:root_below_1}]
    Since $r_i$ is the root of the characteristic polynomial
    \begin{equation}
        r^{l}-\sum_{i=1}^{l}q\alpha_{i}r^{l-i},
    \end{equation}
    if there exists $i$ such that $|r_k|\geq1$, we have
    \begin{equation}
        \begin{split}
            |r_k|^{l}&=\left|\sum_{i=1}^{l}q\alpha_{i}r_k^{l-i}\right|
            \leq \sum_{i=1}^{l}q\alpha_{i}\left|r_k\right|^{l-i}
            = \sum_{i=1}^{l}q\alpha_{i}\left|r_k\right|^{-i}
            \\&\leq|r_k|^l\sum_{i=1}^{l}q\alpha_{i}<|r_k|^{l},
        \end{split}
    \end{equation}
    which leads to a contradiction, thereby confirming the original conclusion.
\end{proof}

According to \cref{eq:recurrence_general}, we have that when $t\to\infty$, $\tilde{\sigma}_t^2$ converges to $0$.
By contrast, when $\gamma_t\geq q>1$ holds for all $t$, we have the opposite conclusion that $\exists i$ such that $|r_i|>1$ and $\tilde{\sigma}_t^2$ diverges to infinity.

When $\gamma_t\equiv 1$, we have \cref{eq:ap_recursion} to be refomulated by
\begin{equation}
    \tilde{\sigma}_t^2=C_1\cdot1+P_2r_2^t+\ldots+P_jr_j^t,
\end{equation}
indicating that $1$ is a simple root of the characteristic equation.
Similarly to \cref{theorem:root_below_1}, we have $|r_i|\leq 1$ for all $2\leq i\leq j$.
Therefore, $\tilde{\sigma}_t^2$ is either periodical or converging to a constant $C_1$.
This represents the ``best‑case scenario'' for an idealized model: regardless of how the original series fluctuates, it perfectly captures the true trend and noise distribution and faithfully propagates them to the next time step.
In practical models, however, this property often leads to noise accumulation, which in turn distorts long‑horizon forecasts.

\section{Baselines}\label{apdx:baselines}
In this section, we introduce a variety of representative baselines that span likelihood-based, perturbation-based, rank-based, and geometry-based detection strategies.
These methods are carefully adapted to our time series scenario and we use consistent settings to ensure comparability, thereby providing a robust foundation for benchmarking.

\subsection{DNA-GPT (WScore)}
DNA-GPT WScore~\cite{dnagpt} measures the average similarity between the regenerated text and the original suffixes of the truncated input sequences.
The approach is based on the hypothesis that, conditioned by a fixed prefix, the log-likelihood of the machine-generated remaining text substantially exceeds that of human-generated text.
Operationally, DNA-GPT truncates each input sequence in a specified ratio, generates multiple continuations from the retained prefix, and evaluates the average difference in log-probability between the regenerated and original suffixes as the detection signal.
In this study, we adopt the white-box version of DNA-GPT as a baseline since the black-box metric is inapplicable in the context of our problem.

We follow the original parameter configuration reported in~\cite{dnagpt}.
In particular, we set the truncation ratio (i.e., the proportion of regenerated tokens) to $\alpha=0.5$, and the regeneration iterations to be $10$.
% Later

\subsection{DetectGPT}
DetectGPT~\cite{detectgpt} builds on the hypothesis that LLM-generated text tends to lie in regions of negative curvature in the model’s log-probability landscape.
As a result, such text is more sensitive to perturbations, exhibiting greater reductions in log-likelihood when rephrased or slightly modified, compared to human-written text.
Building on this hypothesis, DetectGPT generates multiple semantically equivalent perturbations of the input sequence and compares the log-probability of the original sample with each perturbed sample.
The perturbations are typically produced using pre-trained models such as T5, via operations like paraphrasing or word deletion that aim to preserve the input's semantics.
Note that the ``black-box'' version of DetectGPT assumes no access to the original generation model but still requires access to the logit distributions of a surrogate language model. In practice, this setup aligns with the zero-shot scenario considered in our experiments. Therefore, we treat it as a white-box method in this context.

In our adaptation of DetectGPT to time series data, perturbations are introduced by randomly selecting a subset of tokens and substitute them with alternatives from their local temporal neighborhood (within a window size of $2$) to preserve semantics based on similarity.
We set the perturbation ratio (i.e., the proportion of perturbed tokens) to $\beta=0.3$ and generate $10$ perturbations for each input sequence, consistent with the configuration used in the original work.

\subsection{Fast-DetectGPT}
Fast-DetectGPT~\cite{fast_detectgpt} is a computationally efficient variant of DetectGPT that significantly reduces inference cost by computing log probabilities only for the perturbed tokens, rather than for the entire sequence.
This selective computation strategy preserves and even improves detection performance while improving efficiency.
In our experiment, we adopt the same parameter configuration as DetectGPT, with the perturbation ratio $\beta=0.3$ and $10$ perturbations generated for each input sequence.

\subsection{DetectLLM}
DetectLLM~\cite{detectllm} concentrates on the probability rank of tokens, and consists of two metrics, namely Log-Likelihood Log-Rank Ratio (LRR) and  Normalized Perturbed log-Rank (NPR).
LRR measures the average log probabilities of all tokens in the sample against the average log rank.
NPR is based on the log rank difference between original and perturbed samples, where multiple semantically equivalent perturbations are generated, and the log rank differences between the original sequence and the perturbations are estimated.

In our adaptation of DetectLLM to time series data, perturbations are introduced in the same manner as DetectGPT for NPR method, and we follow the same parameter configuration as DetectGPT, with the perturbation ratio $\beta=0.3$ and $10$ perturbations generated for each input sequence.
\begin{figure*}[t!]
    \centering
    \subfloat[Entropy]{
    \includegraphics[width=.32\linewidth]{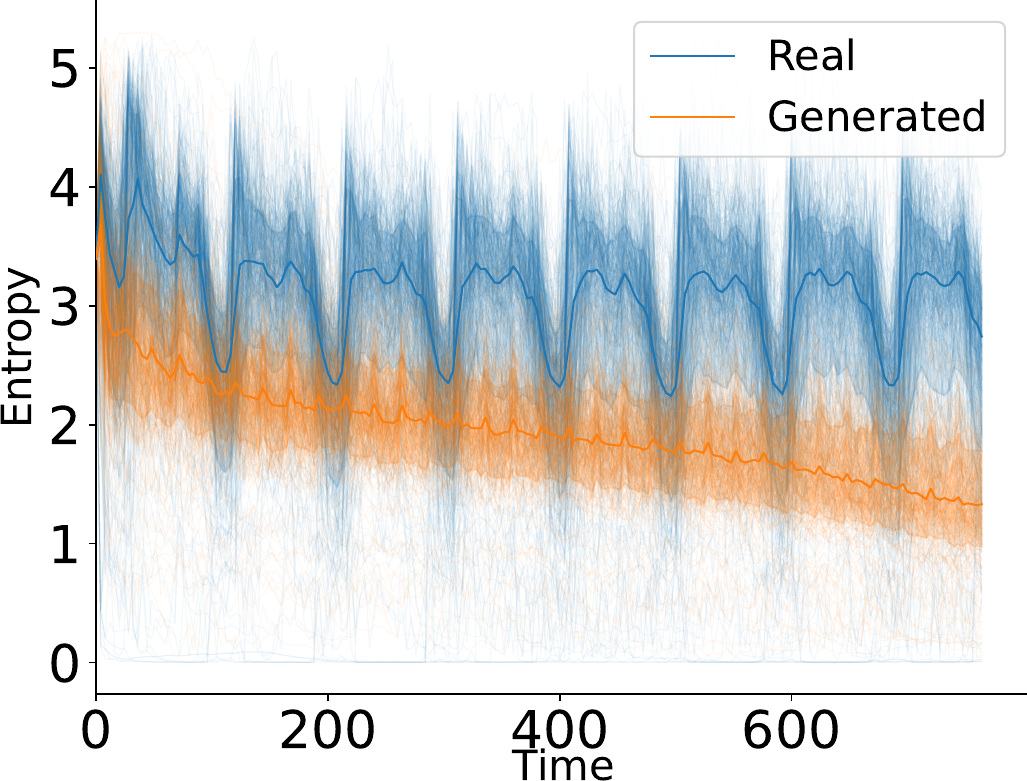}
    }
    \subfloat[Max-Probability]{
    \includegraphics[width=.32\linewidth]{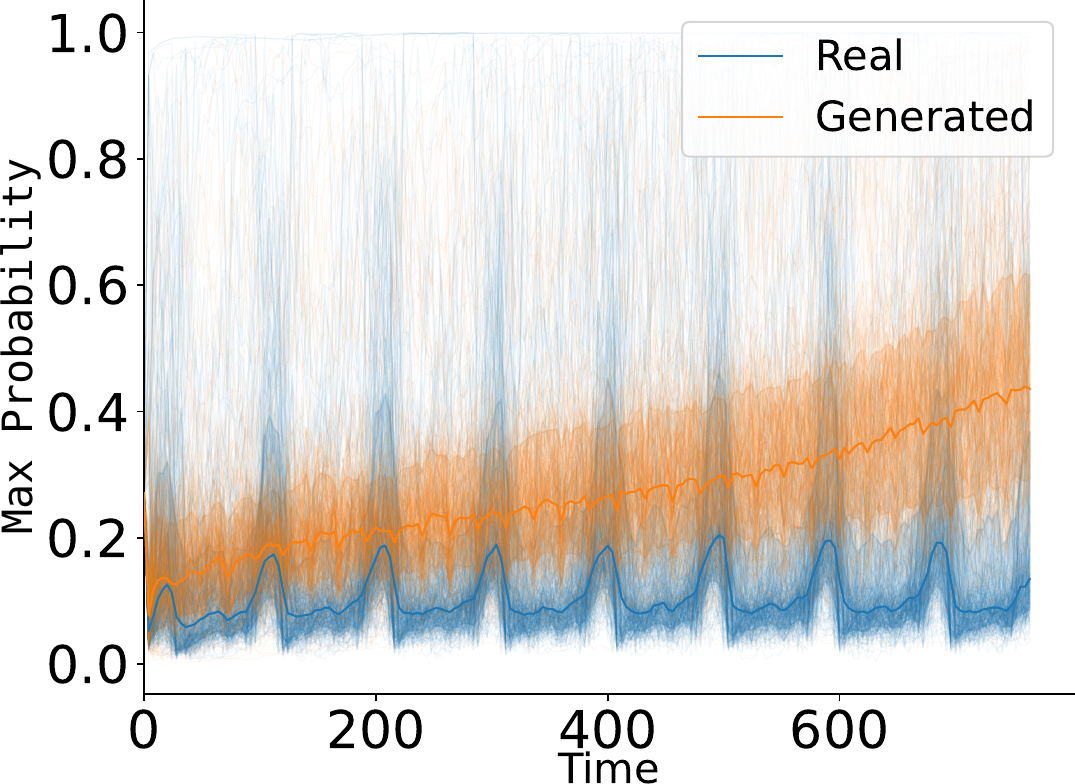}
    }
    \subfloat[Variance]{
    \includegraphics[width=.32\linewidth]{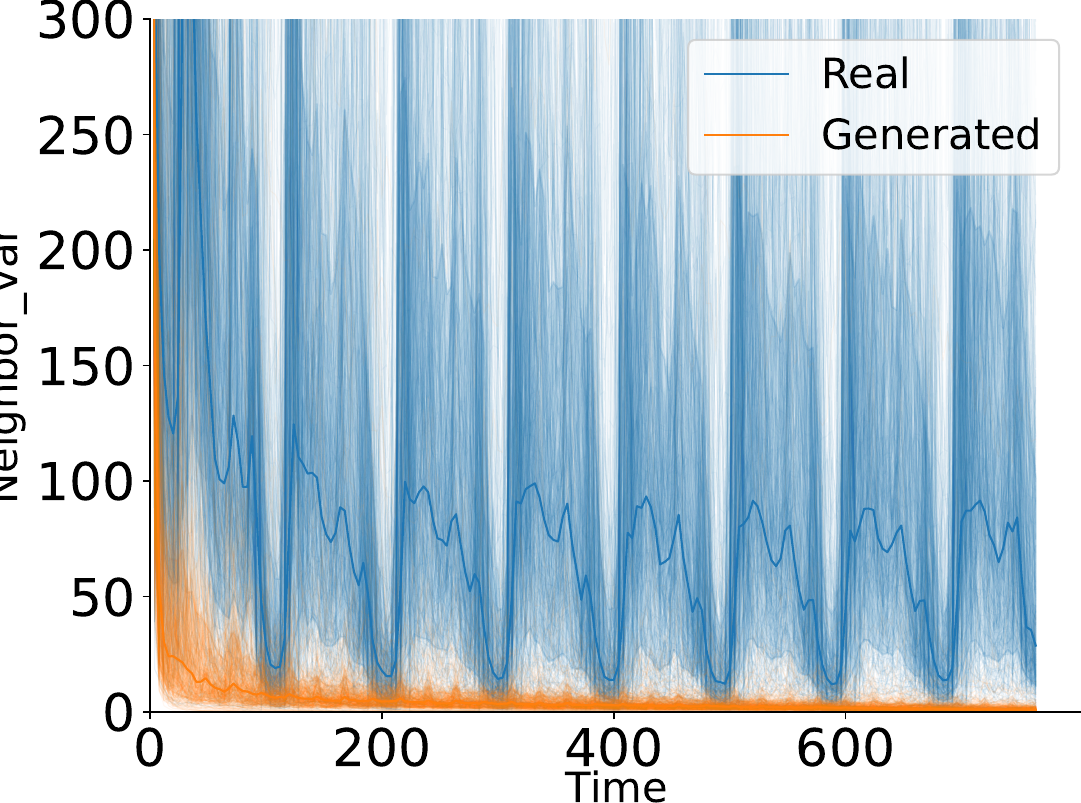}
    }
    \caption{Uncertainty reduction for Timer-generated time series.}
    \label{fig:Timer_uncertainty}
\end{figure*}

\begin{figure*}[t!]
    \centering
    \subfloat[Entropy]{
    \includegraphics[width=.32\linewidth]{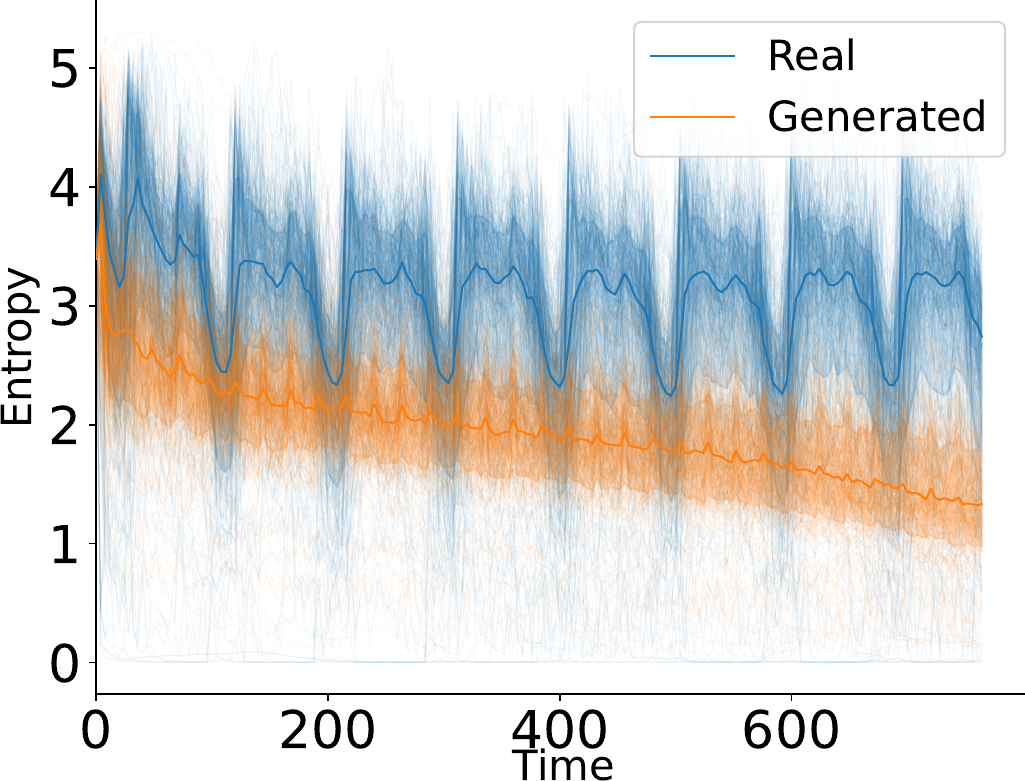}
    }
    \subfloat[Max-Probability]{
    \includegraphics[width=.32\linewidth]{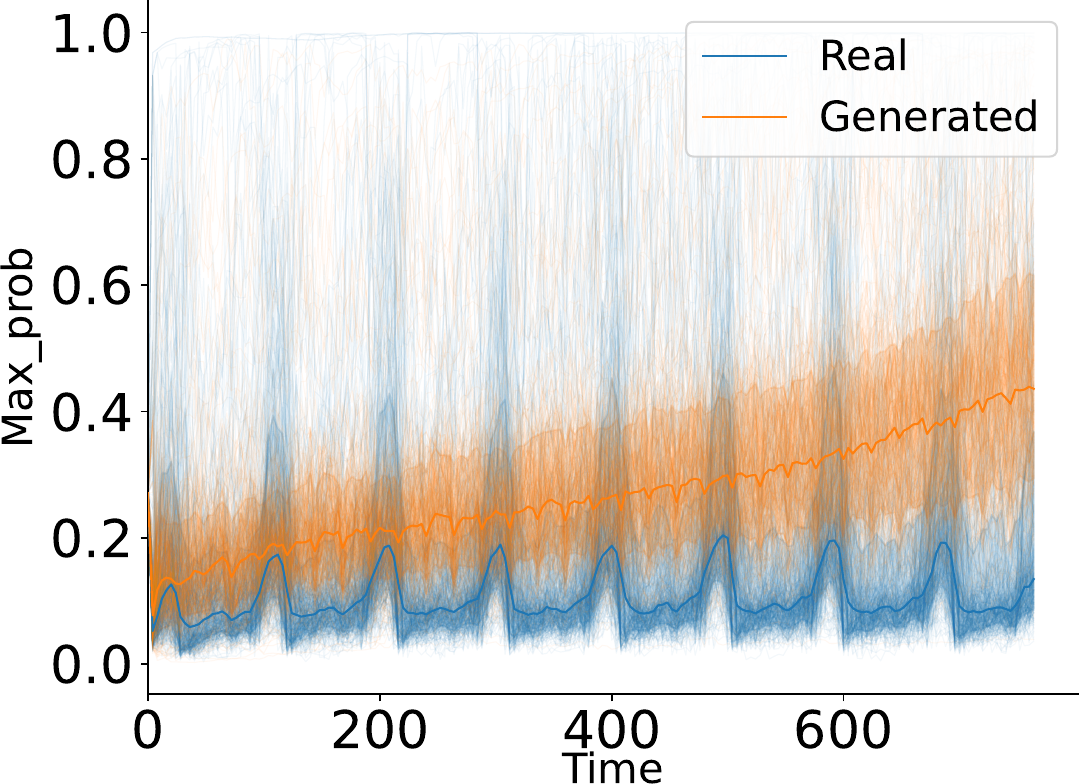}
    }
    \subfloat[Variance]{
    \includegraphics[width=.32\linewidth]{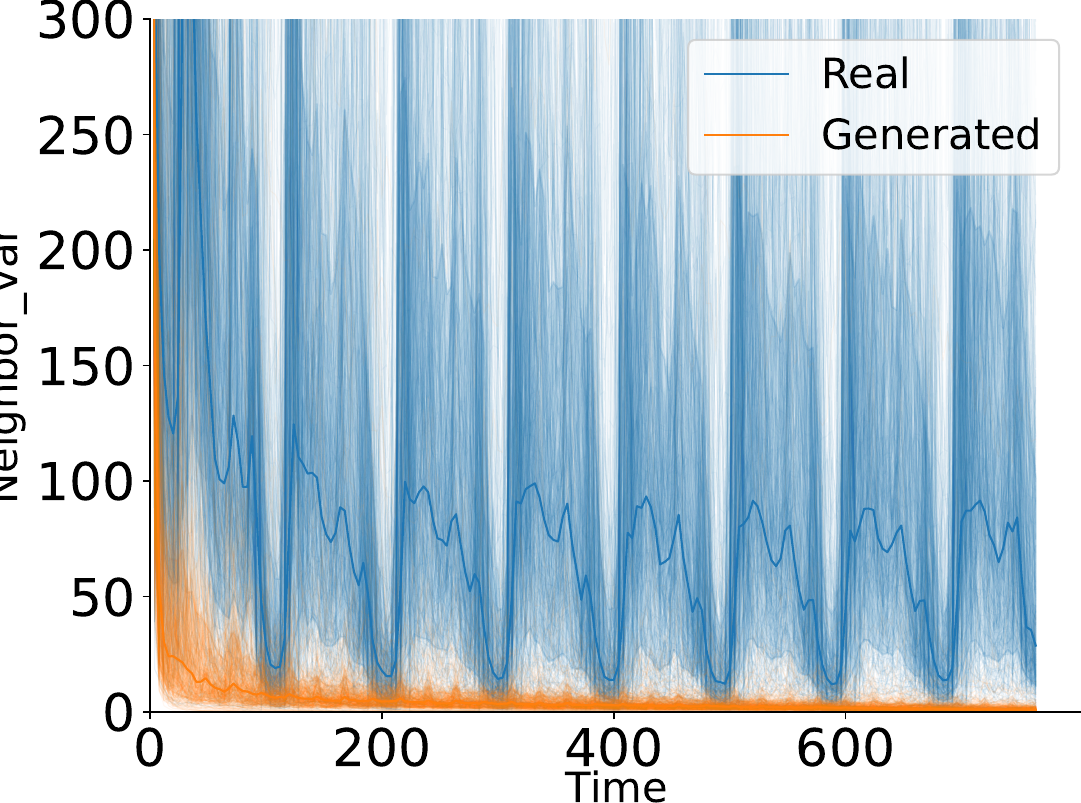}
    }
    \caption{Uncertainty reduction for TimeMoE-generated time series.}
    \label{fig:TimeMoE_uncertainty}
\end{figure*}

\subsection{Intrinsic Dimension}
Intrinsic Dimension method~\cite{Intrinsic_dimension} treats a sequence of tokens as a point cloud on a low-dimensional manifold in the embedding space and estimates its ``persistent homology dimension'' via topological data analysis (TDA).
In particular, it measures the 0-dimensional persistence lifetimes by computing the total edge-length of the point cloud’s minimum spanning tree (MST).

% The life span of the 0-dimensional persistent homology is utilized for intrinsic dimension estimation, which is measured by the total length of its minimal spanning tree (MST).

Concretely, each token is mapped to a high-dimensional vector using a pre-trained Transformer encoder.
In our adaptation of Intrinsic Dimension method to time series, we feed the entire prefix of a token into Chronos-T5 (large) and take its pooled embedding, since token semantics depend on the full context.
We then apply the original parameter settings from~\cite{Intrinsic_dimension}:
\begin{itemize}
    \item \textbf{Weight exponent} $\alpha=1.0$: each MST edge length $L$ contributes as $L^\alpha=L$.
    \item \textbf{Distance metric}: Euclidean distance in $\mathbb{R}^n$ is used to build the MST.
    \item  \textbf{Number of scales} $k=8$: the range of sample sizes is partitioned into eight equally spaced values.
    \item \textbf{Subsampling per scale} $J=7$: for each scale, seven random subsets are drawn and their MST lengths computed, with the median retained.
    \item \textbf{Repeat runs} $n=3$: the slope is calculated three times to obtain the average slope and recover the intrinsic dimension.
\end{itemize}

\subsection{FourierGPT}
FourierGPT~\cite{FourierGPT} generalizes from the uniform information density (UID) hypothesis from psycholinguistic, which posits that human language tends to distribute information evenly across utterance.
Unlike conventional white-box detectors that operate on raw token probabilities, FourierGPT analyze the spectrum of relative token likelihoods to better capture this linguistic property.
This method posits that human-generated text exhibits lower low-frequency power in relative likelihood sequence than model-generated text.
Leveraging this finding, both supervised and unsupervised classifiers are proposed for detection.

In our adaptation of FourierGPT to time series data, we perform the same normalization and frequency‑domain decomposition to the likelihood sequence, and follow the unsupervised FourierGPT's configuration to use the sum of the first ten low‑frequency components’ magnitudes as the classifier.

\subsection{Binocular}
Binocular~\cite{Binocular} proposes that the hidden context or prompt of text segments can introduce additional perplexity into the passage, thereby confusing the probability-based detectors and impairing their performance.
To mitigate the contextual conditioning of text distributions, the Binocular method introduces an observer model.
By measuring the cross-perplexity of texts under the probability distributions of both models, it reconstructs the text's original perplexity independently of contextual bias.

In our adaptation of Binocular, we use Chronos-T5 (tiny) as the observer model and Chronos-T5 (large) as the performer model.

\subsection{Traditional Metrics}
Traditional metrics, including log-probability, rank and log rank~\cite{gltr,automatic,Release_Strategies}, leverage the model's logit distribution to quantify how likely each token is under the model.

% \noindent\textbf{Traditional metrics}~\cite{gltr,automatic,Release_Strategies}. These metrics utilize the model to extract statistic features of the tokens, such as Log Likelihood ($\log p$), Rank and Log Rank.

\section{Supplementary Experiments and Results}\label{apdx:ext_exp}
\subsection{Compute Resources and Hyperparameter Details}\label{apdx:hyperparameter}
In this section we provide the computational environment and the selection of hyperparameters of the UCE method.
All inference experiments were run on a single NVIDIA Tesla V100 GPU (approximately 3000 MiB of dedicated memory) with no additional training required.

Given a fixed time series length $H$, which typically takes the value of $64$ in the main experiment, the value of the starting point $t_1$ is discussed in \cref{ext_exp:ratio}, and typically $t_1=0.75\cdot H$.
We set $\Delta t=1$, and therefore $N=H-t_1$.
For $H=64$, there is $N=16$.
When confining the neighborhood $\mathcal{U}$, we set the center of the neighborhood to be the mean value of the internal probability distribution $\mu=\sum_{x}x\cdot \hat{P}(x)$, and the radius of $\mathcal{U}$ to be $50$ under the size of the vocabulary to be $\left|\mathcal{V}\right|=4096$.
The selection of hyperparameters is highly flexible according to length $H$ and vocabulary size $\left|\mathcal{V}\right|$.

\subsection{Runtime Efficiency}
We compare the runtime of different detection methods by measuring their average processing time per instance, as shown in \cref{fig:runtime}.
Among white-box methods, UCE achieves the second fastest runtime ($0.47\ \mathrm{s}$), significantly outperforming the baseline approaches such as DetectGPT (23.18 s), DetectLLM-NPR (22.93 s) and DNA-GPT (16.30 s).
These methods require multiple resampling or perturbations, which introduce additional computation.
Traditional statistical methods such as Log Likelihood (8.32 s), FourierGPT (8.35 s) and DetectLLM-LLR (8.32 s) that utilize Log Rank show moderate latency.
These methods infer the logit distribution for every token for a single sequence, while Fast-DetectGPT (9.32 s) only infers the logit distribution over subsets of tokens across multiple perturbed sequences.
Binocular requires simultaneous inference of the observer model and the performer model. With both models loaded in parallel (which is the optimal scenario), the runtime of this method (9.98 s) is determined by the slower of the two.
Although both categories achieve substantial speedups, they lag behind UCE, which only computes a small portion (around 1/8) of token logits in a single sequence.
Noting that the Intrinsic Dimension method requires a relatively small proportion of embeddings to estimate the persistent homology dimension, which leads to the fastest runtime (0.33 s). However, for stability in performance, multiple reruns are often required (typically 3-10 times).
This significantly increases the effective runtime to 1-3 seconds, reducing its nominal advantage.
Therefore, even after adjusting for practical usage scenarios, UCE retains a clear advantage in both efficiency and real-world applicability.

\begin{figure*}[t!]
    \centering
    \subfloat[Entropy\label{fig:hist_entropy}]{%
      \includegraphics[width=.32\linewidth]{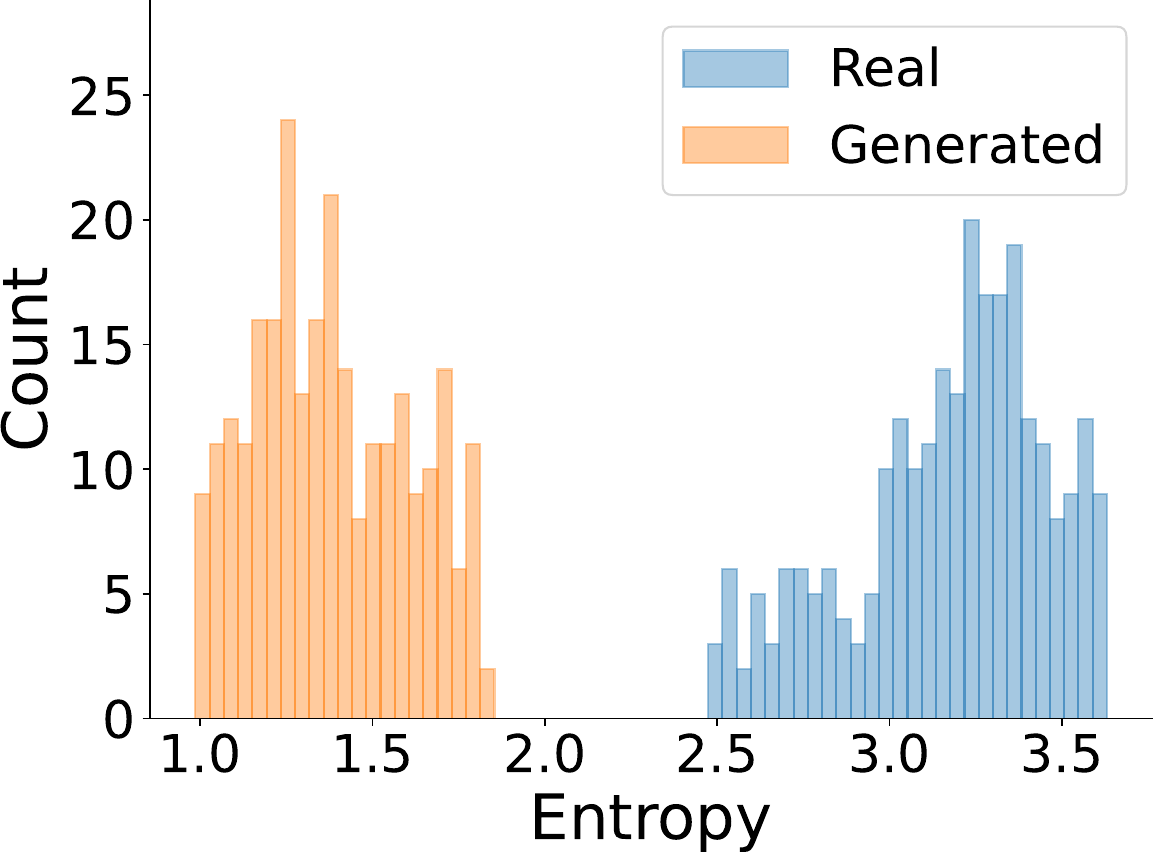}}
    \subfloat[Max-probability\label{fig:hist_max_prob}]{%
      \includegraphics[width=.32\linewidth]{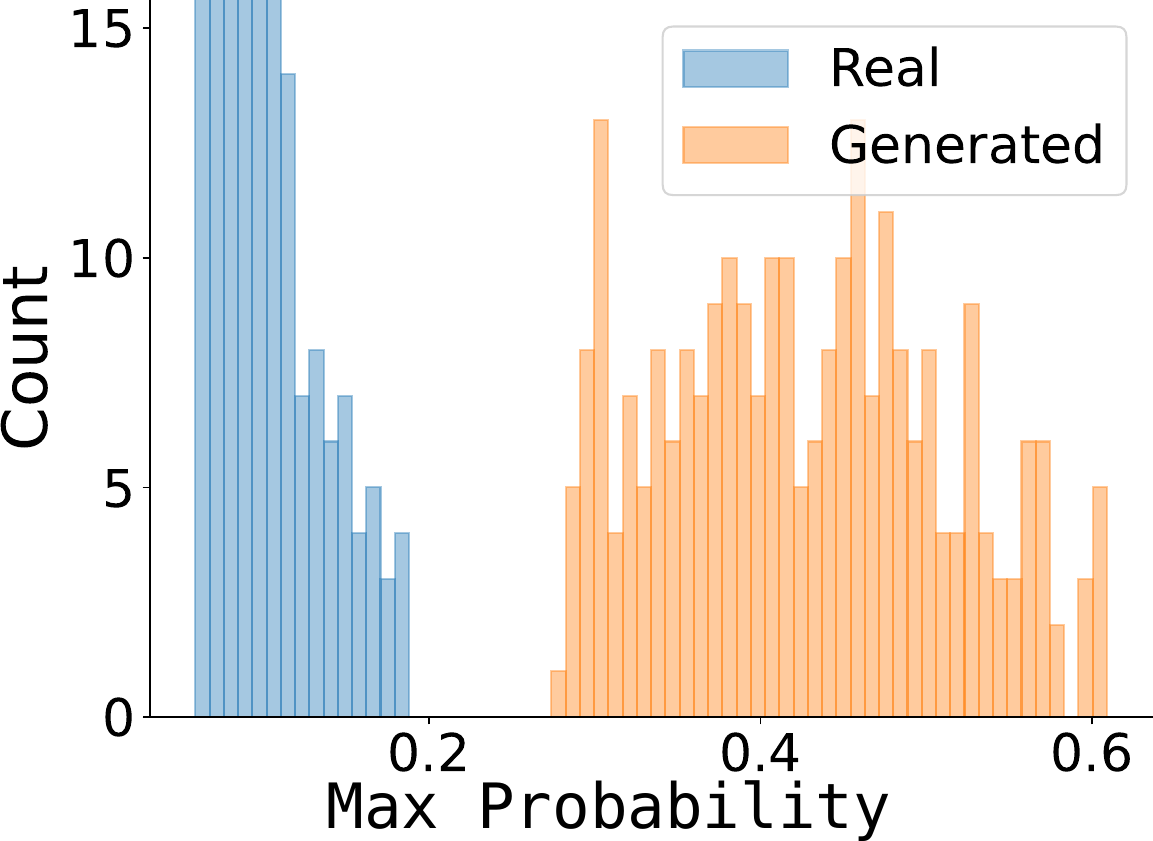}}
    \subfloat[Variance\label{fig:hist_var}]{%
      \includegraphics[width=.32\linewidth]{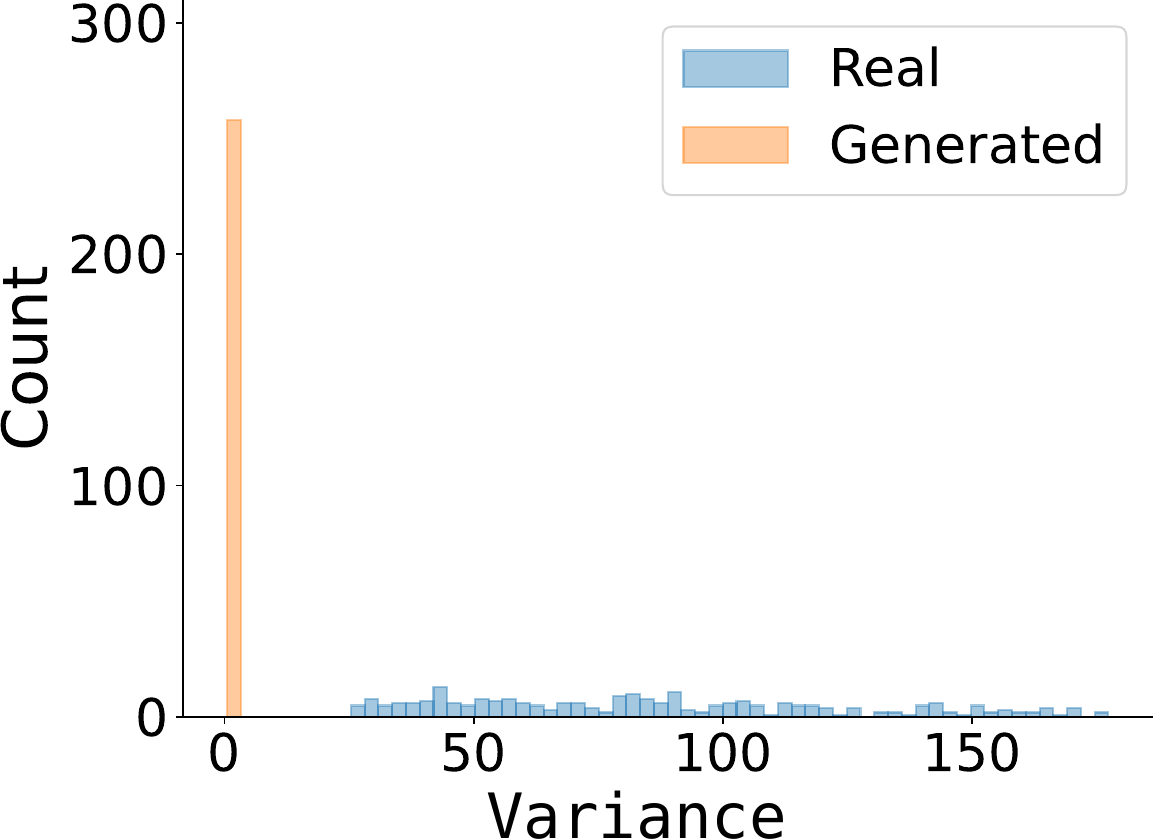}}
    \caption{Uncertainty discrepancy for {real} (blue) and
      Timer-generated (orange) time series}
      \label{fig:hist}
\end{figure*}
\begin{figure*}[t!]
  \centering
  \subfloat[Entropy\label{fig:hist_entropy_TimeMoE}]{%
    \includegraphics[width=.32\linewidth]{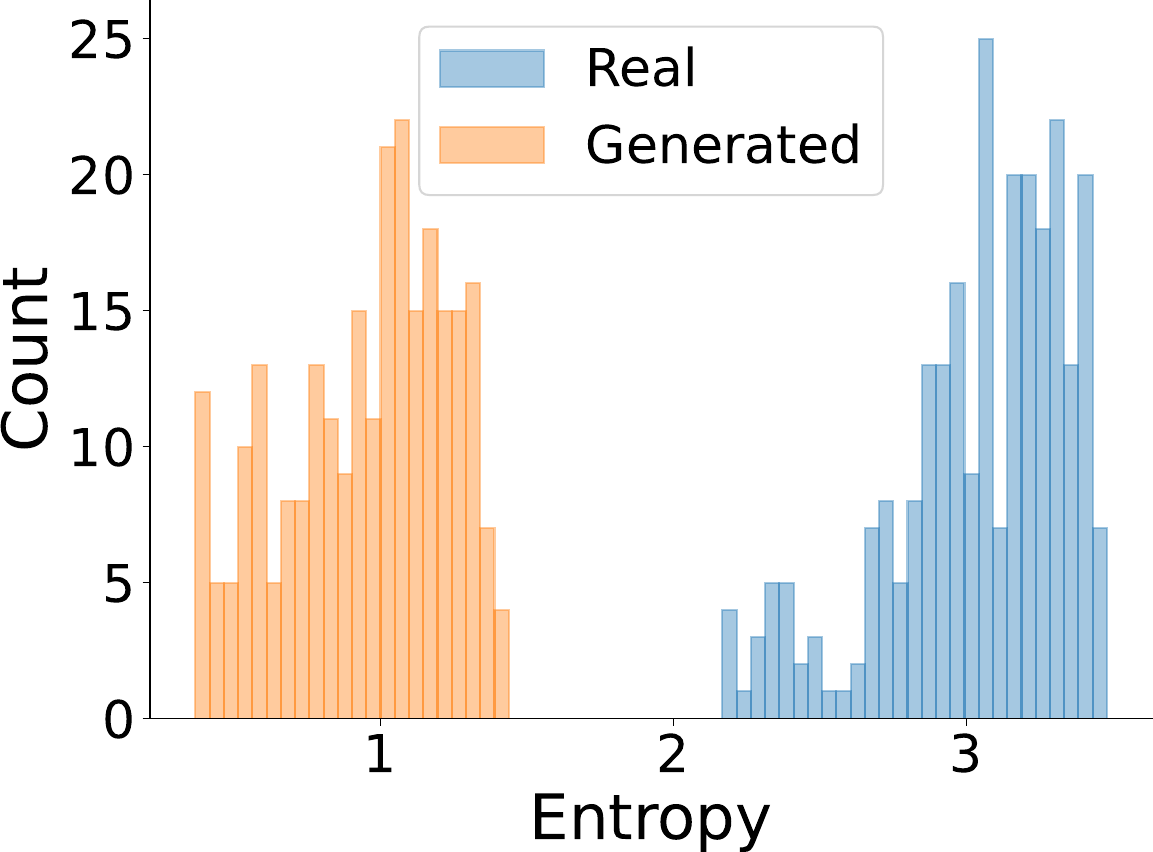}}
  \subfloat[Max-probability\label{fig:hist_max_prob_TimeMoE}]{%
    \includegraphics[width=.32\linewidth]{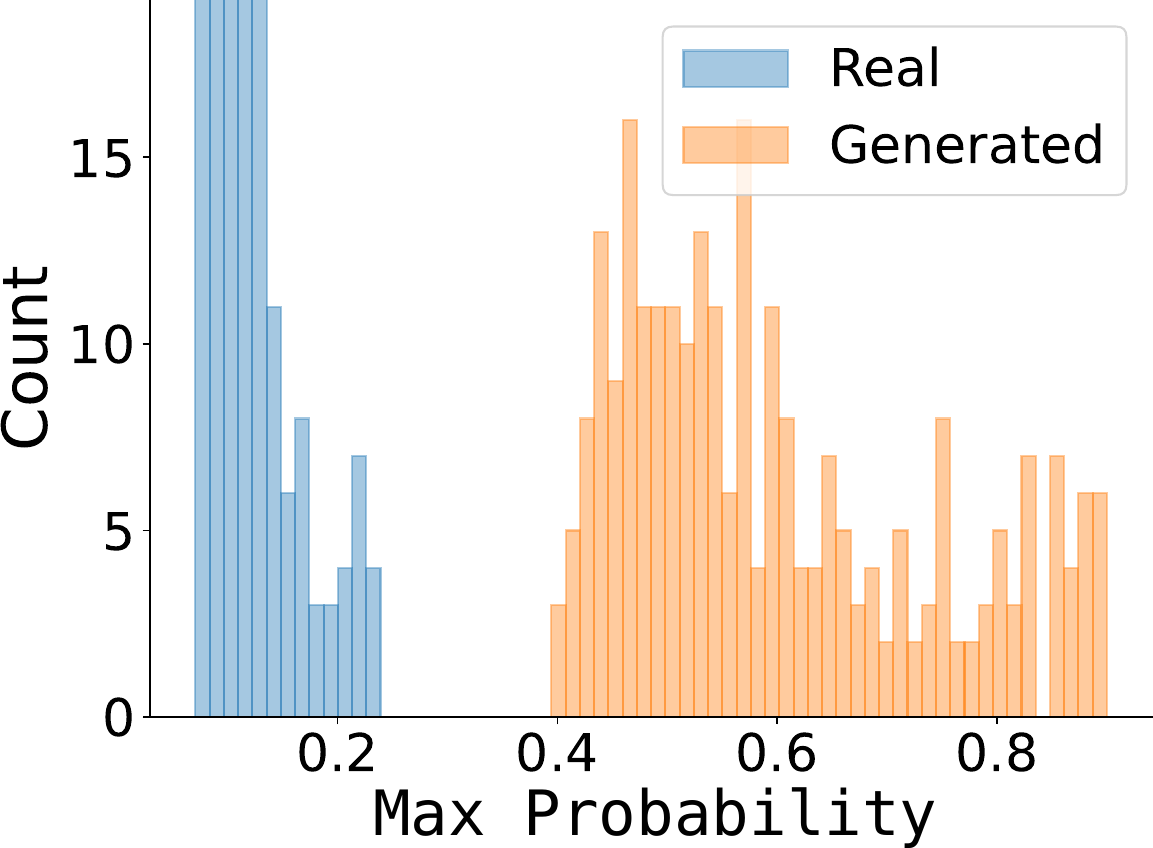}}
  \subfloat[Variance\label{fig:hist_var_TimeMoE}]{%
    \includegraphics[width=.32\linewidth]{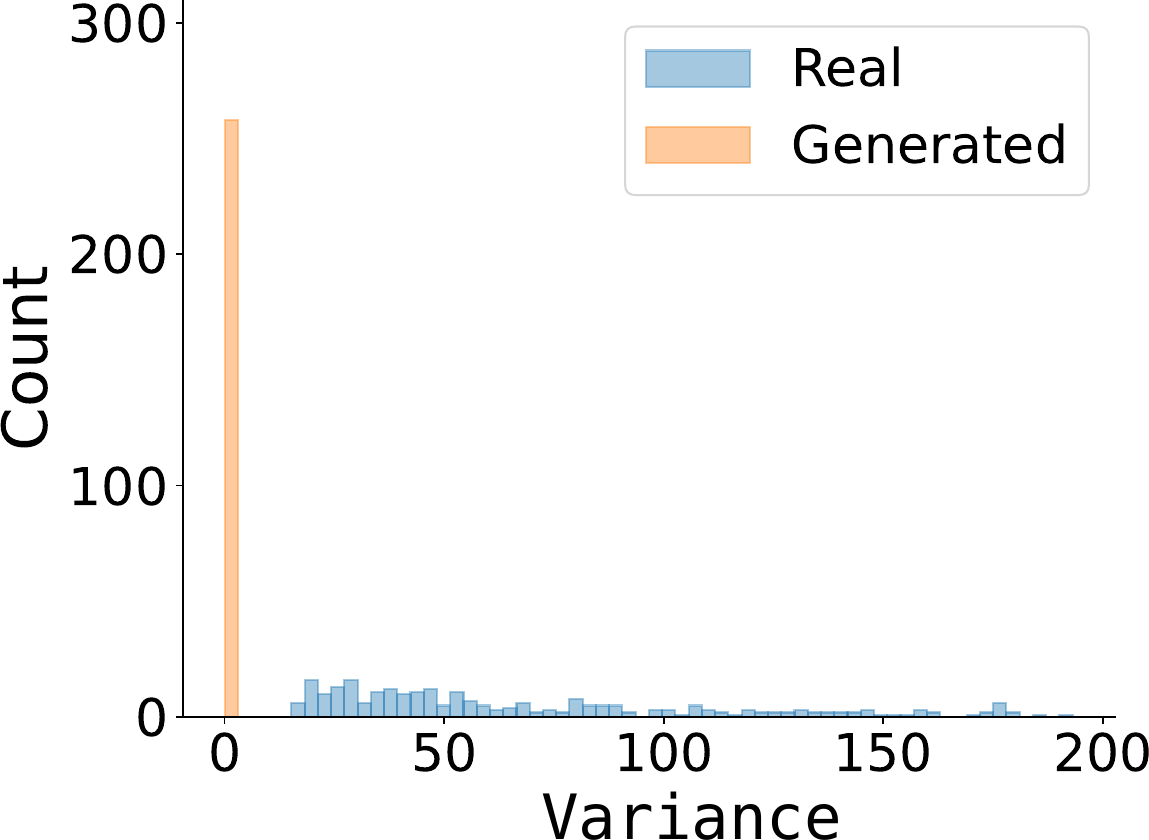}}
  \caption{Uncertainty discrepancy for {real} (blue) and
    {TimeMoE-generated} (orange) time series.}
  \label{fig:hist_TimeMoE}
\end{figure*}

\begin{figure}[t!]
    \centering
    \includegraphics[width=.9\linewidth]{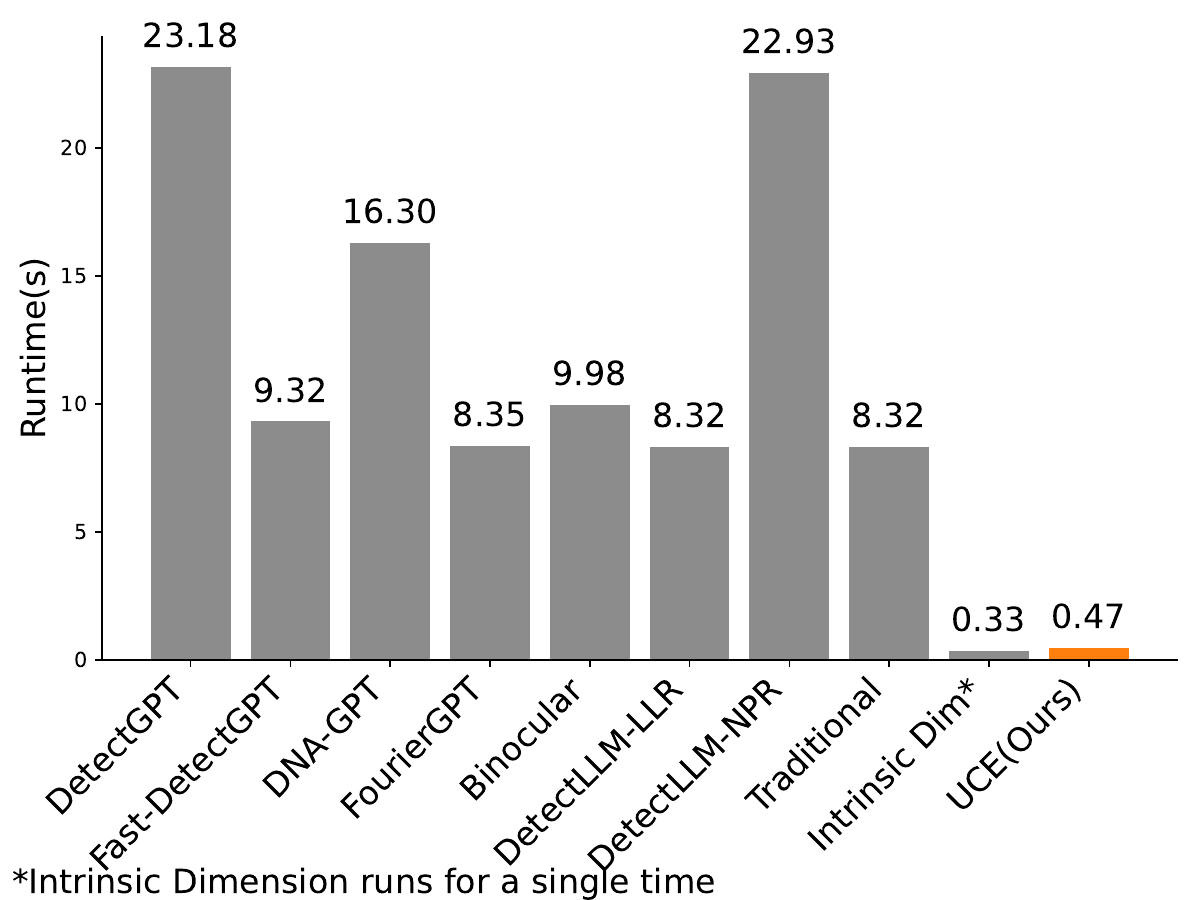}
    \caption{Runtime (in seconds) of different detection methods on a single input sequence. Although Intrinsic Dimension is shown with the shortest runtime, it typically requires multiple repeated runs to achieve stable results in practice. Overall, UCE is far more efficient, particularly when compared with white-box approaches.}
    \label{fig:runtime}
\end{figure}

\subsection{Detection with Shorter Time Series Length}
\label{ext_exp:short length}
To test the robustness of UCE on relatively shorter input sequences, we conduct detection experiments on sequences with an average length of $H=24$.
According to the Contraction Hypothesis, shorter time series exhibit a smaller overall reduction in uncertainty, thereby degrading detection performance.
This phenomenon aligns with previous findings in text generation detection that shorter texts are less discriminative.

As shown in \cref{tab:short length}, all methods experience a general drop in performance under this setting.
Nevertheless, UCE maintains a consistent advantage in TPR for in-distribution datasets, while achieving comparable AUROC scores to the baselines.
On zero-shot datasets, UCE sustains a slight but stable lead in both AUROC and TPR.
These results demonstrate that UCE remains effective and relatively stable across varying sequence lengths, even under constrained temporal contexts.

\begin{table}[t]
\small
\centering

\resizebox{\linewidth}{!}{
\begin{tabular}{@{}lcccc@{}}
\toprule
 \multicolumn{1}{l}{\multirow{2}{*}{\makecell{\\ \textbf{Dataset}}}}
&  \multicolumn{2}{c}{In-Domain Datasets Avg. }& \multicolumn{2}{c}{Zero-Shot Datasets Avg.}
 \\
  \cmidrule(lr){2-3} \cmidrule(lr){4-5}

   & \makecell{AUROC} & \makecell{TPR}  & \makecell{AUROC} & \makecell{TPR}\\

%  \midrule
%  % \hdashline
% Prediction Length $H$ &64\\
% MASE &0.663\\
\midrule
$\log p(x)$
                            &0.730   &0.101
                            &{0.640}   &{0.171}
\\
Rank
                            &0.734   &0.115
                            &0.615   &0.147
\\
LogRank
                            &\underline{\textbf{0.767}}   &0.119
                            &{0.644}   &0.171
\\
% Entropy
%                             &0.63   &0.05
%                             &0.49   &0.06
% \\
DetectGPT
                            &0.620   &0.032
                            &0.631   &0.105
\\
Fast-DetectGPT
                            &0.690   &0.047
                            &0.611   &0.091
\\
DetectLLM-LLR
                            &\underline{0.760}   &{0.213}
                            &{0.645}   &0.159
\\
DetectLLM-NPR
                            &0.685   &0.065
                            &0.626   &0.121
\\
DNA-GPT
                            &0.476   &0.065
                            &0.462   &0.069
\\
FourierGPT                  &0.528  &0.030
                            &0.505  &0.013

\\
Binocular                   &0.523  &0.035
                            &0.597  &0.037

\\
Intrinsic Dimension
                            &0.600   &0.105
                            &0.536   &0.041

\\
\midrule
\OurMethod -Entropy
                            &0.751   &\underline{0.264}
                            &\underline{0.692}   &\underline{0.199}
\\

% \hdashline
\quad -Max Prob
                            &{0.749}   &\underline{\textbf{0.277}}
                            &\underline{\textbf{0.703}}   &\underline{\textbf{0.211}}
\\
% \OurMethod (max Prob slope)\\

% \OurMethod (Entropy slope)\\
\quad -Variance
                            &0.738   &0.116
                            &0.614   &0.115
\\

\bottomrule
\end{tabular}
}
\caption{AUROC and TPR (at 1\% FPR) for detecting samples with shorter lengths.
The \textbf{\underline{best}} and \underline{second-best} results are highlighted.
% Methods achieving the first (bold underline) and second (underline) best score have been highlighted.
There are declines in performance as the length decreases.}
\label{tab:short length}
\end{table}

% In-domain datasets and 0-shot datasets are split in 2 tables

\subsection{Starting Point Analysis}\label{ext_exp:ratio}
The UCE methods evaluate the variation of uncertainty in time series with unknown history.
This naturally give rise to two important properties: (1) early prediction distributions tend to be unstable due to insufficient history;
(2) the contraction of uncertainty is increasingly pronounced at later time points.
Motivated by  these observations, we adjust the evaluation window by excluding the initial prefix and delaying the starting point of prediction. Specifically, we set the forecasting start time to $t_0 = \alpha \cdot T$, and evaluate several values of $\alpha \in \{0, 0.25, 0.5, 0.75, 0.99\}$ to study its impact on detection performance.
Note that setting $\alpha = 0.99$ corresponds to using only the final token to obtain the predictive distribution to calculate UCE metrics.
The results, summarized in \cref{fig:ratio}, show that as $\alpha$ increases, AUROC remains stable or improves slightly while TPR shows modest fluctuation.
\begin{figure}[t]
  \centering
  \includegraphics[width=0.85\linewidth]{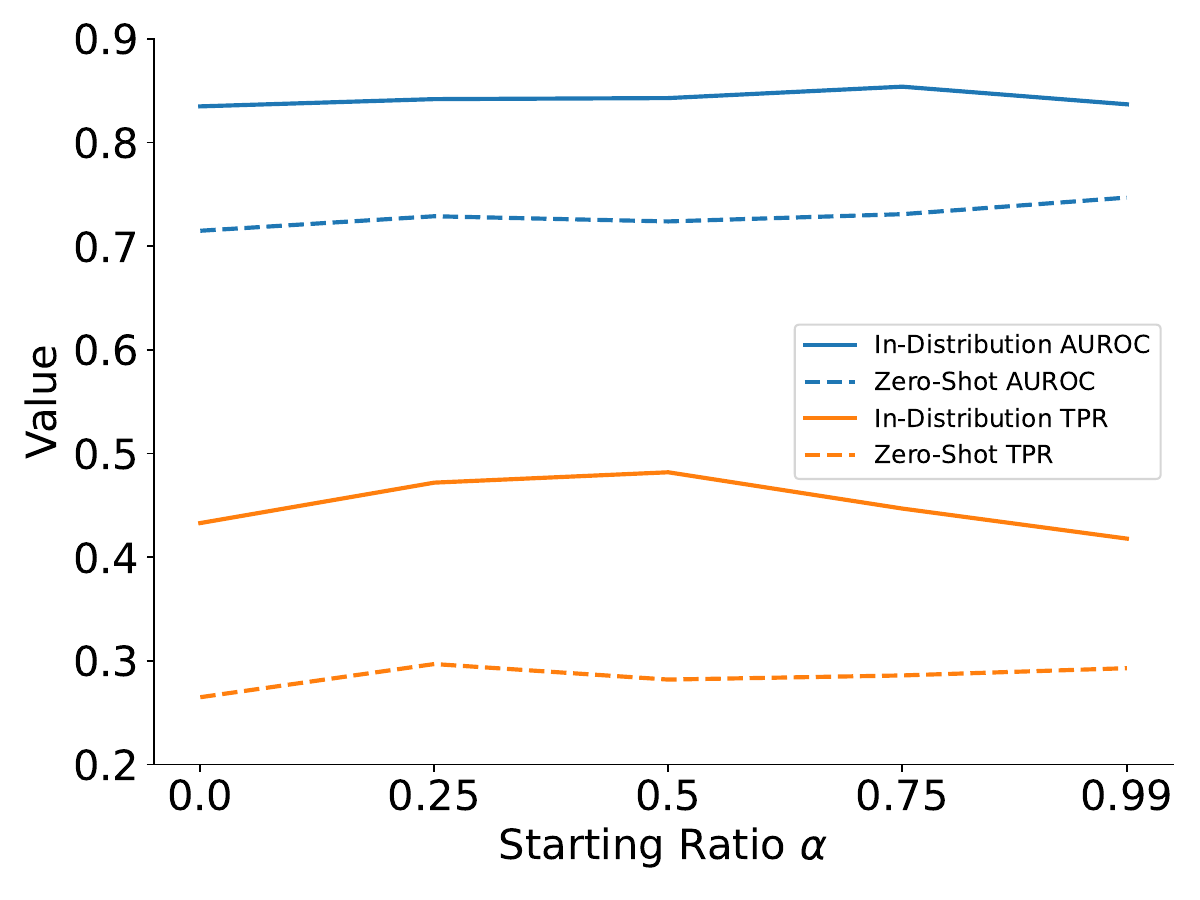}
  \caption{Effect of starting points $t_0=\alpha\cdot T$ on AUROC and TPR in both dataset categories.}
    \label{fig:ratio}
\end{figure}

Overall, detection performance is stable across a wide range of $\alpha$ values, suggesting that UCE is relatively robust to the choice of starting position.  We therefore set $\alpha = 0.75$ in our main experiments to balance AUROC and TPR performance.

\begin{figure*}[t!]
  \centering
  \subfloat[Entropy (Gaussian)\label{fig:normal_E}]{%
    \includegraphics[width=.30\textwidth]{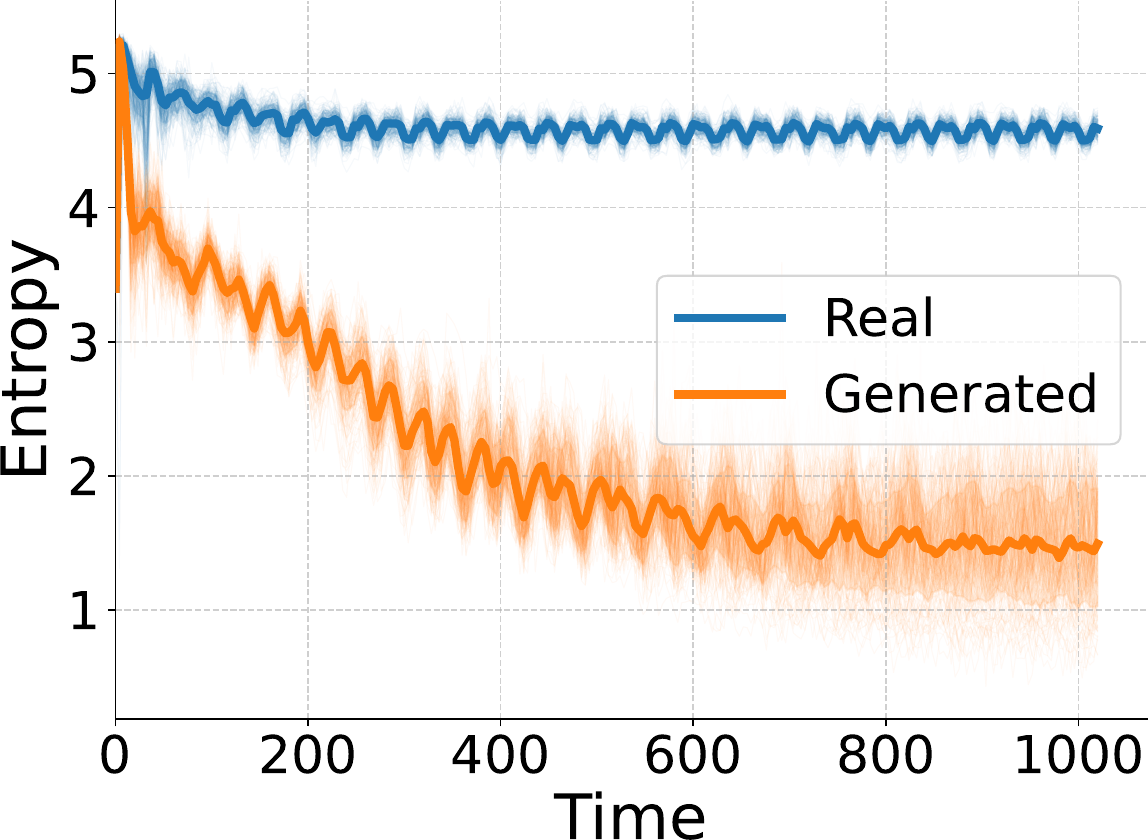}}
    \subfloat[Max-Probability (Gaussian)\label{fig:normal_M}]{%
    \includegraphics[width=.30\textwidth]{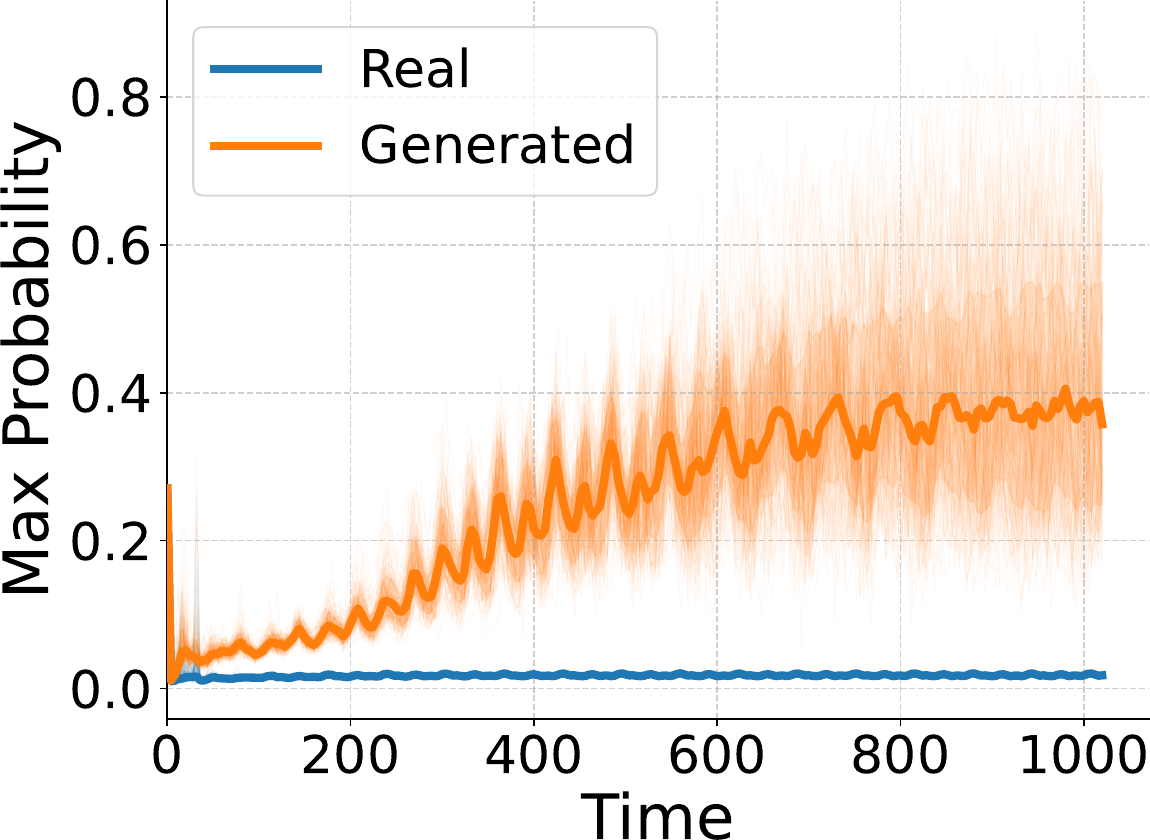}}
    \subfloat[Variance (Gaussian)\label{fig:normal_V}]{%
    \includegraphics[width=.30\textwidth]{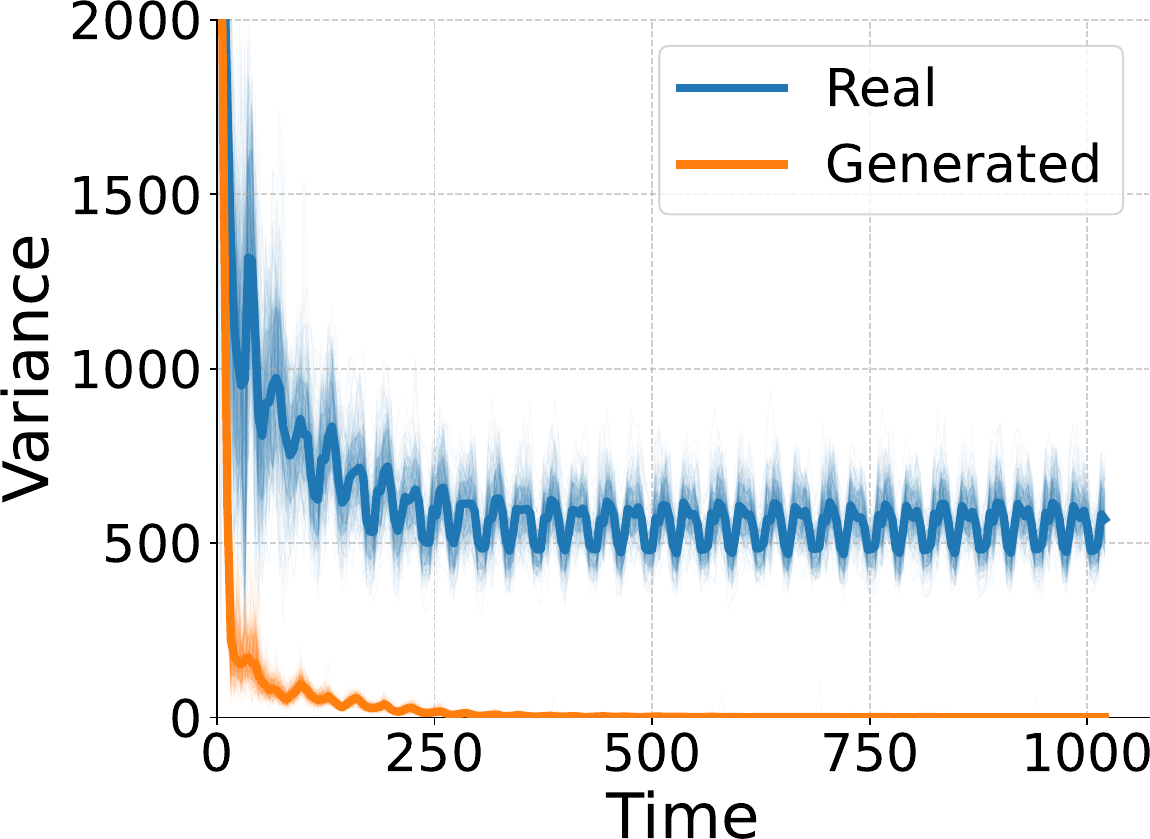}}
    \\
  \subfloat[Entropy (Laplacian)\label{fig:laplace_E}]{%
    \includegraphics[width=.30\textwidth]{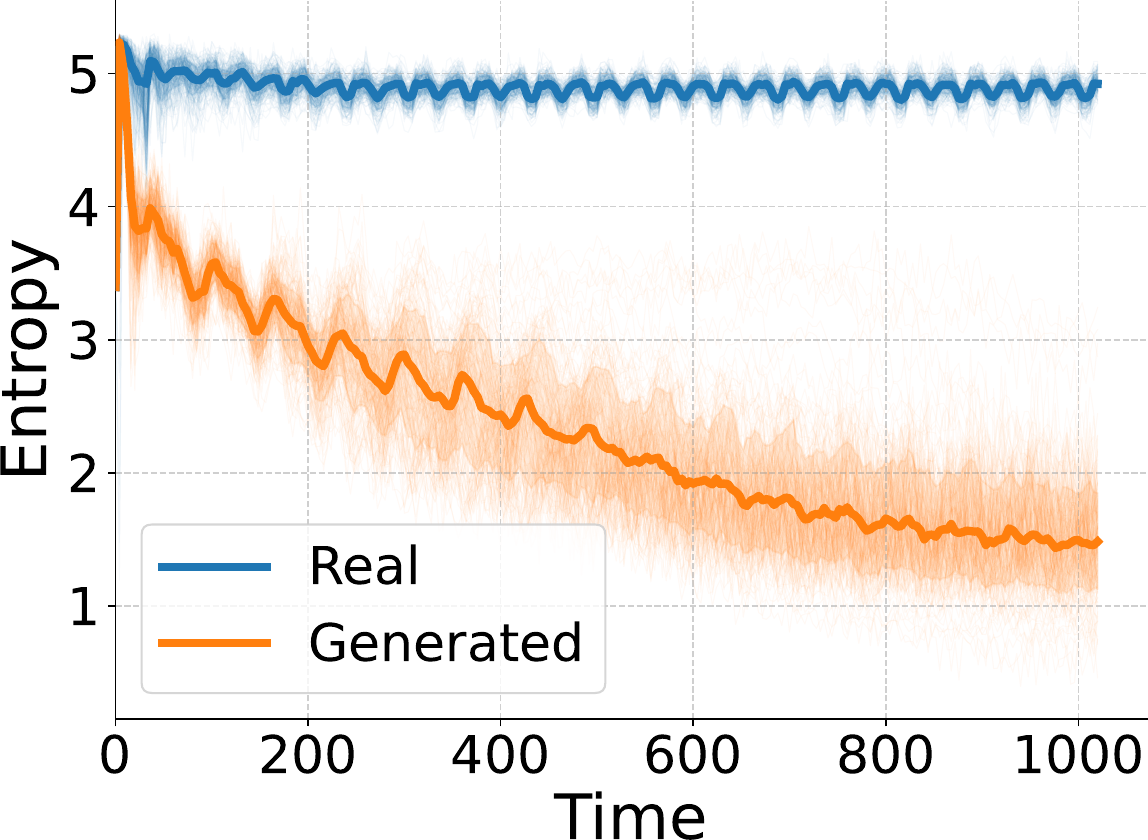}}
    \subfloat[Max-Probability (Laplacian)\label{fig:laplace_M}]{%
    \includegraphics[width=.30\textwidth]{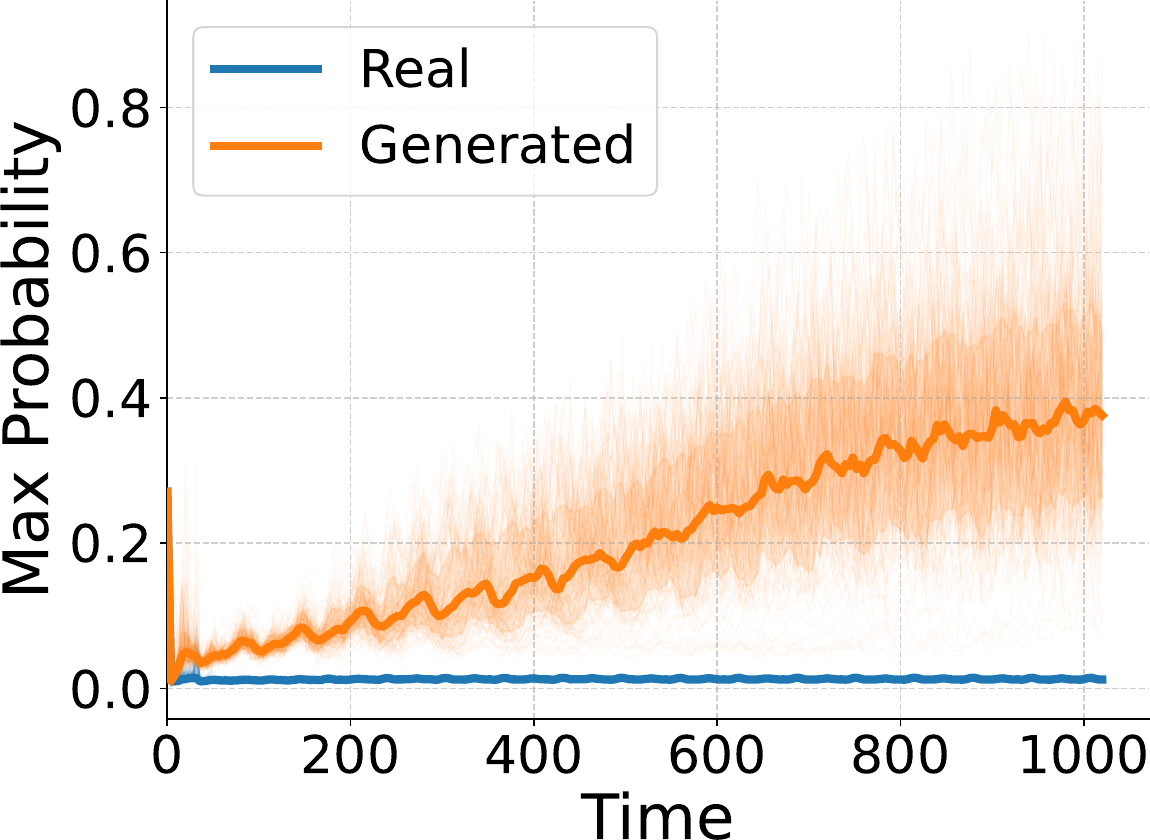}}
    \subfloat[Variance (Laplacian)\label{fig:laplace_V}]{%
    \includegraphics[width=.30\textwidth]{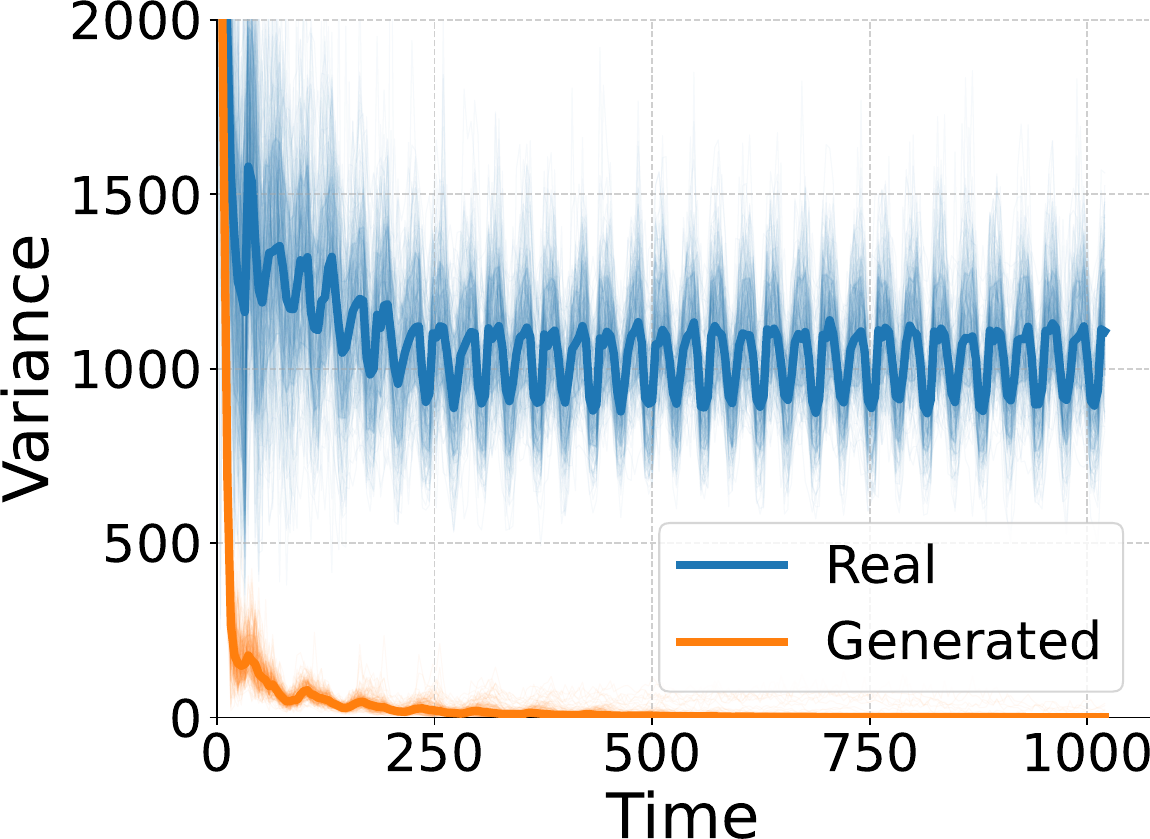}}
    \\
  % \subfloat[Entropy (Triangular)\label{fig:triangular_E}]{%
  %   \includegraphics[width=.30\textwidth]{pics/Triangular_entropy_combined.png}}
  %   \subfloat[Max-Probability (Triangular)\label{fig:triangular_M}]{%
  %   \includegraphics[width=.30\textwidth]{pics/Triangular_max_prob_combined.png}}
  %   \subfloat[Variance (Triangular)\label{fig:triangular_V}]{%
  %   \includegraphics[width=.30\textwidth]{pics/Triangular_var_combined.png}}
  %   \\
    \subfloat[Entropy (Cauchy)\label{fig:cauchy_E}]{%
    \includegraphics[width=.30\textwidth]{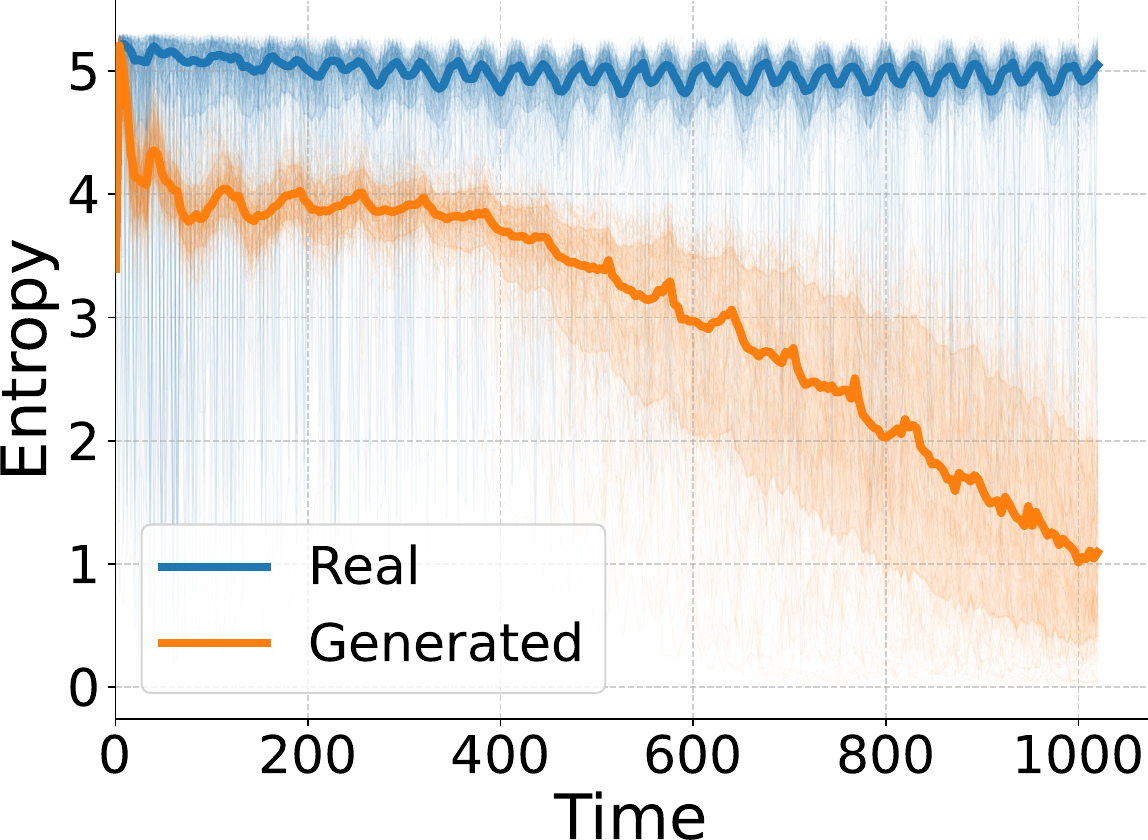}}
    \subfloat[Max-Probability (Cauchy)\label{fig:cauchy_M}]{%
    \includegraphics[width=.30\textwidth]{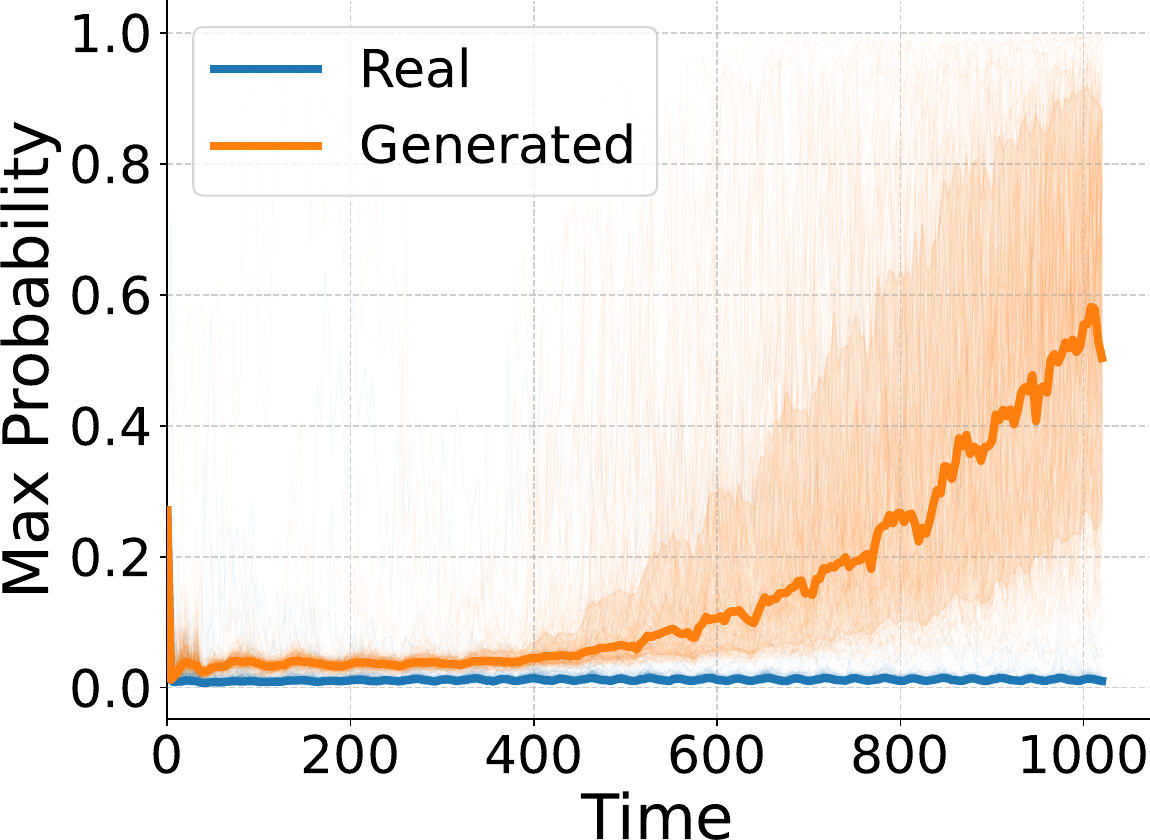}}
    \subfloat[Variance (Cauchy)\label{fig:cauchy_V}]{%
    \includegraphics[width=.30\textwidth]{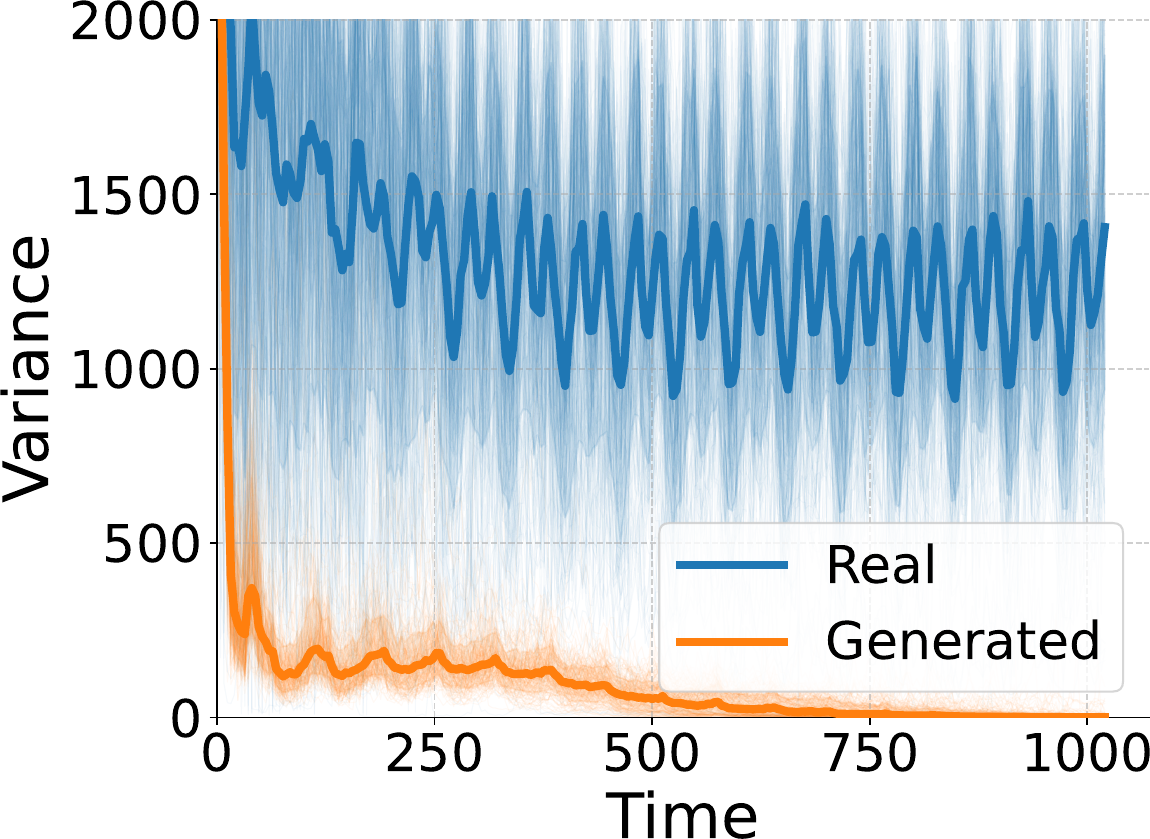}}
    \\
    \subfloat[Entropy (Student's T)\label{fig:student_E}]{%
    \includegraphics[width=.30\textwidth,height=0.25\textheight,keepaspectratio]{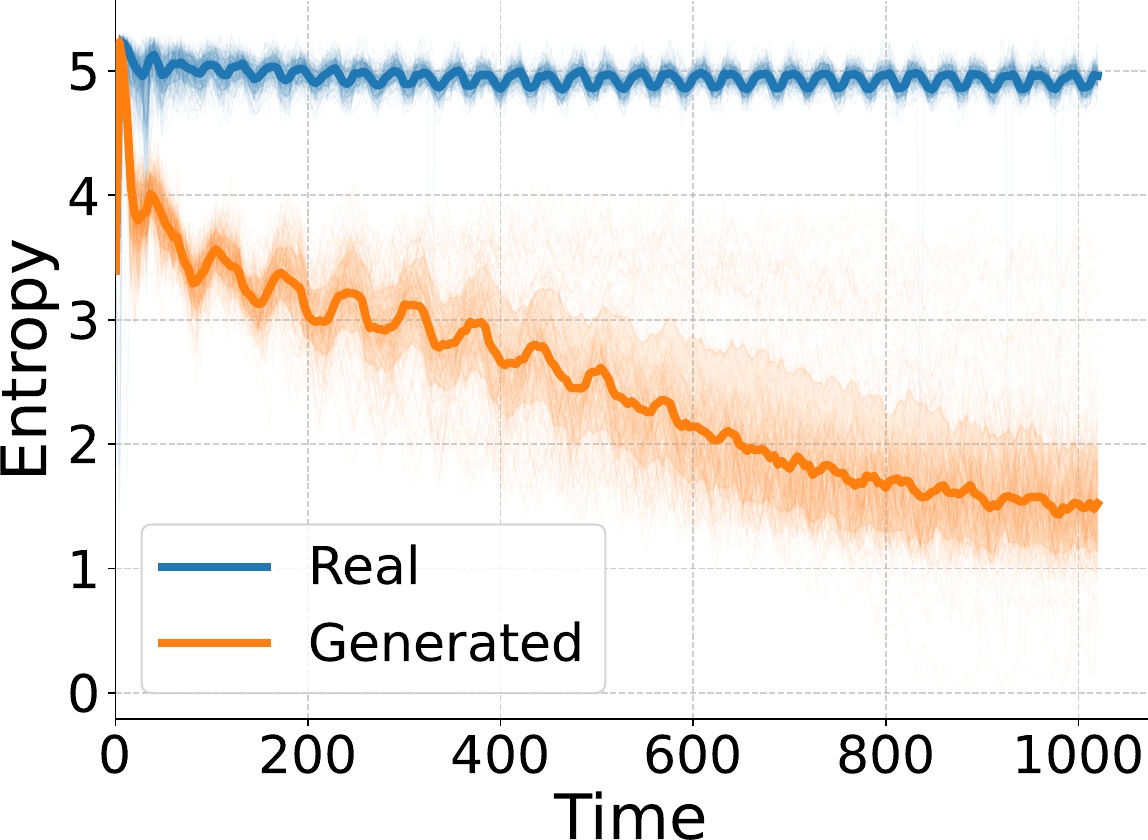}}
    % \subfloat[Entropy (Student's T)\label{fig:student_E}]{%
    % \includegraphics[width=.30\textwidth]{pics/Student_entropy_combined.png}}
    \subfloat[Max-Probability (Student's T)\label{fig:student_M}]{%
    \includegraphics[width=.30\textwidth]{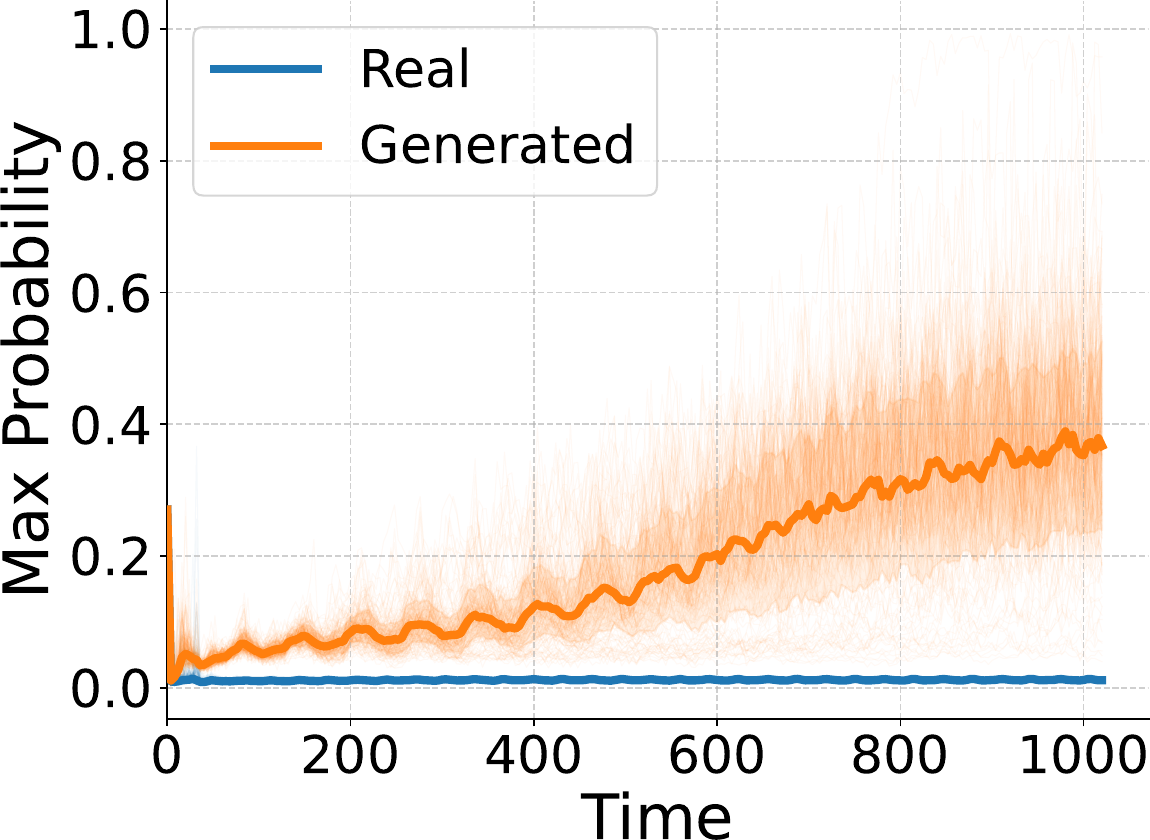}}
    \subfloat[Variance (Student's T)\label{fig:student_V}]{%
    \includegraphics[width=.30\textwidth]{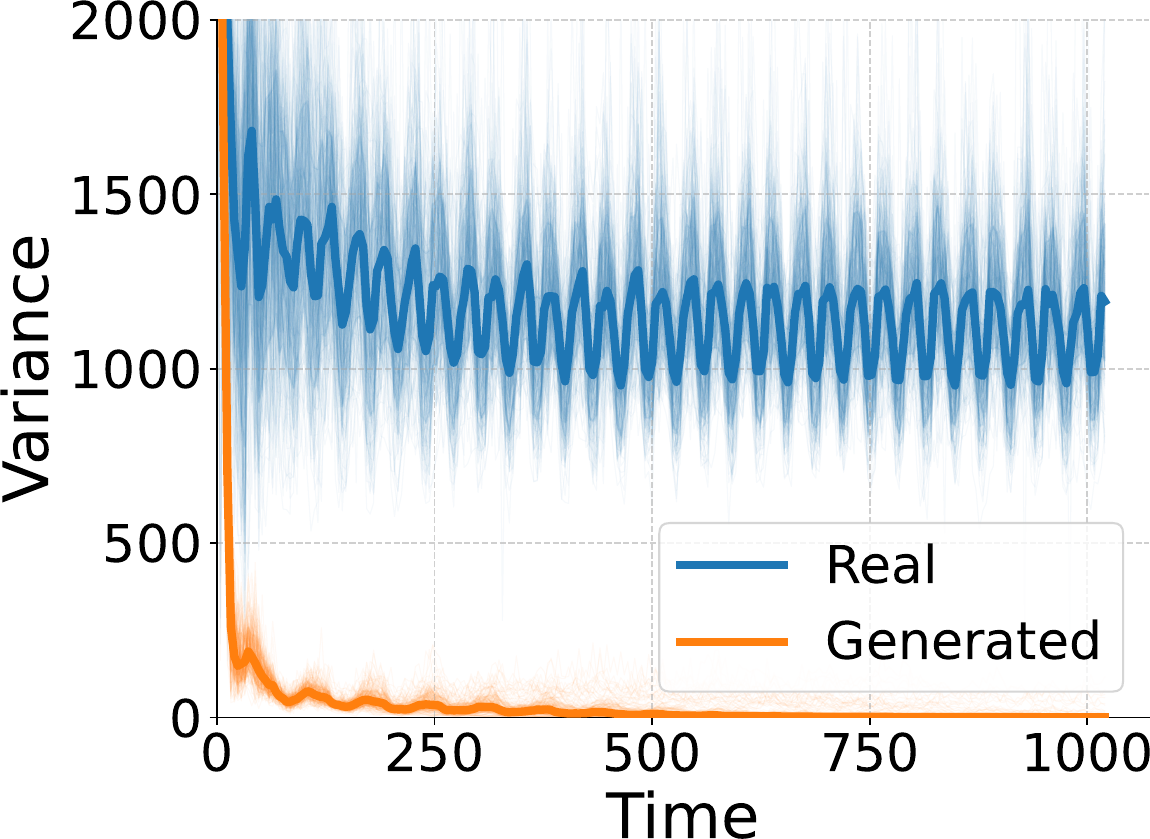}}

  \caption{Difference of entropy between {real} and
    {model-generated} data for various types of noise distributions on sine waves.}
    \label{fig:noise_type}
\end{figure*}

% In-domain datasets and 0-shot datasets are split in 2 tables

\begin{table*}[t!]
% \small
\centering

\tiny                                % 最小字体
\setlength\tabcolsep{2pt}            % 最小左右间距
\renewcommand{\arraystretch}{0.65}   % 紧凑行高
\captionsetup{skip=3pt}              % 减少 caption 与表格间距

\resizebox{\linewidth}{!}{
\begin{tabular}{@{}lcccccccccccc@{}}
\toprule
\multicolumn{1}{l}{\multirow{2}{*}{\makecell{\\ \textbf{Dataset}}}}
 &  \multicolumn{2}{c}{{Electricity (15M)}}&  \multicolumn{2}{c}{{Electricity (H)}}&  \multicolumn{2}{c}{{Electricity (W)}}&  \multicolumn{2}{c}{{KDD Cup}}& \multicolumn{2}{c}{{LDN SmtMeter}}& \multicolumn{2}{c}{{M4 (H)}}

 \\
  \cmidrule(lr){2-3} \cmidrule(lr){4-5}\cmidrule(lr){6-7}\cmidrule(lr){8-9}\cmidrule(lr){10-11}\cmidrule(lr){12-13}

   & \makecell{AUROC} & \makecell{TPR}  & \makecell{AUROC} & \makecell{TPR}
   & \makecell{AUROC} & \makecell{TPR}  & \makecell{AUROC} & \makecell{TPR}
   & \makecell{AUROC} & \makecell{TPR}& \makecell{AUROC} & \makecell{TPR}\\
\midrule
$\log p(x)$
&0.449   &0.014
&0.819   &\underline{0.140}
&0.667   &0.125
&0.472   &0.000
&0.822   &0.017
&0.730   &\underline{0.614}
\\
Rank
&0.473 & 0.035
&0.663   &0.081
&0.727   &0.137
&0.583   &0.000
&0.863   &0.029
&\textbf{\underline{0.800}}   &0.597
\\
LogRank
&0.529   &0.019
&0.800   &0.115
&0.691   &0.156
&0.639   &0.167
&0.869   &0.046
&0.773   &0.614
\\
DetectGPT
&0.404   &0.016
&0.453   &0.025
&0.760   &0.072
&0.444   &0.000
&0.676   &0.013
&0.507   &0.027
\\
Fast-DetectGPT
&0.463   &0.014
&0.733   &0.087
&0.691   &0.165
&0.417   &0.000
&0.796   &0.017
&0.515   &0.220
\\
DetectLLM-LLR
&0.727   &\underline{0.141}
&\underline{0.856}   &0.093
&0.637   &0.087
&0.667   &0.167
&0.863   &0.209
&\underline{0.798}   &0.594
\\
DetectLLM-NPR
&0.515   &0.024
&0.844   &0.072
&0.676   &0.047
&0.417   &0.000
&0.748   &0.039
&0.744   &0.548
\\
DNA-GPT
&0.599   &0.038
&0.808   &\textbf{\underline{0.162}}
&0.772   &0.022
&0.500   &{0.500}
&0.807   &0.029
&0.595   &0.107
\\
Intrinsic Dimension
&0.545   &0.068
&0.867   &0.018
&0.733   &0.006
&0.800   &0.200
&0.603   &0.018
&0.461   &0.415
\\
FourierGPT
&0.485	&0.019
&0.360	&0.000
&0.246	&0.006
&0.778	&\underline{0.667}
&0.534	&0.014
&0.623	&0.094

\\
Binocular
&0.278	&0.005
&0.284	&0.000
&0.708	&0.009
&0.389	&0.000
&0.762	&0.022
&0.491	&0.002

\\
\midrule
UCE-Entropy
&\underline{0.766}   &0.070
&0.797   &0.037
&0.803   &\underline{0.349}
&\textbf{\underline{0.972}}   &\textbf{\underline{0.833}}
&\underline{0.924}   &\textbf{\underline{0.467}}
&0.686   &0.384
\\
\quad\ -Max Prob
&0.728   &0.084
&0.756   &0.019
&\underline{0.818}   &0.343
&\textbf{\underline{0.972}}   &\textbf{\underline{0.833}}
&0.879   &0.341
&0.682   &0.382
\\

\quad\ -Variance
&\textbf{\underline{0.826}}	&\textbf{\underline{0.154}}
&\textbf{\underline{0.883}}	&0.118
&\textbf{\underline{0.836}}	&\textbf{\underline{0.352}}
&\underline{0.806}	&{0.500}
&\textbf{\underline{0.935}}	&\underline{0.351}
&0.661	&\textbf{\underline{0.616}}

\\

\bottomrule
\end{tabular}
}

% part 2
\vspace{1ex}

\resizebox{\linewidth}{!}{
\begin{tabular}{@{}lcccccccccccc|cc@{}}
\toprule
 \multicolumn{1}{l}{\multirow{2}{*}{\makecell{\\ \textbf{Dataset}}}}
 &  \multicolumn{2}{c}{{Pedestrian}}&  \multicolumn{2}{c}{{Rideshare}}&  \multicolumn{2}{c}{{Taxi}}&  \multicolumn{2}{c}{{Temperature}}& \multicolumn{2}{c}{{Uber TLC (D)}}& \multicolumn{2}{c}{{Uber TLC (H)}}&\multicolumn{2}{c}{Top-2 Count}
 \\
  \cmidrule(lr){2-3} \cmidrule(lr){4-5}\cmidrule(lr){6-7}\cmidrule(lr){8-9}\cmidrule(lr){10-11}\cmidrule(lr){12-13}\cmidrule(lr){14-15}

   & \makecell{AUROC} & \makecell{TPR}  & \makecell{AUROC} & \makecell{TPR}
   & \makecell{AUROC} & \makecell{TPR}  & \makecell{AUROC} & \makecell{TPR}
   & \makecell{AUROC} & \makecell{TPR}& \makecell{AUROC} & \makecell{TPR}
   & \makecell{AUROC} & \makecell{TPR}\\
\midrule
$\log p(x)$
&0.985	&0.848
&0.648	&0.000
&0.854	&0.039
&0.864	&0.001
&0.872	&0.000
&0.778	&0.031
&0      &2
\\
Rank
&0.896	&0.561
&0.636	&0.000
&0.897	&0.147
&0.868	&0.002
&0.851	&0.000
&0.821	&0.015
&1      &0
\\
LogRank
&0.981	&0.894
&0.652	&0.000
&0.921	&0.186
&0.877	&0.011
&0.879	&0.000
&\underline{0.871}	&0.034
&1      &0
\\
DetectGPT
&0.174	&0.015
&0.617	&0.000
&0.653	&0.008
&0.858	&0.000
&0.311	&0.000
&0.666	&0.057
&0      &0
\\
Fast-DetectGPT
&0.954	&0.591
&0.636	&0.000
&0.812	&0.010
&0.879	&0.000
&0.877	&0.000
&0.774	&0.050
&0      &0
\\
DetectLLM-LLR
&\textbf{\underline{0.992}}	&\textbf{\underline{0.970}}
&0.659	&\textbf{\underline{0.090}}
&0.944	&0.425
&0.867	&0.405
&0.892	&0.271
&\textbf{\underline{0.883}}	&0.431
&4      &3
\\
DetectLLM-NPR
&0.984	&0.788
&0.619	&0.007
&0.722	&0.003
&0.836	&0.003
&0.846	&0.000
&0.772	&0.019
&0      &0
\\
DNA-GPT
&0.747	&0.091
&0.417	&0.000
&0.444	&0.000
&0.528	&0.000
&0.592	&0.000
&0.460	&0.027
&0      &1
\\
Intrinsic Dimension
&0.932	&0.343
&0.521	&0.015
&0.683	&0.013
&0.572	&0.005
&0.724	&0.252
&0.561	&0.074
&0      &0
\\
FourierGPT
&0.366	&0.000
&0.590	&0.012
&0.434	&0.006
&0.626	&0.011
&0.637	&0.008
&0.667	&0.038
&0      &1
\\
Binocular
&0.439	&0.000
&0.569	&0.004
&0.446	&0.002
&0.842	&0.144
&0.662	&0.004
&0.626	&0.015
&0      &0
\\
\midrule
UCE-Entropy
&\underline{0.986}	&0.909
&\underline{0.700}	&0.039
&\textbf{\underline{0.963}}	&\underline{0.470}
&\textbf{\underline{0.960}}	&\textbf{\underline{0.817}}
&\underline{0.926}	&\underline{0.500}
&0.773	&\underline{0.489}
&8      &6
\\
\quad\ -Max Prob
&0.985	&0.909
&0.649	&\underline{0.088}
&\underline{0.960}	&\textbf{\underline{0.483}}
&\underline{0.935}	&\underline{0.711}
&\textbf{\underline{0.930}}	&\textbf{\underline{0.576}}
&0.790	&\textbf{\underline{0.519}}
&5      &6
\\

\quad\ -Variance
&0.980	&\underline{0.924}
&\textbf{\underline{0.733}}	&0.026
&0.936	&0.377
&0.924	&0.418
&0.898	&0.015
&0.770	&0.046
&6      &5
\\

\bottomrule
\end{tabular}
}

\caption{
Full result of AUROC and TPR (at 1\% FPR) for detecting samples on In-Distribution Datasets. The \textbf{\underline{best}} and \underline{second-best} results are highlighted.
}
\label{tab:In-domain-64-full-1}
\end{table*}

\begin{table*}[t!]
% \small
\centering

\tiny                                % 最小字体
\setlength\tabcolsep{2pt}            % 最小左右间距
\renewcommand{\arraystretch}{0.65}   % 紧凑行高
\captionsetup{skip=3pt}              % 减少 caption 与表格间距

\resizebox{\linewidth}{!}{
\begin{tabular}{@{}lcccccccccccccc@{}}
\toprule
\multicolumn{1}{l}{\multirow{2}{*}{\makecell{\\ \textbf{Dataset}}}}
 &  \multicolumn{2}{c}{{Aus El Demand}}&  \multicolumn{2}{c}{{CIF 2016}}&  \multicolumn{2}{c}{{Covid Death}}&  \multicolumn{2}{c}{{ERCOT}}& \multicolumn{2}{c}{{ETTh}}& \multicolumn{2}{c}{{ETTm}}& \multicolumn{2}{c}{{Exchange Rate}}

 \\
  \cmidrule(lr){2-3} \cmidrule(lr){4-5}\cmidrule(lr){6-7}\cmidrule(lr){8-9}\cmidrule(lr){10-11}\cmidrule(lr){12-13}\cmidrule(lr){14-15}

   & \makecell{AUROC} & \makecell{TPR}  & \makecell{AUROC} & \makecell{TPR}
   & \makecell{AUROC} & \makecell{TPR}  & \makecell{AUROC} & \makecell{TPR}
   & \makecell{AUROC} & \makecell{TPR}  & \makecell{AUROC} & \makecell{TPR}
   & \makecell{AUROC} & \makecell{TPR}\\
\midrule
$\log p(x)$
&0.480	&\underline{0.400}
&0.872	&0.361
&{0.576}	&{\underline{0.098}}
&0.734	&0.375
&0.653	&0.286
&0.209	&0.071
&0.859	&0.000

\\
Rank
&\textbf{\underline{0.880}}	&\underline{0.400}
&0.861	&0.569
&{0.576}	&0.026
&0.742	&0.375
&0.497	&0.214
&0.273	&0.071
&\underline{0.883}	&0.250

\\
LogRank
&0.600	&0.200
&0.889	&0.639
&0.574	&0.030
&\underline{0.805}	&\textbf{\underline{0.625}}
&0.658	&0.286
&0.260	&0.143
&0.875	&0.250

\\
DetectGPT
&0.560	&\underline{0.400}
&0.697	&0.017
&0.573	&{0.045}
&0.672	&0.250
&0.505	&0.000
&0.230	&0.000
&0.859	&0.000

\\
Fast-DetectGPT
&0.520	&\underline{0.400}
&\underline{0.908}	&\textbf{\underline{0.700}}
&0.561	&0.026
&0.641	&0.250
&0.684	&0.357
&0.209	&0.000
&0.828	&0.000

\\
DetectLLM-LLR
&0.680	&\underline{0.400}
&0.905	&0.600
&0.539	&0.015
&\textbf{\underline{0.828}}	&0.375
&\underline{0.923}	&0.357
&0.413	&0.143
&0.859	&\textbf{\underline{0.750}}

\\
DetectLLM-NPR
&0.520	&\underline{0.400}
&\textbf{\underline{0.915}}	&0.500
&{{0.586}}	&0.004
&0.781	&\underline{0.500}
&\textbf{\underline{0.934}}	&\underline{0.571}
&0.173	&0.000
&0.844	&0.125

\\
DNA-GPT
&0.280	&0.000
&0.481	&0.032
&0.374	&0.004
&0.516	&0.250
&0.811	&\textbf{\underline{0.643}}
&\textbf{\underline{0.684}}	&\textbf{\underline{0.358}}
&0.250	&0.000

\\
Intrinsic Dimension
&0.600	&\underline{0.400}
&0.744	&0.250
&0.474	&0.022
&0.440	&0.000
&0.630	&0.300
&0.540	&0.100
&0.400	&0.000
\\
FourierGPT
&0.720	&\underline{0.400}
&0.644	&0.014
&\textbf{\underline{0.653}}	&0.004
&0.719	&0.125
&0.414	&0.000
&0.541	&0.071
&0.797	&0.250
\\
Binocular
&0.200	&0.000
&0.641	&0.028
&\underline{0.618}	&\textbf{\underline{0.139}}
&0.781	&\textbf{\underline{0.625}}
&0.480	&0.214
&0.541	&0.071
&0.266	&0.125

\\
\midrule
UCE-Entropy
&0.680	&\underline{0.400}
&\textbf{\underline{0.915}}	&\underline{0.625}
&0.540	&0.023
&0.586	&0.125
&0.770	&\underline{0.571}
&\underline{0.561}	&\underline{0.214}
&\textbf{\underline{0.891}}	&\underline{0.375}

\\
\quad\ -Max Prob
&\underline{0.760}	&\textbf{\underline{0.600}}
&0.876	&0.611
&0.543	&0.023
&0.555	&0.125
&0.694	&0.500
&0.546	&\underline{0.214}
&0.859	&\underline{0.375}

\\

\quad\ -Variance
&0.640	&\underline{0.400}
&0.902	&0.431
&0.451	&0.038
&0.555	&0.125
&0.760	&\textbf{\underline{0.643}}
&0.536	&\underline{0.214}
&0.016	&0.000
\\

\bottomrule
\end{tabular}
}

% part 2
\vspace{1ex}
\resizebox{\linewidth}{!}{
\begin{tabular}{@{}lcccccccccccccc@{}}
\toprule
\multicolumn{1}{l}{\multirow{2}{*}{\makecell{\\ \textbf{Dataset}}}}
 &  \multicolumn{2}{c}{{Fred MD}}&  \multicolumn{2}{c}{{Hospital}}&  \multicolumn{2}{c}{{M1 (M)}}&  \multicolumn{2}{c}{{M1(Q)}}& \multicolumn{2}{c}{{M3 (M)}}& \multicolumn{2}{c}{{M3 (Q)}}& \multicolumn{2}{c}{{M4(Q)}}

 \\
  \cmidrule(lr){2-3} \cmidrule(lr){4-5}\cmidrule(lr){6-7}\cmidrule(lr){8-9}\cmidrule(lr){10-11}\cmidrule(lr){12-13}\cmidrule(lr){14-15}

   & \makecell{AUROC} & \makecell{TPR}  & \makecell{AUROC} & \makecell{TPR}
   & \makecell{AUROC} & \makecell{TPR}  & \makecell{AUROC} & \makecell{TPR}
   & \makecell{AUROC} & \makecell{TPR}  & \makecell{AUROC} & \makecell{TPR}
   & \makecell{AUROC} & \makecell{TPR}\\
\midrule
$\log p(x)$
&\textbf{\underline{0.709}}	&0.075
&\underline{0.915}	&0.527
&0.783	&0.000
&0.372	&0.000
&\textbf{\underline{0.758}}	&0.118
&0.278	&0.000
&0.235	&0.000

\\
Rank
&\underline{0.706}	&0.121
&0.882	&0.437
&0.735	&0.000
&0.301	&0.000
&0.665	&0.054
&0.230	&0.000
&\textbf{\underline{0.993}}	&\textbf{\underline{0.790}}

\\
LogRank
&0.697	&0.121
&\textbf{\underline{0.917}}	&0.562
&0.788	&0.000
&0.317	&0.000
&0.717	&0.148
&0.242	&0.000
&0.238	&0.000

\\
DetectGPT
&0.687	&0.028
&0.872	&0.029
&0.726	&0.006
&{0.645}	&0.000
&\underline{0.736}	&0.092
&0.398	&0.000
&0.000	&0.000

\\
Fast-DetectGPT
&\textbf{\underline{0.709}}	&0.084
&0.909	&0.544
&\textbf{\underline{0.829}}	&0.008
&0.610	&0.000
&0.720	&0.197
&0.329	&0.000
&0.000	&0.000

\\
DetectLLM-LLR
&0.682	&\textbf{\underline{0.252}}
&0.906	&0.437
&0.779	&{0.029}
&0.399	&0.000
&0.624	&0.160
&0.298	&0.000
&0.421	&0.000

\\
DetectLLM-NPR
&0.694	&0.075
&0.888	&0.182
&0.806	&0.000
&0.558	&0.000
&0.707	&0.117
&0.307	&0.000
&0.230	&0.000

\\
DNA-GPT
&0.309	&0.009
&0.265	&0.000
&0.557	&0.000
&0.398	&0.000
&0.469	&0.009
&0.408	&0.000
&0.446	&0.000

\\
Intrinsic Dimension
&0.607	&0.127
&0.436	&0.026
&\underline{0.807}	&0.071
&\textbf{\underline{0.898}}	&\textbf{\underline{0.248}}
&0.729	&0.105
&\textbf{\underline{0.941}}	&\textbf{\underline{0.453}}
&{0.670}	&{0.227}
\\
FourierGPT
&0.520	&0.000
&0.584	&0.017
&0.535	&0.006
&0.358	&0.005
&0.600	&0.071
&0.323	&0.000
&\underline{0.981}	&\underline{0.623}
\\
Binocular
&0.443	&0.028
&0.386	&0.004
&0.594	&\textbf{\underline{0.044}}
&\underline{0.728}	&0.005
&0.593	&0.130
&\underline{0.747}	&\underline{0.012}
&0.877	&0.000

\\
\midrule
UCE-Entropy
&0.582	&\underline{0.234}
&0.903	&\underline{0.660}
&0.785	&0.008
&0.585	&0.005
&0.616	&\underline{0.233}
&0.449	&0.000
&0.345	&0.000

\\
\quad\ -Max Prob
&0.567	&0.215
&0.898	&\textbf{\underline{0.681}}
&0.768	&{\underline{0.032}}
&0.624	&\underline{0.025}
&0.595	&\textbf{\underline{0.256}}
&{0.491}	&{0.001}
&0.356	&0.000

\\

\quad\ -Variance
&0.449	&0.121
&0.894	&0.627
&0.744	&0.000
&0.325	&0.000
&0.614	&0.124
&0.167	&0.000
&0.382	&0.000

\\

\bottomrule
\end{tabular}
}

% part 3
\vspace{1ex}

\resizebox{\linewidth}{!}{
\begin{tabular}{@{}lcccccccccccc|cc@{}}
\toprule
 \multicolumn{1}{l}{\multirow{2}{*}{\makecell{\\ \textbf{Dataset}}}}
 &  \multicolumn{2}{c}{{M5}}&  \multicolumn{2}{c}{{NN5 (W)}}&  \multicolumn{2}{c}{{Tourism (M)}}&  \multicolumn{2}{c}{{Tourism (Q)}}& \multicolumn{2}{c}{{Traffic}}& \multicolumn{2}{c}{{Weather}}&\multicolumn{2}{c}{Top-2 Count}
 \\
  \cmidrule(lr){2-3} \cmidrule(lr){4-5}\cmidrule(lr){6-7}\cmidrule(lr){8-9}\cmidrule(lr){10-11}\cmidrule(lr){12-13}\cmidrule(lr){14-15}

   & \makecell{AUROC} & \makecell{TPR}  & \makecell{AUROC} & \makecell{TPR}
   & \makecell{AUROC} & \makecell{TPR}  & \makecell{AUROC} & \makecell{TPR}
   & \makecell{AUROC} & \makecell{TPR}& \makecell{AUROC} & \makecell{TPR}
   & \makecell{AUROC} & \makecell{TPR}\\
\midrule
$\log p(x)$
&0.695	&0.009
&\textbf{\underline{1.000}}	&\textbf{\underline{1.000}}
&0.902	&0.005
&0.875	&0.002
&0.567	&0.074
&0.641	&0.002
&4       &3
\\
Rank
&0.748	&0.005
&0.901	&0.622
&0.856	&0.161
&0.854	&0.000
&0.624	&0.079
&0.651	&0.001
&4      &2
\\
LogRank
&0.758	&0.006
&\textbf{\underline{1.000}}	&\textbf{\underline{1.000}}
&0.909	&0.232
&0.888	&0.000
&0.592	&0.082
&0.661	&0.001
&3      &2
\\
DetectGPT
&0.649	&0.004
&\textbf{\underline{1.000}}	&\underline{0.991}
&0.912	&0.003
&0.895	&0.007
&0.636	&0.048
&0.646	&0.003

&2      &2
\\
Fast-DetectGPT
&0.659	&0.005
&\underline{0.999}	&\underline{0.991}
&0.913	&0.011
&0.903	&0.012
&0.629	&0.013
&0.647	&0.005

&4     &3
\\
DetectLLM-LLR
&\textbf{\underline{0.816}}	&\textbf{\underline{0.145}}
&0.979	&0.730
&0.909	&0.123
&0.807	&0.027
&0.526	&0.052
&0.811	&0.066

&2      &4
\\
DetectLLM-NPR
&0.654	&0.005
&0.998	&0.946
&0.880	&0.005
&0.821	&0.000
&0.496	&0.049
&0.644	&0.006

&2      &2
\\
DNA-GPT
&0.510	&0.003
&0.694	&0.081
&0.725	&0.000
&0.657	&0.000
&0.818	&0.065
&0.217	&0.002

&1      &2
\\
Intrinsic Dimension
&0.572	&0.001
&0.662	&0.050
&0.635	&0.070
&0.643	&0.028
&0.695	&0.014
&\textbf{\underline{0.940}}	&\textbf{\underline{0.595}}

&4      &4
\\
FourierGPT
&0.621	&0.015
&0.452	&0.000
&0.583	&0.008
&0.525	&0.005
&0.409	&0.001
&0.836	&0.025
&2      &2
\\
Binocular
&0.682	&0.009
&0.717	&0.063
&0.614	&0.003
&0.790	&\underline{0.084}
&0.670	&0.082
&0.874	&0.004
&3      &5

\\
\midrule
UCE-Entropy
&0.726	&\underline{0.119}
&\textbf{\underline{1.000}}	&\textbf{\underline{1.000}}
&\textbf{\underline{0.973}}	&\underline{0.620}
&\underline{0.921}	&{0.033}
&\textbf{\underline{0.895}}	&\underline{0.375}
&\underline{0.899}	&0.108

&8      &12
\\
\quad\ -Max Prob
&0.740	&0.118
&\textbf{\underline{1.000}}	&\textbf{\underline{1.000}}
&\underline{0.970}	&0.604
&\textbf{\underline{0.936}}	&\textbf{\underline{0.199}}
&\textbf{\underline{0.895}}	&\textbf{\underline{0.421}}
&\underline{0.899}	&\underline{0.119}

&6      &11
\\

\quad\ -Variance
&\underline{0.807}	&0.018
&\textbf{\underline{1.000}}	&\textbf{\underline{1.000}}
&\textbf{\underline{0.973}}	&\textbf{\underline{0.658}}
&0.882	&0.000
&\underline{0.859}	&0.147
&0.721	&0.001

&4      &5
\\

\bottomrule
\end{tabular}
}

\caption{
Full result of AUROC and TPR (at 1\% FPR) for detecting samples on Zero-Shot Datasets. The \textbf{\underline{best}} and \underline{second-best} results are highlighted.
}
\label{tab:0-shot-64-full-1}
\end{table*}

\section{Comparison between Textual and Temporal Data in Model Generation}
Time Series Large Models share foundational similarities with LLMs as they both rely on Transformer based recursive prediction, yet their tokens differ fundamentally.
Text tokens represent discrete semantic words or subwords whose meanings is derived through their semantic mapping to real-world entities or abstract concepts, and is reflected by the contextual relationships within texts.
In contrast, continuous real-valued observations are discretized into a bounded ordered set, which preserves both the numerical ordering (e.g., token ``1000'' is 300 units greater than ``700'') and metric similarity (e.g., token ``490'' is closer to ``500'' than ``560'').

Time series tokens inherit the order and metric structure of real values, thereby carrying non-arithmetic semantic information.
TSLMs implicitly learn these metric-based semantic relationships between tokens from the continuous nature of their training data, as high continuity of real-valued sequences preserves this underlying semantics, enabling the models to “internalize” and exploit this information in their forecasts.
This means that each temporal token has many ``semantic neighbors'' with high metrical similarity to define a neighborhood.
For instance, temperature $25.1^\circ \mathrm{C}$, $25.2^\circ \mathrm{C}$ and $25.3^\circ \mathrm{C}$ all lie within a small numeric range.
Their high similarity implies low information difference within this neighborhood.
In other words, although a massive vocabulary may span thousands of discrete values, only a small subset of these local neighborhoods defines clear semantic boundaries, which illustrates the sparse information density.

\section{Reevaluating Generation Detection for Time Series via Uncertainty}
\label{apdx:reevaluate}
In this section, we reevaluate the baseline methods, which share two key traits
with UCE: performance improves on longer series and on in distribution data.
This suggests a lens of uncertainty from which to assess these baselines.

Point-wise probabilities fail to capture overall uncertainty in time series
forecasts, and relevant methods (e.g. $\log p$ and DNA-GPT) tend to
underperform.
The probability of individual output token is less effective in reflecting the
model's entire internal probability distribution.
The arbitrary nature of time series values causes a smoothed internal probability distribution at each time point, and the token-wise probability
differences are low.
In terms of uncertainty, a low $p(x_t)$ may reside in the tail of a peaked
distribution or the body of a flat one, offering minimal discriminative power.
However, a higher probability is more informative by approximating the maximum probability.

Perturbation-based methods (DetectGPT, Fast DetectGPT, NPR) detect generation by
measuring how output scores change when a few tokens are swapped with semantic
neighbors.
In time series, replacing a small fraction of values adds noise of variance
$\sigma_{N}^{2}$ on top of the inherent series variance $\sigma_{I}^{2}$.
The resulting score change—whether a likelihood ratio or a log rank
difference—depends critically on $\sigma_{N}^{2}$ against $\sigma_{I}^{2}$.
If $\sigma_N^2\ll\sigma_{I}^2$, the perturbation is too weak to detect; if
$\sigma_N^2\gg \sigma_I^2$, the noise overwhelms the original pattern.
Moreover, varing $\sigma_I^2$ across datasets makes it difficult to choose a
single perturbation strength that works universally.

Rank-based metrics invert token probability: a lower numeric rank indicates
lower uncertainty.
In principle, rank captures how concentrated the predictive distribution is
around its top candidates.
However, in time series forecasting most token probabilities are uniformly low, so ranks cluster at high values with little variation, making rank alone a weak
signal for distinguishing generated from real data.

\end{document}